%% file: muon_last_iterate.tex
\documentclass[11pt,letterpaper]{article}
\usepackage[margin=1in,headheight=15pt]{geometry}
\usepackage[T1]{fontenc}
\usepackage[utf8]{inputenc}
\usepackage{lmodern}
\usepackage{amsmath,amssymb,amsthm,mathtools}
\usepackage{algorithm,algpseudocode,needspace}
\usepackage{microtype,booktabs,tabularx,array,multirow,enumitem,cite,flafter,xcolor,hyperref}
\hypersetup{colorlinks=true,linkcolor=blue!45!black,citecolor=blue!45!black,
 urlcolor=blue!45!black,
 pdftitle={Last-Iterate Stationarity of Muon with Finitely Many Newton-Schulz Iterations},
 pdfauthor={Ping Li and Vincent Y. F. Tan},bookmarksnumbered=true}
\numberwithin{equation}{section}
\newtheorem{theorem}{Theorem}[section]
\newtheorem{lemma}[theorem]{Lemma}
\newtheorem{proposition}[theorem]{Proposition}
\newtheorem{corollary}[theorem]{Corollary}
\theoremstyle{definition}
\newtheorem{definition}[theorem]{Definition}
\theoremstyle{definition}
\newtheorem{assumption}[theorem]{Assumption}
\theoremstyle{remark}

\newcommand{\EE}{\mathbb E}
\newcommand{\RR}{\mathbb R}
\newcommand{\calF}{\mathcal F}
\newcommand{\ip}[2]{\langle #1,#2\rangle}
\newcommand{\norm}[1]{\lVert #1\rVert}
\newcommand{\frob}[1]{\lVert #1\rVert_{\mathrm F}}
\newcommand{\eps}{\varepsilon}
\newcommand{\Finf}{F_{\inf}}
\newcommand{\ellJ}{\ell_J}
\newcommand{\uJ}{u_J}
\newcommand{\as}{\mathrm{a.s.}}
\DeclareMathOperator{\diag}{diag}
\DeclareMathOperator{\sgn}{sign}

\usepackage{pgfplots}
\pgfplotsset{compat=1.18}

\title{Last-Iterate Stationarity of Muon with Finitely Many Newton-Schulz Iterations}
\author{Ping Li and Vincent Y.~F.~Tan\thanks{The authors are with the Department of Mathematics, National University of Singapore (NUS). V.~Y.~F.~Tan is also with the Department of Electrical and Computer Engineering, NUS. }}
\date{}
\begin{document}
\maketitle

\begin{abstract}
Muon has demonstrated strong empirical performance in large-scale language-model training, motivating a deeper theoretical understanding of its practical implementation. Unlike the idealized polar-factor update, practical Muon employs a fixed number of quintic Newton-Schulz iterations. Existing analyses primarily establish average- or best-iterate stationarity guarantees, while last-iterate convergence remains less understood. In this work, we investigate last-iterate gradient convergence of practical finite-step Muon on smooth, lower-bounded nonconvex objectives.
In the deterministic setting, we show that Muon without input stabilization can exhibit cyclic behavior even on a smooth convex quadratic. In contrast, with positive stabilization, we establish last-iterate gradient convergence and square-summability of gradients for any fixed momentum coefficient below one and a sufficiently small constant learning rate.
In the stochastic setting, we first demonstrate that persistent bounded, unbiased noise can prevent convergence under fixed learning rates and momentum, even with positive stabilization. We then establish almost-sure and mean-square last-iterate gradient convergence under bounded-variance stochastic gradients by combining diminishing learning rates with increasing momentum.
Finally, we construct a smooth counterexample demonstrating that neither convergence theorem implies a pointwise polynomial convergence rate. Nevertheless, we establish density-one convergence rates, guaranteeing polynomial gradient decay for all but an asymptotically vanishing fraction of iterations.
\end{abstract}

\noindent\textbf{Keywords:} nonconvex optimization; Muon; Newton--Schulz iteration; momentum; full-sequence stationarity; density-one rates.
\clearpage
\input{sections/01-introduction}

\input{sections/02-related-work}
\input{sections/03-preliminaries}
\input{sections/04-fixed-step-stationarity}
\input{sections/05-stochastic-stationarity}
\input{sections/06-density-one-rates}

\clearpage
\appendix
\input{appendices/a-preliminaries}          
\input{appendices/b-fixed-step-proofs}      
\input{appendices/c-stochastic-proofs}      
\input{appendices/d-density-one-proofs}     

\clearpage
\phantomsection
\addcontentsline{toc}{section}{References}
\bibliographystyle{alpha}
\bibliography{references}
\end{document}

%% file: sections/01-introduction.tex
\section{Introduction}\label{sec:intro}

Muon has emerged as a successful alternative to AdamW for large-scale language-model training. Scaling experiments on Moonlight report roughly twice the training-compute efficiency of AdamW, and the Muon-derived MuonClip optimizer was used to pretrain the trillion-parameter Kimi K2 model \cite{liuScalable2025,kimiK22025}. These empirical successes have motivated growing interest in understanding Muon's optimization dynamics. Muon exploits the matrix structure of neural-network parameters by applying a spectral transformation to momentum-based updates. In its idealized formulation, Muon uses the polar factor of the momentum matrix as a steepest-descent direction under a spectral-norm constraint \cite{bernstein2024,shen2025}. Practical implementations instead use a small, fixed number of quintic Newton--Schulz (NS) iterations, typically with a positive input-normalization stabilizer \cite{jordanBlog,jordanCode,pytorchMuon}. Understanding the convergence behavior of this finite-step update is therefore an important theoretical question.

Recent work has established convergence guarantees for several formulations of Practical Muon. In particular, existing analyses of Muon and related variants have established average- or best-iterate stationarity guarantees for smooth nonconvex objectives \cite{liHong2025,shen2025,kim2026,sato2025,nagashimaIiduka2026,shulginInexact2026,choudhury2026,huangLiMuon2026,peng2026,wang2026}. 
 Such guarantees ensure that a nearly stationary iterate can be found, but they do not rule out large gradients at individual iterations, even late in training. This motivates studying \emph{last-iterate stationarity}, which requires the gradient at the current iterate to vanish as training progresses:
\begin{equation*}
	\|\nabla F(W_t)\| \longrightarrow 0
	\quad \text{as } t\to\infty,
\end{equation*}
where $F$ denotes the original objective function and $W_t$ the matrix-valued parameters at iteration $t$. Related last-iterate gradient convergence guarantees have been established for Soft Muon under additional regularity and parameter restrictions \cite{oliveira2026}. Last-iterate objective-gap bounds have also been obtained for variance-reduced Muon variants under Polyak--\L{}ojasiewicz conditions \cite{chang2026} and adaptively scaled Muon variants under star-convexity \cite{demidovich2026}. However, whether the original Muon algorithm admits last-iterate stationarity guarantees in practical remains an open question.

A second important distinction concerns how gradients are obtained. In deterministic optimization, each update uses the exact gradient of the objective, whereas stochastic optimization relies on mini-batch gradient estimates, whose noise may persist when the batch size remains fixed \cite{cutkosky2020,liu2024}. Even with exact gradients, normalized updates with a constant learning rate can exhibit persistent oscillations near stationary points, as illustrated by quadratic analyses of Muon \cite{gonon2026}. With stochastic gradients, persistent noise introduces an additional source of instability, and nonconvergence has been demonstrated for finite-NS Muon under certain noise distributions \cite{do2026}.
These observations highlight two distinct challenges for last-iterate stationarity of finite-step Muon: instability arising from normalized updates and persistent fluctuations introduced by stochastic gradients. The deterministic setting allows us to isolate the first challenge, while the stochastic setting requires controlling both effects. We investigate how these challenges can be addressed to establish last-iterate stationarity, and further examine what convergence rates can be guaranteed for individual iterates.

\paragraph{Contributions.} Our main contributions are summarized below.
\begin{itemize}
    \item \textbf{Last-iterate convergence with deterministic gradients.}
    We first show that without input stabilization ($\eps=0$), finite-NS Muon can cycle even on a smooth convex quadratic (Proposition~\ref{prop:cycle}).
    With positive stabilization, as used in practice~\cite{jordanCode,pytorchMuon}, we establish last-iterate gradient convergence for any fixed momentum coefficient $\beta<1$ and a sufficiently small constant learning rate (Theorem~\ref{thm:fixed}).
    The result also guarantees square-summable gradients and extends to square-summable perturbations (Appendix~\ref{app:fixed-perturbations}).

    \item \textbf{Last-iterate convergence with stochastic gradients.}
    We show that even with positive stabilization, fixed learning rates and momentum can fail to converge under persistent bounded, unbiased stochastic noise (Proposition~\ref{prop:noise}).
    To address this, we combine decreasing learning rates with increasing momentum and establish almost-sure and mean-square last-iterate gradient convergence under conditionally unbiased, bounded-variance stochastic gradients (Theorem~\ref{thm:stoch}), building on stochastic momentum and tracking techniques.

    \item \textbf{Density-one convergence rates.}
    We construct a smooth objective under the settings of both convergence theorems for which no pointwise polynomial gradient rate holds (Proposition~\ref{prop:stoch-no-rate}).
    We then establish \emph{density-one rates}, guaranteeing polynomial gradient decay outside a set of iterations whose long-run proportion vanishes (Theorem~\ref{thm:density-rates}).
     These rates describe almost all late iterations along an individual run, providing a useful quantitative view of training progress. 
\end{itemize}

To the best of our knowledge, this is the first work to establish last-iterate gradient convergence for practical finite-step Muon under general smooth, lower-bounded nonconvex objectives, without additional structural assumptions such as the Polyak--\L{}ojasiewicz condition or star-convexity. Table~\ref{tab:closest-comparison} compares our deterministic and stochastic settings with existing results in terms of the Muon variants considered, assumptions on the objective function, and convergence guarantees.
 The first block summarizes average- or best-iterate gradient bounds, while the second focuses on last-iterate guarantees (including objective-value results) and a related training-loss hitting-time bound. The final two rows present our deterministic and stochastic results. This grouping separates guarantees for averages or selected iterates from those that characterize later iterations.
 Section~\ref{sec:related-work} gives a more detailed comparison with the related literature.
\begin{table}[!htb]
\centering
\caption{Convergence guarantees for Muon and related methods.}
\label{tab:closest-comparison}
\begingroup
\fontsize{11pt}{13.2pt}\selectfont
\setlength{\tabcolsep}{2.3pt}
\renewcommand{\arraystretch}{1.10}
\makebox[\textwidth][c]{%
\begin{minipage}{\linewidth}
\begin{tabularx}{\linewidth}{@{}lll>{\raggedright\arraybackslash}Xl@{}}
\toprule
				\textbf{Work} & \textbf{Type} & \textbf{Method} & \textbf{Nonconvex} & \textbf{Convergence} \\
				\midrule
				Li--Hong~\cite{liHong2025} & Stoch. & Polar & $\checkmark$ & Avg. \\
				Kim--Oh~\cite{kim2026} & Stoch. & Finite-NS (Taylor) & $\checkmark$ & Avg. \\
				Sato et al.~\cite{sato2025} & Stoch. & Polar (Nest., WD) & $\checkmark$ & Avg. \\
				Nagashima--Iiduka~\cite{nagashimaIiduka2026} & Stoch. & Polar (Nest.) & $\checkmark$ & Best \\
				Shulgin et al.~\cite{shulginInexact2026} & Det./Stoch. & Inexact polar (LMO) & $\checkmark$ & Avg./Best \\
				Choudhury et al.~\cite{choudhury2026} & Stoch. & Inexact polar (Nest.) & $\checkmark$ & Best \\
				Huang et al.~\cite{huangLiMuon2026} & Stoch. & LiMuon (VR, NS) & $\checkmark$ & Avg. \\
				Peng et al.~\cite{peng2026} & Stoch. & Practical (Nest.) & $\checkmark$ & Avg. \\
				Wang et al.~\cite{wang2026} & Stoch. & Practical (WD) & $\checkmark$ & Avg. \\
				\midrule
				De Oliveira et al.~\cite{oliveira2026} & Det. & Soft Muon & PL & Last / linear obj. \\
				Chang et al.~\cite{chang2026} & Stoch. & Muon--MVR & $\checkmark$(avg.)/PL(last) & Avg. / last obj. \\
				Demidovich et al.~\cite{demidovich2026} & Det. & Scale-Calibrated & Star-convex& Last obj. gap \\
				Roy--Chakraborty~\cite{royChakraborty2026} & Det. & Practical & Squared loss  & Loss hitting time \\
				\midrule
				\textbf{This work} & \textbf{Det.} & \textbf{Practical} & $\checkmark$ & \textbf{Last} \\
				\textbf{This work} & \textbf{Stoch.} & \textbf{Practical} & $\checkmark$ & \textbf{Last} \\
				\bottomrule
			\end{tabularx}
			\vspace{2pt}
            
			     Note. \emph{Type: }Det. = deterministic gradients; Stoch. = stochastic gradients.
			\emph{Method:} NS = Newton--Schulz; WD = weight decay; VR = variance reduction; PL = Polyak--\L{}ojasiewicz; LMO = linear minimization oracle;
			a.s. = almost surely; M.S. = mean-square.
			Polar Muon uses the exact polar factor; Practical Muon uses tuned quintic finite-NS. Nest. = Nesterov-style momentum; parentheses identify variants.
			\emph{Nonconvex:} $\checkmark$ indicates no additional structural loss assumption beyond smoothness and lower boundedness.
			\emph{Convergence:} Avg., Best, and Last refer to gradient stationarity unless objective gap or loss is specified. 
		\end{minipage}%
}
\endgroup
\end{table}

The remainder of the paper is organized as follows.
Section~\ref{sec:related-work} reviews the related literature.
Section~\ref{sec:setup} introduces the practical finite-NS update and its spectral properties.
Section~\ref{sec:fixed} studies last-iterate convergence with deterministic gradients.
Section~\ref{sec:stoch} extends the analysis to persistent stochastic noise through time-varying learning rates and momentum.
Section~\ref{sec:rates} examines the limitations of pointwise polynomial rates and establishes density-one convergence rates.
Detailed proofs are provided in the appendices.

%% file: sections/02-related-work.tex
\section{Related Work}\label{sec:related-work}

\paragraph{Matrix geometry and linear minimization oracles.}
Muon is closely related to spectral-norm steepest descent and matrix-aware optimization~\cite{carlson2015,gupta2018,bernstein2024}. In particular, its idealized polar-factor update can be interpreted as a linear minimization oracle (LMO) over a spectral-norm ball. This perspective has been further developed in the context of non-Euclidean and layer-wise optimization~\cite{pethickLMO2025,riabinin2026}, including analyses under generalized smoothness conditions tailored to neural-network parameter blocks.

\paragraph{Exact, inexact, and finite-step orthogonalization.}
Analyses of exact-polar Muon and momentum variants have obtained average- or best-iterate stationarity bounds for smooth nonconvex objectives~\cite{liHong2025,shen2025,sato2025,nagashimaIiduka2026}. Other work considers approximate polar computations: Kim and Oh~\cite{kim2026} study Newton--Schulz approximation quality, Shulgin et al.~\cite{shulginInexact2026} formulate an inexact-LMO framework, and Choudhury et al.~\cite{choudhury2026} analyze inexact polar updates with Nesterov momentum. For practical finite-step implementations, Peng et al.~\cite{peng2026} investigate quintic Newton--Schulz updates under layer-wise generalized smoothness, while Wang et al.~\cite{wang2026} incorporate the tuned quintic map and weight decay. Li and Tsuchiya~\cite{liTsuchiya2026} instead exploit the smoothing effect of finite-depth Newton--Schulz iterations in nonsmooth optimization. These studies differ both in their treatment of finite orthogonalization and in the stationarity criteria they establish.

\paragraph{Last-iterate guarantees.}
De Oliveira et al.~\cite{oliveira2026} establish full-sequence stationarity results for bounded-preconditioner momentum methods, including a Soft Muon specialization under additional structural assumptions such as a global Polyak--\L{}ojasiewicz condition; the stated Soft Muon convergence corollary imposes $\beta<1/\sqrt{2}$. Other Muon-related last-iterate results concern objective values rather than gradient stationarity: Chang et al.~\cite{chang2026} derive objective-gap guarantees under a Polyak--\L{}ojasiewicz condition, and Demidovich et al.~\cite{demidovich2026} consider an adaptively scaled variant under star-convexity. Roy and Chakraborty~\cite{royChakraborty2026} study a training-loss hitting-time criterion in a structured squared-loss setting. These criteria are distinct from convergence of the gradient norm at every sufficiently late iterate.

\paragraph{Stochastic momentum and noise models.}
Last-iterate convergence has also been investigated beyond Muon. Liu and Yuan~\cite{liu2024} analyze almost-sure convergence for stochastic gradient methods with heavy-ball and Nesterov momentum, and Liang and Xu~\cite{liangXu2026} develop guarantees for stochastic adaptive methods. Related normalized-momentum analyses use gradient-tracking and descent arguments~\cite{cutkosky2020,oik2025,oik2026}. The stochastic oracle assumptions vary: Shulgin et al.~\cite{shulgin2026} analyze LMO-based methods under expected-smoothness/ABC conditions, while variance-reduced Muon variants seek improved finite-time stationarity complexity~\cite{qian2025,huangLiMuon2026}. The distinction between persistent bounded-variance noise and variance-reduced estimators is particularly relevant to asymptotic convergence.

\paragraph{Nonconvergence and stabilization.}
Normalized updates can remain nonvanishing near stationary points, potentially producing oscillations. Quadratic Muon dynamics have been examined by Gonon et al.~\cite{gonon2026}, and Do et al.~\cite{do2026} exhibit stochastic nonconvergence phenomena for generalized finite-NS Muon. Parshakova et al.~\cite{parshakova2026} demonstrate limitations of Muon on convex Lipschitz objectives and study error feedback as a remedy; related error-feedback methods have also been considered for compressed LMO optimization~\cite{gruntkowska2025}. The underlying obstructions depend on the objective class, normalization, noise model, and parameter schedule.

\paragraph{Pointwise and density-one rates.}
Classical smooth nonconvex analysis often bounds an average gradient norm or selects a nearly stationary iterate~\cite{ghadimi2013}, which does not imply a pointwise polynomial rate along an entire trajectory. Vernimmen and Glineur~\cite{vernimmen2026} study last-iterate gradient-norm limitations and density-one guarantees for horizon-independent methods in smooth convex optimization. Such results distinguish guarantees valid at every iteration from those valid outside a set of asymptotically vanishing density.

%% file: sections/03-preliminaries.tex
\section{Preliminaries}\label{sec:setup}

\subsection{Problem Setup and Muon Update}
We consider a single matrix parameter $W\in\RR^{m\times n}$ and a continuously differentiable loss $F:\RR^{m\times n}\to\RR$. Throughout, $\norm{X}:=\frob X$ and $\ip{X}{Y}:=\operatorname{tr}(X^\top Y)$ denote the Frobenius norm and inner product, and $\EE$ denotes expectation. Throughout the paper, we impose the following assumption on the objective function.

\begin{assumption}[Global smoothness and lower boundedness]\label{ass:smooth}
There exist $L>0$ and $\Finf\in\RR$ such that, for all $X,Y\in\RR^{m\times n}$,
\begin{equation*}
 F(X)\ge\Finf,\quad
 \norm{\nabla F(X)-\nabla F(Y)}\le L\norm{X-Y}.
\end{equation*}
\end{assumption}

Let $g_t=\nabla F(W_t)$ and let $G_t=g_t+\zeta_t$ be a possibly perturbed gradient. Given deterministic sequences $\eta_t>0$, $\beta_t\in[0,1)$, and $\theta_t\in[0,1]$, the finite-step Muon update is
\begin{align}
 M_t&=\beta_tM_{t-1}+(1-\beta_t)G_t,\label{eq:ema}\\
 N_t&=\theta_tM_t+(1-\theta_t)G_t,\label{eq:mix}\\
 D_t&=\Phi_\eps(N_t),\notag\\
 W_{t+1}&=W_t-\eta_tD_t.\label{eq:update}
\end{align}
Here $W_1$ and $M_0$ are deterministic and finite. The matrix $M_t$ is an exponential moving average, while $N_t$ is the input to the Newton--Schulz transformation. The choices $\theta_t=1$ and $\theta_t=\beta_t$ give, respectively, EMA input and the Nesterov-style mixture of the current gradient and momentum before the NS transformation used in the reference Muon implementation \cite{jordanCode}.

For fixed $J\ge1$ and $\eps\ge0$, let $\Phi_\eps(N)$ be the output $Z_J$ of the following Newton--Schulz iteration:
\begin{equation}\label{eq:ns}
 Z_0=\frac{N}{\frob{N}+\eps},\quad
 Z_{j+1}=aZ_j+bZ_jZ_j^\top Z_j+c(Z_jZ_j^\top)^2Z_j,
 \quad j=0,\dots,J-1,
\end{equation}
We use the practical quintic coefficients
\begin{equation}\label{eq:coeff}
 (a,b,c)=(3.4445,-4.7750,2.0315).
\end{equation}
We set $\Phi_0(0)=0$. The coefficients, the common choice $J=5$, and the input normalization with a small positive $\eps$ are motivated by practical Muon implementations \cite{jordanBlog,jordanCode,pytorchMuon,peng2026,wang2026}. 

\subsection{Properties of the Finite Newton--Schulz Map}
We next establish bounds on the magnitude of the finite Newton--Schulz output and its alignment with the input. These properties provide the basic estimates used in the convergence analysis.

The iteration in \eqref{eq:ns} acts on each singular value of the normalized input through the scalar polynomial $P$. Let $\varphi_J$ denote its $J$-fold composition, and let $q_J$ denote the resulting amplification factor:
\begin{equation*}
 P(s)=as+bs^{3}+cs^{5},\quad
 \varphi_J=P^{\circ J},\quad
 q_J(s)=\begin{cases}\varphi_J(s)/s,&s>0,\\a^J,&s=0,\end{cases}
\end{equation*}
Since every singular value of the normalized input lies in $[0,1]$, define
\begin{equation*}
 \ellJ=\min_{0\le s\le1}q_J(s),\quad
 \uJ=\max_{0\le s\le1}q_J(s).
\end{equation*}
These constants depend only on the polynomial coefficients and the iteration count $J$, and control the amplification of all normalized input singular values.

\begin{lemma}[Finite-NS bounds and alignment]\label{lem:map}
For each fixed finite $J\ge1$, $0<\ellJ\le\uJ<\infty$. For any matrix $N$, write $R=\frob N$. Then
\begin{align}
 \ellJ\frac{R}{R+\eps}&\le\frob{\Phi_\eps(N)}
 \le\uJ\frac{R}{R+\eps}\le\uJ,\label{eq:radial}\\
 \ip{N}{\Phi_\eps(N)}_{\mathrm F}&\ge
 \ellJ\frac{R^2}{R+\eps}.\label{eq:align}
\end{align}
If $\eps>0$, additionally
\begin{equation}\label{eq:sector}
 \ip{N}{\Phi_\eps(N)}_{\mathrm F}
 \ge\frac{\eps}{\uJ}\frob{\Phi_\eps(N)}^2.
\end{equation}
When $R=\eps=0$, fractions are interpreted as zero.
\end{lemma}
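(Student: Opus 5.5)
We plan to reduce everything to the singular value decomposition of $N$. Write $N=U\Sigma V^\top$ with singular values $\sigma_i\ge0$, so $R^2=\sum_i\sigma_i^2$. Then $Z_0=U(\Sigma/(R+\eps))V^\top$, whose normalized singular values $s_i=\sigma_i/(R+\eps)$ lie in $[0,1]$. Each NS step maps $Z=USV^\top$ to $U P(S)V^\top$, because $ZZ^\top Z=US^3V^\top$ and $(ZZ^\top)^2Z=US^5V^\top$. By induction, $\Phi_\eps(N)=U\varphi_J(S)V^\top=U\,\diag(s_i q_J(s_i))\,V^\top$. The case $R=0$ (and the convention $\Phi_0(0)=0$) is trivial, since $\Phi_\eps(0)=0$.

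The first step is $0<\ellJ\le\uJ<\infty$. The function $q_J$ is continuous on $[0,1]$: $\varphi_J$ is an odd polynomial with no constant term, so $\varphi_J(s)/s$ is a polynomial whose value at $0$ is $\varphi_J'(0)=a^J$. Hence the maximum and minimum are attained and finite. Positivity of $\ellJ$ is the one genuinely computational point. We need $\varphi_J(s)>0$ on $(0,1]$, and we get it by showing that $P$ maps $(0,1]$ into a set on which $P$ stays positive. Concretely, check numerically that $P>0$ on $(0,s^\ast)$ for the largest relevant root region, and that $P([0,1])\subset[0,\bar s]$ with $P>0$ on $(0,\bar s]$. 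With the tuned coefficients, $P$ on $(0,1]$ has range roughly within $(0,1.2]$, and $P$ stays positive on $(0,1.2]$. Since $b^2<4ac$ fails, we must inspect the roots of $a+bs^2+cs^4$: they are near $s^2\approx$ values above $1.2^2$, or complex. Then induction gives $\varphi_J((0,1])\subset(0,\bar s]$, so $q_J>0$ on $(0,1]$ and $q_J(0)=a^J>0$. This invariant-interval verification is the main obstacle. It is a finite check on explicit polynomials, and I would do it by bounding $P$'s critical points explicitly.

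Given the spectral form, the remaining bounds follow directly. We have $\frob{\Phi_\eps(N)}^2=\sum_i s_i^2q_J(s_i)^2$, which lies between $\ellJ^2\sum s_i^2$ and $\uJ^2\sum s_i^2$. Since $\sum_i s_i^2=R^2/(R+\eps)^2$, this gives \eqref{eq:radial}, and $R/(R+\eps)\le1$. For alignment, $\ip{N}{\Phi_\eps(N)}=\sum_i\sigma_i s_iq_J(s_i)=(R+\eps)\sum_i s_i^2q_J(s_i)$. This is at least $\ellJ(R+\eps)\cdot R^2/(R+\eps)^2$, which is \eqref{eq:align}.

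For \eqref{eq:sector}, with $\eps>0$, compare termwise. We have $\sigma_i s_iq_J(s_i)=(R+\eps)s_i^2q_J(s_i)$, while $s_i^2q_J(s_i)^2\le\uJ s_i^2q_J(s_i)$ because $0<q_J\le\uJ$. Hence $\frob{\Phi_\eps(N)}^2\le\uJ\ip{N}{\Phi_\eps(N)}/(R+\eps)\le(\uJ/\eps)\ip{N}{\Phi_\eps(N)}$, using $R+\eps\ge\eps$. Rearranging gives the claim.
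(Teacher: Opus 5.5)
Your singular-value reduction and the termwise derivations of \eqref{eq:radial}, \eqref{eq:align}, and \eqref{eq:sector} are the same as the paper's and are correct. The problem is the step you flag as the main obstacle, positivity of $\ellJ$. It rests on a false numerical claim and is never actually carried out. In fact $b^2=4.775^2\approx22.80<4ac\approx27.99$, so $b^2<4ac$ \emph{holds}. Completing the square then gives, for every real $s\ne0$,
\[
 \frac{P(s)}{s}=a+bs^2+cs^4=c\Bigl(s^2+\frac{b}{2c}\Bigr)^2+a-\frac{b^2}{4c}\ge a-\frac{b^2}{4c}\approx0.639>0 .
\]
Hence $P(s)>0$ for \emph{all} $s>0$, and induction gives $\varphi_J(s)>0$ for all $s>0$. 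No invariant interval is needed, and there are no roots or critical points to locate; this is exactly the paper's argument.

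Your invariant-interval route could be made to work, but as written it is not a proof. First, the induction needs $P([0,\bar s])\subseteq[0,\bar s]$, not merely $P([0,1])\subseteq[0,\bar s]$; otherwise you only control $J\le2$. Second, your suggested interval is slightly off. The maximum of $P$ on $[0,1]$ is about $1.2024$, attained near $s\approx0.5545$, so $[0,1.2]$ itself is not invariant. You would need something like $\bar s=1.21$ and a separate check that $P$ stays in $[0,\bar s]$ on $[1,\bar s]$. Replace that paragraph with the discriminant bound above and the proposal is complete.
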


The radial bounds \eqref{eq:radial} control the magnitude of the transformed matrix, while \eqref{eq:align} guarantees its alignment with the input. When $\eps>0$, the additional sector bound \eqref{eq:sector} controls the squared update norm through this alignment and is central to the fixed-step Lyapunov analysis. These estimates hold for every fixed finite $J$ and do not require the transformed singular values to be close to one. Thus, the convergence analysis relies on uniform spectral bounds rather than accurate approximation of the polar factor. The proof appears in Appendix~\ref{app:proof-lem-map}.

To express the subsequent descent estimates compactly, define
\begin{equation*}
 B:=\uJ,\quad r_t:=N_t-g_t,\quad
 \psi_\eps(x):=\begin{cases}x^2/(x+\eps),&x+\eps>0,\\0,&x=\eps=0.\end{cases}
\end{equation*}
Here $B$ is a uniform bound on the update norm, so that $\norm{D_t}\le B$; $r_t$ measures the discrepancy between the NS input and the true gradient; and $\psi_\eps$ captures the radial dependence of the alignment bound. In particular, $\psi_0(x)=x$ for $x\ge0$.

%% file: sections/04-fixed-step-stationarity.tex
\section{Last-Iterate Convergence in the Deterministic Setting}
\label{sec:fixed}
We first consider exact gradients to isolate the effect of input normalization. Without stabilization, normalization removes the magnitude of every nonzero scalar input, so the update need not vanish near a stationary point. The following example shows that this can produce a persistent two-cycle for any positive constant step size, even with fixed momentum. Its proof appears in Appendix~\ref{app:cycle-proof}.

\begin{proposition}[A parameter two-cycle when $\eps=0$]\label{prop:cycle}
Take $F(w)=\frac12 w^2$, exact gradients, $\eps=0$, fixed $\eta>0$, $\beta\in[0,1)$, and $\theta\in[0,1]$. Let $c_J=\varphi_J(1)>0$ and $\delta=\eta c_J$. With $w_1=\delta/2$ and $M_0=0$, the iterates satisfy
\begin{equation*}
 w_t=(-1)^{t-1}\frac\delta2,
 \quad|\nabla F(w_t)|=\frac{\delta}{2}>0\quad(t\ge1).
\end{equation*}
\end{proposition}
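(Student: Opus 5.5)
The plan is a direct induction on $t$. It rests on one observation: in the scalar case with $\eps=0$, the finite-NS map reduces to a signed constant. For $x\neq0$, the normalization in \eqref{eq:ns} gives $Z_0=x/|x|=\sgn(x)$. Since $P$ is odd, so is $\varphi_J$, and hence $\Phi_0(x)=\varphi_J(\sgn x)=\sgn(x)\,\varphi_J(1)=\sgn(x)\,c_J$. Positivity of $c_J$ follows from Lemma~\ref{lem:map}: $c_J=\varphi_J(1)=q_J(1)\ge\ellJ>0$. So whenever $N_t\neq0$ the update is exactly $w_{t+1}=w_t-\delta\,\sgn(N_t)$. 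It therefore suffices to show that $\sgn(N_t)=(-1)^{t-1}$ for every $t$. Then $w_t=(-1)^{t-1}\delta/2$ propagates, because $(-1)^{t-1}\delta/2-(-1)^{t-1}\delta=(-1)^{t}\delta/2$. The gradient claim is then immediate from $\nabla F(w)=w$.

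The sign of $N_t$ is the only real issue, and it is where I expect the main (mild) obstacle. A naive argument that $M_{t+1}$ inherits the sign of the fresh gradient $w_{t+1}$ fails when $\beta>1/2$, because the stale term $\beta M_t$ has the opposite sign and may dominate. So I will track $M_t$ exactly. Assume inductively that $w_s=(-1)^{s-1}\delta/2$ for $s\le t$, and write $M_t=(-1)^{t-1}m_t$. The EMA \eqref{eq:ema} with $M_0=0$ gives $m_1=(1-\beta)\delta/2$ and the scalar recursion
\begin{equation*}
 m_{t+1}=-\beta m_t+(1-\beta)\tfrac{\delta}{2}.
\end{equation*}
Its fixed point is $m^\star=\frac{(1-\beta)\delta}{2(1+\beta)}$, and solving the linear recursion gives
\begin{equation*}
 m_t=\frac{(1-\beta)\delta}{2(1+\beta)}\bigl(1+\beta(-\beta)^{t-1}\bigr)
 =\frac{(1-\beta)\delta}{2(1+\beta)}\bigl(1-(-\beta)^{t}\bigr).
\end{equation*}
This is strictly positive for every $t\ge1$, since $\beta\in[0,1)$ gives $|(-\beta)^t|<1$ and $1-\beta>0$.

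From \eqref{eq:mix}, $N_t=(-1)^{t-1}\bigl(\theta m_t+(1-\theta)\delta/2\bigr)$. The bracket is a convex combination of the two strictly positive numbers $m_t$ and $\delta/2$, so it is strictly positive for every $\theta\in[0,1]$. Hence $N_t\neq0$ and $\sgn(N_t)=(-1)^{t-1}$. This closes the induction: $w_{t+1}=(-1)^t\delta/2$, and the next gradient needed to continue the $M$-recursion has the assumed sign. Finally, the conclusion $|\nabla F(w_t)|=|w_t|=\delta/2>0$ holds because $\eta>0$ and $c_J>0$.
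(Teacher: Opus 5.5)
Your proof is correct and follows essentially the same route as the paper's: reduce $\Phi_0$ to $c_J\sgn(\cdot)$, compute $M_t=(-1)^{t-1}\frac{(1-\beta)\delta}{2(1+\beta)}\bigl(1-(-\beta)^t\bigr)$ exactly, and note that the convex mixture $N_t$ takes the common sign of $M_t$ and $w_t$, which closes the induction. The differences are cosmetic: you solve the scalar recursion through its fixed point where the paper unrolls the geometric sum, and you make $c_J>0$ explicit via $c_J=q_J(1)\ge\ellJ>0$.
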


Positive stabilization changes this behavior: the transformed direction vanishes as the input approaches zero, and the sector bound \eqref{eq:sector} relates its squared norm to its alignment with the input. Together with control of the momentum tracking error, this yields stationarity for a sufficiently small constant step size.

For the remainder of this section, assume exact gradients and fix $\eps>0$, $\beta_t\equiv\beta\in[0,1)$, $\theta_t\equiv\theta\in[0,1]$, and $\eta_t\equiv\eta>0$. Define
\begin{equation}\label{eq:constants}
 \alpha:=1-\beta,\quad \kappa:=\frac{\eps}{\uJ},\quad
 e_t:=M_t-g_t,\quad
 N_t-g_t=\theta e_t\quad (G_t=g_t).
\end{equation}
We impose the following sufficient condition on the constant step size:
\begin{equation}\label{eq:fixed-stepsize}
 0<\eta\le\frac{\eps(1-\beta)}{4L\uJ}
 =\frac{\kappa\alpha}{4L}.
\end{equation}

\begin{theorem}[Fixed-step full-sequence stationarity: exact gradients]\label{thm:fixed}
Suppose Assumption~\ref{ass:smooth} holds. In the deterministic setting where \(G_t=g_t\), fix finite \(J\ge1\), \(\eps>0\), \(\beta\in[0,1)\), and \(\theta\in[0,1]\), and let the constant step size satisfy \eqref{eq:fixed-stepsize}. Then
\begin{equation}\label{eq:fixed-result}
 \sum_{t=1}^{\infty}\norm{D_t}^2<\infty,\quad
 \sum_{t=1}^{\infty}\norm{M_t-g_t}^2<\infty,\quad
 \sum_{t=1}^{\infty}\norm{g_t}^2<\infty.
\end{equation}
In particular, \(\norm{\nabla F(W_t)}\to0\).
\end{theorem}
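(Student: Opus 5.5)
The plan is to combine a one-step descent inequality for $F$ with a recursion for the tracking error $e_t=M_t-g_t$. From these I will extract $\sum\norm{D_t}^2<\infty$ and $\sum\norm{e_t}^2<\infty$, and then convert them into square-summability of $g_t$ using the radial lower bound \eqref{eq:radial}. The constant step condition \eqref{eq:fixed-stepsize} is exactly what makes the coefficient of $\sum\norm{D_t}^2$ come out negative.

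\emph{Step 1 (descent).} By $L$-smoothness,
\[
F(W_{t+1})\le F(W_t)-\eta\ip{g_t}{D_t}+\tfrac{L\eta^2}{2}\norm{D_t}^2 .
\]
Write $g_t=N_t-\theta e_t$ (from \eqref{eq:constants}). The sector bound \eqref{eq:sector} gives $\ip{N_t}{D_t}\ge\kappa\norm{D_t}^2$. Using $\theta\le1$ and Young's inequality, $\eta\theta\ip{e_t}{D_t}\le\tfrac{\eta\kappa}{2}\norm{D_t}^2+\tfrac{\eta}{2\kappa}\norm{e_t}^2$. Together these give
\[
F(W_{t+1})\le F(W_t)-\tfrac{\eta\kappa}{2}\norm{D_t}^2+\tfrac{\eta}{2\kappa}\norm{e_t}^2+\tfrac{L\eta^2}{2}\norm{D_t}^2 .
\]

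\emph{Step 2 (tracking).} From \eqref{eq:ema} with $G_t=g_t$ we get $e_t=\beta\big(e_{t-1}+g_{t-1}-g_t\big)$ for $t\ge2$, and $e_1=\beta(M_0-g_1)$, which is finite. Smoothness gives $\norm{g_{t-1}-g_t}\le L\eta\norm{D_{t-1}}$. Apply $(x+y)^2\le x^2/\beta+y^2/(1-\beta)$ (with the obvious modification if $\beta=0$). This yields
\[
\norm{e_t}^2\le\beta\norm{e_{t-1}}^2+\tfrac{L^2\eta^2}{\alpha}\norm{D_{t-1}}^2 .
\]
Summing this over $t\le T$ and absorbing the $\beta$ term into the left-hand side gives
\[
\sum_{t\le T}\norm{e_t}^2\le \tfrac{1}{\alpha}\norm{e_1}^2+\tfrac{L^2\eta^2}{\alpha^2}\sum_{t<T}\norm{D_t}^2 .
\]

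\emph{Step 3 (combine).} Sum the Step 1 inequality over $t\le T$, insert the Step 2 bound, and use $F\ge\Finf$. This gives
\[
\eta\Big(\tfrac{\kappa}{2}-\tfrac{L^2\eta^2}{2\kappa\alpha^2}-\tfrac{L\eta}{2}\Big)\sum_{t\le T}\norm{D_t}^2\le F(W_1)-\Finf+\tfrac{\eta}{2\kappa\alpha}\norm{e_1}^2 .
\]
Under \eqref{eq:fixed-stepsize} we have $L^2\eta^2/(\kappa\alpha^2)\le\kappa/16$ and $L\eta\le\kappa\alpha/4\le\kappa/4$. The bracket is therefore at least $\kappa/2-\kappa/32-\kappa/8>0$, uniformly in $T$. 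Letting $T\to\infty$ gives $\sum\norm{D_t}^2<\infty$, and then Step 2 gives $\sum\norm{e_t}^2<\infty$.

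\emph{Step 4 (gradients).} This is the step where a little care is needed, since \eqref{eq:sector} alone does not bound $\norm{N_t}$ below by $\norm{D_t}$. I will use \eqref{eq:radial} instead. With $R_t=\norm{N_t}$ it reads $\norm{D_t}\ge\ellJ R_t/(R_t+\eps)$.
\begin{itemize}
\item Since $D_t\to0$, we get $R_t/(R_t+\eps)\to0$, hence $R_t\to0$.
\item So eventually $R_t\le\eps$, and then $\norm{D_t}\ge\ellJ R_t/(2\eps)$.
\item Consequently $\sum R_t^2<\infty$.
\end{itemize}
Finally, $\norm{g_t}^2\le2\norm{N_t}^2+2\norm{e_t}^2$ gives $\sum\norm{g_t}^2<\infty$, and in particular $\norm{\nabla F(W_t)}\to0$.
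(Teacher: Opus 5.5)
Your proof is correct. It uses the same ingredients as the paper: smoothness with the sector bound \eqref{eq:sector} and Young's inequality for descent, the EMA recursion for $e_t$, and the radial bound \eqref{eq:radial} to pass from $D_t$ to $N_t$. Two steps are organized differently.

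\emph{The combination step.} The paper uses the one-step potential $V_t=F(W_t)+\frac{2\eta}{\kappa\alpha}\norm{e_t}^2$ and verifies its decrease at every iteration. You instead sum the tracking recursion over $t\le T$, with contraction factor $\beta$ rather than $(1+\beta)/2$, and substitute that summed bound into the summed descent inequality. This is equivalent bookkeeping, and your coefficient check $\kappa/2-\kappa/32-\kappa/8>0$ is right.

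\emph{Step~4, the more substantive difference.} The paper uses $F(W_t)\le V_1$, the gradient self-bound (Lemma~\ref{lem:self-bound}), and the bound on $\norm{e_t}$ to get a uniform bound $\norm{N_t}\le H_N$. Then \eqref{eq:radial} gives $\norm{N_t}^2\le(H_N+\eps)^2\norm{D_t}^2/\ellJ^2$ for every $t$. You observe instead that $\norm{D_t}\to0$ forces $\norm{N_t}\to0$ because $\eps>0$, so the radial bound becomes linear eventually. This dispenses with the self-bound lemma and with any global control of $F(W_t)$.

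Your route is shorter. The paper's is quantitative: it gives an explicit bound on $\sum_t\norm{g_t}^2$ in terms of $V_1-\Finf$. In your argument, the finitely many indices with $\norm{N_t}>\eps$ contribute an amount that is left unbounded. The paper's uniform pathwise bound $\norm{g_t}^2\le 2LH$ is also what serves as the dominating variable for the mean-square statement in Theorem~\ref{thm:fixed-perturbed}. You could recover it by keeping $F(W_{T+1})$ on the left in Step~3 and applying the self-bound.

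A small wording point: \eqref{eq:sector} does bound $\norm{N_t}$ below by $\kappa\norm{D_t}$, via Cauchy--Schwarz. What it fails to supply is the reverse inequality, which is the one you need.
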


The bound \eqref{eq:fixed-stepsize} guarantees stationarity from arbitrary finite initialization, but it can be conservative. Its derivation controls gradient variation using the global smoothness constant $L$ and bounds the interaction between the update and the tracking error uniformly along the trajectory. As $\beta$ approaches one, the tracking contraction weakens, and this argument requires a smaller step size.
Near a stationary point, a linearization can account more precisely for the interaction between the parameter and momentum. For the scalar quadratic $F(w)=\lambda w^2/2$ with $\lambda>0$, the stabilized finite-NS map satisfies $\Phi_\eps(n)=(a^J/\eps)n+o(|n|)$ near the origin. The eigenvalues of the joint parameter--momentum linearization then give an explicit local stability range that can be substantially larger than \eqref{eq:fixed-stepsize}. For Nesterov-style mixing $\theta=\beta$, Proposition~\ref{prop:schur} gives local asymptotic stability whenever
\begin{equation*}
 0<\eta<\eta_{\mathrm{loc}}
 :=\frac{2(1+\beta)\eps}
 {\lambda a^J(1-\beta)(1+2\beta)}.
\end{equation*}
By comparison, the global sufficient threshold for the same quadratic, with $L=\lambda$, is $\eta_{\mathrm{glob}}:=\eps(1-\beta)/(4\lambda\uJ)$. Since $\uJ\ge q_J(0)=a^J$, their ratio satisfies
\begin{equation*}
 \frac{\eta_{\mathrm{loc}}}{\eta_{\mathrm{glob}}}
 =\frac{8\uJ}{a^J}\,
 \frac{1+\beta}{(1-\beta)^2(1+2\beta)}
 \ge\frac{8(1+\beta)}{(1-\beta)^2(1+2\beta)}.
\end{equation*}
Consequently, the ratio of the local and global step-size thresholds grows as $(1-\beta)^{-2}$ when $\beta\uparrow1$. The larger local range guarantees convergence only for initial parameter--momentum states sufficiently close to the quadratic stationary point; it does not replace the global guarantee of Theorem~\ref{thm:fixed} or imply a faster convergence rate. The exact local calculation, including general $\theta$, is given in Appendix~\ref{app:obstructions}. The extension of Theorem~\ref{thm:fixed} to square-summable gradient perturbations, including growing-batch sampling, is given in Appendix~\ref{app:fixed-perturbations}.

\subsection{Proof Sketch: Coupled Descent–Tracking Lyapunov Analysis}
\label{subsec:fixed-proof-analysis}
The sector bound controls alignment with $N_t$, whereas objective descent depends on alignment with the true gradient $g_t$. Since $N_t-g_t=\theta e_t$, this discrepancy contributes a tracking-error term to the descent estimate. Combining smoothness with the sector bound gives the following inequality, proved in Appendix~\ref{app:proof-lem-fixed-descent}.

\begin{lemma}[Objective descent]\label{lem:fixed-descent}
Under the fixed-step setting above,
\begin{equation}\label{eq:fixed-descent}
 F(W_{t+1})-F(W_t)
 \le-\left(\frac{\eta\kappa}{2}-\frac{L\eta^2}{2}\right)
 \norm{D_t}^2
 +\frac{\eta}{2\kappa}\norm{e_t}^2.
\end{equation}
\end{lemma}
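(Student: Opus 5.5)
The plan is to combine the descent lemma for $L$-smooth functions with the sector bound \eqref{eq:sector}, splitting the alignment term through $N_t=g_t+\theta e_t$. Since $W_{t+1}-W_t=-\eta D_t$, Assumption~\ref{ass:smooth} gives
\begin{equation*}
 F(W_{t+1})-F(W_t)\le-\eta\ip{g_t}{D_t}+\frac{L\eta^2}{2}\norm{D_t}^2 .
\end{equation*}
The key step is to rewrite the first-order term using the NS input rather than the true gradient. Writing $g_t=N_t-\theta e_t$, we get
\begin{equation*}
 -\eta\ip{g_t}{D_t}=-\eta\ip{N_t}{D_t}+\eta\theta\ip{e_t}{D_t}.
\end{equation*}
Since $\eps>0$ and $D_t=\Phi_\eps(N_t)$, the sector bound \eqref{eq:sector} of Lemma~\ref{lem:map} yields $\ip{N_t}{D_t}\ge\kappa\norm{D_t}^2$ with $\kappa=\eps/\uJ$. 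Hence the first piece is at most $-\eta\kappa\norm{D_t}^2$.

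For the cross term, we apply Young's inequality with weight matched to $\kappa$, and use $\theta\in[0,1]$:
\begin{equation*}
 \eta\theta\ip{e_t}{D_t}\le\eta\norm{e_t}\norm{D_t}\le\frac{\eta\kappa}{2}\norm{D_t}^2+\frac{\eta}{2\kappa}\norm{e_t}^2 .
\end{equation*}
Collecting the three contributions gives exactly
\begin{equation*}
 -\Bigl(\eta\kappa-\frac{\eta\kappa}{2}-\frac{L\eta^2}{2}\Bigr)\norm{D_t}^2+\frac{\eta}{2\kappa}\norm{e_t}^2 ,
\end{equation*}
which is \eqref{eq:fixed-descent}.

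There is no real obstacle here. The only point of care is choosing the Young weight so that half of the sector gain $\eta\kappa$ survives, which produces the stated constants. The degenerate case $N_t=0$ gives $D_t=0$ and is trivially covered. The step-size condition \eqref{eq:fixed-stepsize} is not needed for this lemma; it only ensures that the coefficient $\frac{\eta\kappa}{2}-\frac{L\eta^2}{2}$ is positive later in the argument.
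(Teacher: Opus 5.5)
Your proposal is correct and follows essentially the same route as the paper's proof: the smoothness descent inequality, the substitution $g_t=N_t-\theta e_t$ followed by the sector bound \eqref{eq:sector}, and Young's inequality with weight $\kappa$ on the cross term, using $\theta\in[0,1]$. The constants come out exactly as stated, and you are right that the step-size condition \eqref{eq:fixed-stepsize} is not used in this lemma.
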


The positive term in \eqref{eq:fixed-descent} prevents this estimate alone from guaranteeing descent. To control it, we use the contraction of the momentum tracking error, allowing for changes in the gradient as the parameter is updated. The following bound follows from the EMA recursion; its proof appears in Appendix~\ref{app:proof-lem-tracking}.

\begin{lemma}[Momentum tracking]\label{lem:tracking}
Under the same conditions,
\begin{equation}\label{eq:tracking}
 \norm{e_{t+1}}^2
 \le\frac{1+\beta}{2}\norm{e_t}^2
 +\frac{\beta(1+\beta)L^2\eta^2}{1-\beta}\norm{D_t}^2.
\end{equation}
\end{lemma}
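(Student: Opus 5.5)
The bound \eqref{eq:tracking} comes from writing one step of the EMA recursion for $e_t$, applying a weighted Young inequality, and using smoothness to control the gradient change.

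\textbf{Step 1 (one-step recursion).} With $G_t=g_t$ and constant $\beta$, \eqref{eq:ema} gives
\begin{equation*}
 e_{t+1}=M_{t+1}-g_{t+1}=\beta M_t+(1-\beta)g_{t+1}-g_{t+1}=\beta(M_t-g_{t+1})=\beta e_t+\beta(g_t-g_{t+1}).
\end{equation*}
Thus the tracking error contracts by the factor $\beta$, and the only new error is the gradient increment.

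\textbf{Step 2 (Young's inequality).} For any $\gamma>0$ we have $\norm{x+y}^2\le(1+\gamma)\norm{x}^2+(1+\gamma^{-1})\norm{y}^2$. This gives
\begin{equation*}
 \norm{e_{t+1}}^2\le\beta^2(1+\gamma)\norm{e_t}^2+\beta^2(1+\gamma^{-1})\norm{g_{t+1}-g_t}^2 .
\end{equation*}
Choose $\gamma$ so that $\beta(1+\gamma)=\frac{1+\beta}{2\beta}\cdot\beta$. Equivalently, $1+\gamma=\frac{1+\beta}{2\beta}$, so $\gamma=\frac{1-\beta}{2\beta}$. Then $\beta^2(1+\gamma)=\frac{\beta(1+\beta)}{2}\le\frac{1+\beta}{2}$, and
\begin{equation*}
 \beta^2(1+\gamma^{-1})=\beta^2\cdot\frac{1+\beta}{1-\beta}\le\frac{\beta(1+\beta)}{1-\beta},
\end{equation*}
using $\beta\le1$.

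\textbf{Step 3 (smoothness).} Assumption~\ref{ass:smooth} and \eqref{eq:update} give
\begin{equation*}
 \norm{g_{t+1}-g_t}\le L\norm{W_{t+1}-W_t}=L\eta\norm{D_t}.
\end{equation*}
Substituting this into Step 2 yields \eqref{eq:tracking}.

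\textbf{Edge case $\beta=0$.} Young's inequality with $\gamma=\frac{1-\beta}{2\beta}$ is undefined here, so this case must be handled separately. Step 1 gives $e_{t+1}=0$ directly, and \eqref{eq:tracking} holds trivially. For $\beta\in(0,1)$ the choice of $\gamma$ above is valid.

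The main point is the choice of $\gamma$, which trades contraction against amplification of the drift term; the remaining steps are routine. Note that neither $\theta$ nor the NS map enters the argument beyond the bound $\norm{W_{t+1}-W_t}=\eta\norm{D_t}$.
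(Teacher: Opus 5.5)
Your proof is correct and follows essentially the paper's route: the same one-step recursion $e_{t+1}=\beta(e_t+g_t-g_{t+1})$, Young's inequality with the same parameter $\gamma=(1-\beta)/(2\beta)$, the smoothness bound $\norm{g_{t+1}-g_t}\le L\eta\norm{D_t}$, and a separate treatment of $\beta=0$. The only difference is the order of relaxation: the paper first uses $\beta^2\le\beta$ and then applies Young, while you apply Young to $\beta^2\norm{e_t+g_t-g_{t+1}}^2$ and use $\beta\le1$ afterwards, which gives slightly sharper intermediate coefficients that you then relax to the stated ones.
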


Since $(1+\beta)/2<1$, the tracking error contracts, but each parameter update introduces an additional error through gradient variation. We therefore combine the two estimates in a weighted potential, choosing the weight so that the tracking contraction absorbs the positive error term in \eqref{eq:fixed-descent}. Set
\begin{equation*}
 V_t:=F(W_t)+\lambda\norm{e_t}^2,
 \quad\lambda:=\frac{2\eta}{\kappa(1-\beta)}.
\end{equation*}
With this choice, the coefficient of $\norm{e_t}^2$ after addition is $\eta/(2\kappa)-\lambda(1-\beta)/2=-\eta/(2\kappa)$; condition~\eqref{eq:fixed-stepsize} also absorbs the $\norm{D_t}^2$ cost in Lemma~\ref{lem:tracking}. Hence
\begin{equation}\label{eq:fixed-lyap-main-bound}
 V_{t+1}-V_t\le
 -\frac{\eta\kappa}{4}\norm{D_t}^2
 -\frac{\eta}{2\kappa}\norm{e_t}^2.
\end{equation}
The potential therefore decreases by an amount that controls both squared norms. The coefficient calculation appears in Appendix~\ref{app:proof-lem-lyap}.

Since $V_t$ is nonincreasing and bounded below by $\Finf$, telescoping \eqref{eq:fixed-lyap-main-bound} gives square-summability of both $D_t$ and $e_t$. Moreover, $F(W_t)\le V_t\le V_1$; smoothness and the lower bound on $F$ imply $\norm{g_t}^2\le 2L(F(W_t)-\Finf)\le 2L(V_1-\Finf)$. Hence $N_t=g_t+\theta e_t$ is also bounded. The lower radial bound in \eqref{eq:radial} then yields square-summability of $N_t$, and the identity $g_t=N_t-\theta e_t$ gives the claimed square-summability of the gradients. In particular, $\norm{g_t}\to0$, proving Theorem~\ref{thm:fixed}.
The complete proof appears in Appendix~\ref{app:proof-thm-fixed}. Section~\ref{sec:rates} further quantifies this convergence through a density-one rate.

\Needspace{12\baselineskip}

%% file: sections/05-stochastic-stationarity.tex
\section{Last-Iterate Convergence under Stochastic Noise}
\label{sec:stoch}
We now turn to stochastic gradients with persistent, bounded-variance noise. In this setting, positive input stabilization alone does not guarantee last-iterate stationarity when both the learning rate and the momentum coefficient are fixed. We first demonstrate this obstruction with a quadratic counterexample. We then show that suitably coordinating decreasing learning rates with momentum coefficients approaching one ensures that the true gradient converges to zero almost surely and in mean square.

We start with a quadratic counterexample in which even symmetric, unbiased noise maintains a nonzero gradient error under fixed hyperparameters; its proof appears in Appendix~\ref{app:noise-proof}.

\begin{proposition}[Persistent symmetric noise at a fixed step]
\label{prop:noise}
Let $F(w)=\frac12 w^2$ and $G_t=w_t+\zeta_t$, where $\zeta_t$ are independent equiprobable variables in $\{+\sigma,-\sigma\}$, with $\sigma>0$. Fix $\eta>0$, $\beta\in[0,1)$, $\theta\in[0,1]$, and $\eps\ge0$. Put
\[
 a_0=(1-\theta\beta)\sigma>0,\quad
 d_0=\ellJ\frac{a_0}{a_0+\eps}>0.
\]
Then
\begin{equation}\label{eq:noise-floor}
 \limsup_{t\to\infty}\EE|\nabla F(w_t)|
 \ge\frac{\eta d_0}{4}>0.
\end{equation}
Moreover,
\begin{equation}\label{eq:bad-probability}
 \limsup_{t\to\infty}
 \mathbb P\bigl(|\nabla F(w_t)|\ge\eta d_0/2\bigr)
 \ge\frac14.
\end{equation}
In particular, the gradient does not converge to zero in probability.
\end{proposition}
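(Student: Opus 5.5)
The argument works in the scalar setting and uses only the lower radial bound \eqref{eq:radial} from Lemma~\ref{lem:map}, combined with a triangle-inequality argument over two consecutive iterates. The idea is that whatever the past, the fresh noise $\zeta_t$ pushes the NS input away from zero with conditional probability at least $1/2$. On that event the update $D_t$ has magnitude at least $d_0$, and a step of length at least $\eta d_0$ cannot keep both $w_t$ and $w_{t+1}$ close to $0$.

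\textbf{Step 1 (decomposing the NS input).} From \eqref{eq:ema}--\eqref{eq:mix} with $G_t=w_t+\zeta_t$,
\begin{equation*}
 N_t=\theta\beta M_{t-1}+(1-\theta\beta)G_t = X_t+(1-\theta\beta)\zeta_t,
 \qquad X_t:=\theta\beta M_{t-1}+(1-\theta\beta)w_t .
\end{equation*}
Here $X_t$ is measurable with respect to $\calF_{t-1}:=\sigma(\zeta_1,\dots,\zeta_{t-1})$, and $\zeta_t$ is independent of $\calF_{t-1}$. Since $(1-\theta\beta)|\zeta_t|=a_0$, the two possible values of $N_t$ are $X_t\pm a_0$. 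The sign of $\zeta_t$ that agrees with the sign of $X_t$ (either sign if $X_t=0$) gives $|N_t|=|X_t|+a_0\ge a_0$. Hence
\begin{equation*}
 \mathbb P\bigl(|N_t|\ge a_0\mid\calF_{t-1}\bigr)\ge\tfrac12 .
\end{equation*}

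\textbf{Step 2 (a lower bound on the update).} By \eqref{eq:radial}, $|D_t|\ge\ellJ\,|N_t|/(|N_t|+\eps)$. The map $R\mapsto R/(R+\eps)$ is nondecreasing on $(0,\infty)$ for every $\eps\ge0$, and it equals $1$ when $\eps=0$ and $R>0$. Therefore $|N_t|\ge a_0$ implies $|D_t|\ge d_0$. With the event $A_t:=\{|D_t|\ge d_0\}$, we get $\mathbb P(A_t)\ge 1/2$ and $\EE|D_t|\ge d_0/2$ for every $t$.

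\textbf{Step 3 (two consecutive iterates).} From $w_{t+1}=w_t-\eta D_t$ we get $|w_t|+|w_{t+1}|\ge\eta|D_t|$.
\begin{itemize}
\item For the mean: taking expectations gives $\EE|w_t|+\EE|w_{t+1}|\ge\eta d_0/2$ for all $t$. Hence $\max(\EE|w_t|,\EE|w_{t+1}|)\ge\eta d_0/4$ for every $t$, which yields \eqref{eq:noise-floor} because $\nabla F(w)=w$.
\item For the probability: on $A_t$ we have $|w_t|+|w_{t+1}|\ge\eta d_0$, so at least one of $|w_t|\ge\eta d_0/2$ or $|w_{t+1}|\ge\eta d_0/2$ holds. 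A union bound gives
\begin{equation*}
 \mathbb P\bigl(|w_t|\ge\eta d_0/2\bigr)+\mathbb P\bigl(|w_{t+1}|\ge\eta d_0/2\bigr)\ge\tfrac12 ,
\end{equation*}
so one of the two probabilities is at least $1/4$ for every $t$. This gives \eqref{eq:bad-probability}, and in particular rules out convergence in probability.
\end{itemize}

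The only delicate point is Step 2, namely the use of the radial lower bound in a way that is uniform in the unknown, history-dependent $X_t$. This is why the argument fixes the favorable sign of $\zeta_t$ relative to $X_t$: it guarantees $|N_t|\ge a_0$ regardless of the past. It also avoids comparing $\Phi_\eps(X_t+a_0)$ with $\Phi_\eps(X_t-a_0)$, which would require monotonicity of $q_J$ that is not available. The proof also uses that $a_0>0$, which follows from $\theta\beta<1$.
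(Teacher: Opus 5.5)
Your proposal is correct and follows essentially the same route as the paper's proof. Both arguments show that the two equiprobable values of $N_t$ differ by $2a_0$, which gives $\mathbb{P}(|N_t|\ge a_0\mid\calF_{t-1})\ge 1/2$. Both then use the radial lower bound \eqref{eq:radial} to get $|D_t|\ge d_0$ on that event, and apply the triangle inequality to the adjacent pair $(w_t,w_{t+1})$ for both the expectation and the probability claims. Your explicit check that $R\mapsto R/(R+\eps)$ is nondecreasing, including the case $\eps=0$, fills in a step the paper leaves implicit.
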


The obstruction identifies the need to control how noise accumulates through the optimizer, rather than relying on positive stabilization alone. We do so with \emph{decreasing learning rates} and \emph{increasing momentum}. The latter averages perturbations over progressively longer windows, while the former limits gradient drift within that memory.

Let $(\calF_t)_{t\ge0}$ be a filtration with $W_t,M_{t-1}$ measurable with respect to $\calF_{t-1}$, and assume
\begin{equation}\label{eq:MDS}
 \EE[\zeta_t\mid\calF_{t-1}]=0,\quad
 \EE[\norm{\zeta_t}^2\mid\calF_{t-1}]\le\sigma^2
 \quad\text{a.s.}
\end{equation}
The noise can be asymmetric and persists at bounded variance. Choose the explicit, horizon-free schedule
\begin{equation}\label{eq:schedule}
 \eta_t=\eta_0t^{-p},\quad
 \beta_t=1-\alpha_0t^{-q},
 \quad \eta_0>0,\quad0<\alpha_0\le1,
\end{equation}
with exponents in the admissible region
\begin{equation}\label{eq:exponents}
 \frac34<p\le1,\quad 2(1-p)<q<2p-1.
\end{equation}

\begin{theorem}[Full-sequence stationarity under persistent noise]\label{thm:stoch}
Suppose Assumption~\ref{ass:smooth} and \eqref{eq:MDS} hold. Fix finite $J\ge1$ and $\eps\ge0$; use the schedules \eqref{eq:schedule}--\eqref{eq:exponents}, with either $\theta_t=1$ or $\theta_t=\beta_t$ for every $t$. Then
\begin{equation}\label{eq:stoch-conclusion}
 \norm{g_t}\longrightarrow0\quad\as,\quad
 \EE\norm{g_t}^2\longrightarrow0.
\end{equation}
In addition,
\begin{equation}\label{eq:stoch-weighted-main}
 \sum_{t=1}^{\infty}\eta_t\psi_\eps(\norm{g_t})<\infty,
 \quad
 \sum_{t=1}^{\infty}\eta_t\norm{g_t}^2<\infty
 \quad\text{a.s.}
\end{equation}
The convergence and energy guarantees hold for the entire admissible range $3/4<p\le1$.
\end{theorem}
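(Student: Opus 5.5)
The plan is a stochastic descent--tracking argument: show that the weighted sums in \eqref{eq:stoch-weighted-main} are finite, then upgrade to full-sequence convergence by a drift argument. Write $e_t=M_t-g_t$ and $\alpha_t=1-\beta_t=\alpha_0t^{-q}$. Then
\[
e_t=\beta_t e_{t-1}+\beta_t(g_{t-1}-g_t)+(1-\beta_t)\zeta_t .
\]
Squaring and taking conditional expectations yields a tracking recursion
\[
\EE[\norm{e_t}^2\mid\calF_{t-1}]\le(1-\alpha_t/2)\norm{e_{t-1}}^2+C\frac{L^2\eta_{t-1}^2B^2}{\alpha_t}+\alpha_t^2\sigma^2 .
\]
The cross term with $\zeta_t$ vanishes because $\zeta_t$ is a martingale difference. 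The key is to use not $\norm{e_t}^2$ but the weighted quantity $\norm{e_t}$ paired with $\eta_t$: the error level is of order $\sqrt{\alpha_t}\sigma+L\eta_t B/\alpha_t$. The constraint $q<2p-1$ guarantees $\sum\eta_t\cdot(\eta_t/\alpha_t)<\infty$, and $\eta_t^2/\alpha_t^2\ll\alpha_t$ via $q<2p/3$-type bounds. The constraint $q>2(1-p)$ guarantees $\sum_t\eta_t\sqrt{\alpha_t}\,\cdot$(stuff) is summable after averaging. Concretely, I would prove $\sum_t\eta_t\EE\norm{e_t}^2/\ldots<\infty$, or better $\sum_t \eta_t^2\,\EE\norm{e_t}^2/\alpha_t<\infty$, by the standard unrolling lemma for $x_t\le(1-\alpha_t/2)x_{t-1}+b_t$.

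For descent, I would use smoothness and the fact that $\norm{D_t}\le B$:
\[
F(W_{t+1})\le F(W_t)-\eta_t\ip{g_t}{D_t}+\tfrac{L}{2}\eta_t^2B^2 .
\]
Write $\ip{g_t}{D_t}=\ip{N_t}{D_t}-\ip{r_t}{D_t}$ with $r_t=\theta_te_t+(1-\theta_t)\zeta_t$. Applying \eqref{eq:align} gives $\ip{N_t}{D_t}\ge\ellJ\psi_\eps(\norm{N_t})$. Since $\psi_\eps$ is monotone, subadditive and $1$-Lipschitz-like, $\psi_\eps(\norm{N_t})\ge\psi_\eps(\norm{g_t})-\norm{r_t}$. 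This yields
\[
F(W_{t+1})\le F(W_t)-\eta_t\ellJ\psi_\eps(\norm{g_t})+C\eta_t\norm{r_t}+\tfrac{L}{2}\eta_t^2B^2 .
\]
The term $\eta_t\norm{r_t}$ is the crux. For $\theta_t=\beta_t$, the $(1-\beta_t)\zeta_t$ part contributes $\eta_t\alpha_t\sigma$, which is summable since $p+q>1$. For the $e_t$ part, only $\EE\norm{e_t}\lesssim\sqrt{\alpha_t}\sigma+\eta_tB/\alpha_t$ is available, and $\sum\eta_t\sqrt{\alpha_t}$ needs $p+q/2>1$, i.e.\ $q>2(1-p)$. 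Likewise $\sum\eta_t^2/\alpha_t$ needs $2p-q>1$. These match \eqref{eq:exponents}, and $p>3/4$ is exactly what makes the interval nonempty. I would therefore bound $\sum_t\eta_t\EE\norm{e_t}<\infty$ directly. Then Robbins--Siegmund, applied to $F(W_t)-\Finf$ with the nonnegative summable perturbation $C\eta_t\norm{r_t}+\eta_t^2$ (summable a.s.\ since it has finite expectation), gives convergence of $F(W_t)$ and $\sum\eta_t\psi_\eps(\norm{g_t})<\infty$ a.s. Since $F(W_t)$ converges, $g_t$ is a.s.\ bounded. On bounded sets $\psi_\eps(x)\gtrsim x^2$, which gives the second sum. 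Taking expectations gives $\sup_t\EE F(W_t)<\infty$ and hence $\sup_t\EE\norm{g_t}^2<\infty$.

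To upgrade to $\norm{g_t}\to0$ a.s., I would use the classical upcrossing argument. Since $\sum\eta_t=\infty$, $\liminf\norm{g_t}=0$. If $\norm{g_t}$ crossed from below $\delta$ to above $2\delta$ infinitely often, each crossing would require $\sum_{s\in\text{crossing}}\eta_s\ge\delta/(LB)$, because $\norm{g_{s+1}-g_s}\le L\eta_sB$. On those windows $\norm{g_s}\ge\delta$, contradicting $\sum\eta_s\norm{g_s}^2<\infty$. Mean-square convergence follows the same way in expectation. Alternatively, I would use $\norm{g_t}^2\le2L(F(W_t)-\Finf)$ and uniform integrability: show $\sup_t\EE(F(W_t)-\Finf)^{2}$ is bounded, or more simply run a Lyapunov recursion for $\EE\norm{g_t}^2$ using $|\,\norm{g_{t+1}}^2-\norm{g_t}^2|\le C\eta_t$ together with $\sum\eta_t\EE\norm{g_t}^2<\infty$, which forces $\EE\norm{g_t}^2\to0$ by the same crossing lemma for deterministic sequences.

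The main obstacle is obtaining $\sum_t\eta_t\EE\norm{e_t}<\infty$ sharply enough to hit exactly the region \eqref{eq:exponents}. The gradient drift $g_{t-1}-g_t$ accumulates over a memory window of length $1/\alpha_t$, and the noise averaging gives only $\sqrt{\alpha_t}$. I would first attempt the unrolled bound $\EE\norm{e_t}^2\lesssim\alpha_t\sigma^2+\eta_t^2/\alpha_t^2$, where the ratio of consecutive $\alpha$'s is controlled by $q\le1$. I would then apply Jensen to pass to $\EE\norm{e_t}$.
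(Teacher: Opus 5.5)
Your almost-sure argument is correct and follows the paper's skeleton. It uses the same inexact-input descent inequality with $r_t=\theta_te_t+(1-\theta_t)\zeta_t$, turns $\sum_t\eta_t\EE\norm{r_t}<\infty$ into pathwise summability, telescopes, and upgrades using $\bigl|\norm{g_{t+1}}-\norm{g_t}\bigr|\le LB\eta_t$ together with $\sum_t\eta_t=\infty$.

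Two pieces are done differently:
\begin{itemize}
\item \textbf{Tracking.} The paper unrolls $e_t$ linearly. The drift sum is bounded \emph{pathwise} by $LB\sum_sR_{s:t}\eta_{s-1}=O(t^{q-p})$, and only the noise sum is handled in $L^2$ via martingale orthogonality. You instead derive a conditional recursion for $\norm{e_t}^2$, unroll the scalar recursion for $\EE\norm{e_t}^2$, and apply Jensen. Both routes give $\EE\norm{e_t}=O(t^{-q/2}+t^{q-p})$ on the whole admissible region. Your aside about ``$q<2p/3$-type bounds'' is unnecessary, since the two terms are summed separately. If you actually imposed it, it would exclude, e.g., $p=1$, $q\in[2/3,1)$.
\item \textbf{Full-sequence upgrade.} Your upcrossing argument is interchangeable with the paper's piecewise-linear interpolation lemma.
\end{itemize}

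The step that is not yet a proof is the mean-square claim. Knowing $\sup_t\EE F(W_t)<\infty$ only makes $\norm{g_t}^2$ bounded in $L^1$. Together with almost-sure convergence, that does not give $\EE\norm{g_t}^2\to0$; you need domination or uniform integrability. Your ``simpler'' alternative presupposes $\sum_t\eta_t\EE\norm{g_t}^2<\infty$, which you have not derived. Taking expectations in the descent inequality yields only $\sum_t\eta_t\EE\psi_\eps(\norm{g_t})<\infty$. For $\eps>0$, passing from $\psi_\eps(x)$ to $x^2$ needs a bound on $\norm{g_t}$ that you have only pathwise, with a random constant. Jensen does not rescue it either, because $x\mapsto\psi_\eps(\sqrt{x})$ is concave. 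So ``follows the same way in expectation'' does not close the argument.

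The missing observation is that the telescoped descent inequality, with its negative term dropped, gives one integrable dominating variable. For every $t$,
\[
0\le F(W_t)-\Finf\le R_A:=F(W_1)-\Finf+(\ellJ+B)\sum_{j\ge1}\eta_j\norm{r_j}+\frac{LB^2}{2}\sum_{j\ge1}\eta_j^2,
\]
and $\EE R_A<\infty$ by your own tracking bound. Then $\norm{g_t}^2\le2LR_A$ for all $t$, and dominated convergence turns $\norm{g_t}\to0$ almost surely into $\EE\norm{g_t}^2\to0$. This is how the paper finishes.

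Your uniform-integrability route can also be completed: Minkowski's inequality and your $L^2$ tracking bound give $\EE R_A^2<\infty$. Domination with $\EE R_A<\infty$ already suffices, though.
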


Theorem~\ref{thm:stoch} shows that, despite persistent bounded-variance gradient noise, decreasing learning rates and increasing momentum yield \emph{full-sequence} stationarity, both almost surely and in mean square, together with the weighted summability guarantees in \eqref{eq:stoch-weighted-main}. The mixing coefficient $\theta_t$ determines the input to the finite NS map. For $\theta_t=1$, we have $N_t=M_t$, so the direction is computed from the exponential moving average alone (ordinary momentum). For $\theta_t=\beta_t$, we instead have $N_t=\beta_tM_t+(1-\beta_t)G_t =\beta_t^2M_{t-1}+(1-\beta_t^2)G_t$, the Nesterov-style mixture used in practical Muon. Relative to $\theta_t=1$, this gives the current stochastic gradient more weight while retaining the noise-filtering effect of momentum. The convergence guarantees thus apply to both ordinary momentum and Nesterov-style mixing under persistent, bounded-variance gradient noise.

\subsection{Proof sketch: Momentum Filtering and Convergence Analysis}
\label{subsec:stoch-proof-analysis}

The proof of Theorem~\ref{thm:stoch} hinges on controlling the \emph{mixed-input tracking error} $r_t=N_t-g_t$. Although the stochastic gradient noise need not decay, the increasing momentum coefficient makes the EMA average perturbations over an effective memory of order $t^q$. This reduces the noise contribution to the expected tracking error to order $t^{-q/2}$. At the same time, the parameters continue to move, so the momentum estimate must track a changing gradient. With steps of order $t^{-p}$, this drift introduces an additional lag of order $t^{q-p}$. For both ordinary momentum ($\theta_t=1$) and Nesterov-style mixing ($\theta_t=\beta_t$), the tracking analysis in Appendix~\ref{app:proof-lem-ema-tracking} gives
\begin{equation}\label{eq:tracking-schedule-bound}
 \EE\norm{r_t}=O(t^{-q/2}+t^{q-p}),\quad
 \eta_t\EE\norm{r_t}=O(t^{-p-q/2}+t^{q-2p}).
\end{equation}

The two terms in \eqref{eq:tracking-schedule-bound} determine how the learning rate and momentum should be coordinated. In particular, their learning-rate-weighted sum is finite when $p+q/2>1$ and $2p-q>1$, exactly the inequalities defining the admissible region \eqref{eq:exponents}. The same estimate also identifies a natural balanced schedule: equating the noise-averaging and gradient-drift exponents gives
\begin{equation}\label{eq:balanced-schedule}
 \frac q2=p-q\quad\Longleftrightarrow\quad
 q=\frac{2p}{3},\quad
 \EE\norm{r_t}=O(t^{-p/3}).
\end{equation}
Thus $\eta_t\EE\norm{r_t}=O(t^{-4p/3})$, which is summable for $p>3/4$. This balance explains how growing momentum filters persistent noise without allowing its tracking lag to accumulate.

We now connect the tracking estimate to descent of the objective. The finite-NS alignment inequality and smoothness imply the inexact-input descent bound (proved in Appendix~\ref{app:proof-lem-inexact})
\[
\begin{aligned}
 F(W_{t+1})-F(W_t)\le{}&
 -\ellJ\eta_t\psi_\eps(\norm{g_t})
 +(\ellJ+B)\eta_t\norm{r_t}
 +\frac{LB^2}{2}\eta_t^2.
\end{aligned}
\]
By the exponent conditions, $\sum_t\eta_t\EE\norm{r_t}<\infty$; Tonelli's theorem therefore gives $\sum_t\eta_t\norm{r_t}<\infty$ almost surely. Combining this with $\sum_t\eta_t^2<\infty$ and the lower bound on $F$, we can sum the descent inequality to obtain
\begin{equation}\label{eq:stoch-energy-explain}
 \sum_{t=1}^{\infty}\eta_t\psi_\eps(\norm{g_t})<\infty
 \quad\text{a.s.}
\end{equation}

The remaining step converts this accumulated gradient control into convergence of the entire sequence. Because the finite-NS directions are bounded, smoothness yields $|\norm{g_{t+1}}-\norm{g_t}|\le LB\eta_t$. Together with $\sum_t\eta_t=\infty$, this temporal regularity upgrades \eqref{eq:stoch-energy-explain} to $\norm{g_t}\to0$ almost surely. Finally, an integrable objective-gap bound gives $\EE\norm{g_t}^2\to0$. The complete proof of Theorem~\ref{thm:stoch} is given in Appendix~\ref{app:proof-thm-stoch}.

%% file: sections/06-density-one-rates.tex
\Needspace{12\baselineskip}
\section{Density-One Rates Beyond Pointwise Guarantees}
\label{sec:rates}
The preceding sections establish convergence to stationarity but leave open whether the gradient norms decay at a polynomial rate. To address this question, we construct a smooth counterexample showing that sparse gradient peaks can rule out any pointwise polynomial rate in both convergence regimes, even with exact gradients. This motivates a weaker rate notion that allows an asymptotically vanishing fraction of exceptional iterations. Using the gradient-energy bounds established above, we derive polynomial rates in this density-one sense.

\begin{figure}[t]
    \centering
    \input{f_and_h_tikz.tex}
    \caption{Schematic illustration of an objective with sparse gradient
    peaks; horizontal distances are not to scale.}
    \label{fig:sparse-gradient-peaks}
\end{figure}
\subsection{Sparse Peaks and Pointwise-Rate Barriers}
\label{subsec:rate-separation}
We construct a one-dimensional objective for which Muon converges to stationarity in both regimes, yet $|F'(w_t)|$ is not $O(t^{-\nu})$ for any $\nu>0$. The idea is to place gradient bumps of height $1/k$ increasingly far apart, so that the algorithm encounters these small but nonzero gradients at times that grow too rapidly for a polynomial rate.

To specify the bump locations, fix $\eta_0>0$, let $c_J:=\varphi_J(1)>0$, and define an auxiliary sequence of positions $x_t:=-c_J\eta_0\sum_{s=1}^{t-1}s^{-4/5}$ for $t\ge1$.
For the $k$th bump, choose the reference index $t_k:=\lceil e^{k^2}\rceil$ for $k\ge1$, and place its center at $x_{t_k}$. Thus $k$ labels the bumps, while $t_k$ selects a position from the auxiliary sequence $(x_t)$. These indices are prescribed before running Muon; they are not the actual iterations at which the algorithm reaches the bumps. Since $|x_t|\asymp t^{1/5}$ as $t\to\infty$, the centers satisfy $|x_{t_k}|\asymp e^{k^2/5}$ and become widely separated. Choose $k_0\ge2$ large enough that consecutive centers are separated by more than $2$ for $k\ge k_0$.

We now construct the gradient $H=F'$ by adding smooth bumps to a positive, integrable background. Let $\rho\in C_c^\infty((-1,1))$ satisfy $0\le\rho\le1$ and $\rho=1$ on $[-1/2,1/2]$, and define
\begin{equation}\label{eq:sparse-stoch-grad}
\begin{gathered}
 H(x):=\frac1{1+x^2}+\sum_{k\ge k_0}\frac1k\rho\bigl(k(x-x_{t_k})\bigr),\\
 F(x):=\int_{-\infty}^x H(u)\,du.
\end{gathered}
\end{equation}
The $k$th added bump has height $1/k$, is supported within distance $1/k$ of $x_{t_k}$, and equals $1/k$ throughout the central interval $[x_{t_k}-1/(2k),x_{t_k}+1/(2k)]$. Its area is $O(k^{-2})$, so the bump areas are summable and $F$ is well defined and lower bounded. The background term keeps $H$ strictly positive. Figure~\ref{fig:sparse-gradient-peaks} sketches $H$ and $F$.

For each actual Muon trajectory $(w_t)$ considered below, define $s_k:=\min\{t\ge2:w_t\le x_{t_k}+1/(2k)\}$. Thus $s_k$ is the first iteration at which the trajectory reaches or passes the right endpoint of the $k$th bump's central interval. Unlike the prescribed index $t_k$, this crossing time depends on the actual trajectory and may differ between the two regimes. The proof shows that, for all sufficiently large $k$, $s_k$ is finite and the crossing step lands inside the central interval, where $|F'(w_{s_k})|\ge1/k$.

\Needspace{14\baselineskip}
\begin{proposition}[No polynomial pointwise rate in either regime]
\label{prop:stoch-no-rate}
The objective $F$ in \eqref{eq:sparse-stoch-grad} is smooth and lower bounded. Let $L\ge1$ be a Lipschitz constant of $H=F'$. Initialize $w_1=M_0=0$ and use exact gradients. Consider either
\begin{enumerate}[label=\textup{(\roman*)},leftmargin=*,itemsep=2pt,topsep=3pt]
\item \emph{Fixed-step regime (Theorem~\ref{thm:fixed}):} any $\eps>0$, fixed $\beta\in[0,1)$ and $\theta\in[0,1]$, with $0<\eta\le\eps(1-\beta)/(4L\uJ)$;
\item \emph{Diminishing-step regime (Theorem~\ref{thm:stoch}):} any $\eps\ge0$, $\eta_t=\eta_0t^{-4/5}$, $\beta_t=1-\frac12t^{-1/2}$, and $\theta_t\in\{1,\beta_t\}$, with $\zeta_t=0$.
\end{enumerate}
In both regimes all assumptions of the respective theorem hold and $|F'(w_t)|\to0$. Nevertheless, for each trajectory and all sufficiently large $k$, the crossing times $s_k$ defined above satisfy
\[
 s_k\ge C e^{k^2/5},\quad
 |w_{s_k}-x_{t_k}|\le\frac1{2k},\quad |F'(w_{s_k})|\ge\frac1k,
\]
where $C>0$ is independent of $k$.
Consequently,
\begin{equation*}
 \limsup_{t\to\infty}t^\nu|F'(w_t)|=+\infty
 \quad\text{for every }\nu>0.
\end{equation*}
Both regimes can use the same positive $\eps$.
\end{proposition}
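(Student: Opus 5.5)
The plan is to use three features of the one-dimensional construction. The gradient $H$ is strictly positive everywhere. All Muon quantities are positive scalars, so the trajectory moves monotonically to the left. The step lengths are controlled above by the input size, and this forces both a slow arrival at the far bumps and a landing inside each central interval.

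\emph{Step 1 (regularity and hypotheses).} The added bumps have disjoint supports for $k\ge k_0$, and $|(\tfrac1k\rho(k\cdot))'|\le\norm{\rho'}_\infty$. Hence $H$ is $C^\infty$ with bounded derivative, so $F'=H$ is Lipschitz. Also $F\ge0$ because $H>0$, and $F$ is bounded above because $\int H<\infty$. Thus Assumption~\ref{ass:smooth} holds. In regime (ii) we have $p=4/5$ and $q=1/2$, which satisfy $2(1-p)=0.4<0.5<0.6=2p-1$, with $\alpha_0=1/2$. So Theorems~\ref{thm:fixed} and~\ref{thm:stoch} apply and give $H(w_t)\to0$.

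\emph{Step 2 (monotone escape).} Since $M_0=0$ and every $G_t=H(w_t)>0$, induction on \eqref{eq:ema}--\eqref{eq:mix} gives $M_t,N_t>0$. Since $\varphi_J>0$ on $(0,1]$, we get $D_t>0$, so $w_t$ is strictly decreasing. If $w_t$ had a finite limit $w_*$, continuity would give $H(w_t)\to H(w_*)>0$, a contradiction. Therefore $w_t\to-\infty$, and every $s_k$ is finite.

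\emph{Step 3 (slow arrival).} By \eqref{eq:radial}, $|w_{t+1}-w_t|\le \eta_tB$. In regime (i) this gives $|w_t|\le B\eta t$. Since $|x_{t_k}|\asymp e^{k^2/5}$, we obtain $s_k\ge C e^{k^2/5}$. In regime (ii), $|w_t|\le B\eta_0\sum_{s<t}s^{-4/5}\lesssim t^{1/5}$, which forces $s_k\gtrsim e^{k^2}$, a stronger bound.

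\emph{Step 4 (landing in the central interval).} This is the main obstacle. By definition $w_{s_k-1}>x_{t_k}+1/(2k)\ge w_{s_k}$, so it suffices to show that the crossing step is shorter than $1/k$.

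In regime (ii) the step is at most $B\eta_0 s_k^{-4/5}$. By Step 3 this is far smaller than $1/k$.

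In regime (i) the crude bound $B\eta$ is not enough, so I would use the sharper bound from \eqref{eq:radial} instead. Combining $\eta\,\uJ\,N/(N+\eps)\le \eta\,\uJ N/\eps$ with \eqref{eq:fixed-stepsize} gives a step of at most $N_{s_k-1}/(4L)\le N_{s_k-1}/4$. It then remains to bound the input by $N_{s_k-1}\le(1+o(1))/k$.

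\begin{itemize}
\item $N_t$ is a convex combination of $0$ and past gradients, with geometric weights $\beta^{j}$ on gradients $j$ steps in the past.
\item Gradients taken at positions between bump $k-1$ and bump $k$ are at most $1/(1+x^2)$ plus the current bump height. This is $e^{-\Omega(k^2)}+1/k$.
\item Gradients from before the trajectory left the support of bump $k-1$ have weight at most $\beta^{m}$. Here $m$ is the number of steps spent crossing the gap, and $m\ge (\text{gap})/(B\eta)\gtrsim e^{k^2/5}$, so this weight is negligible.
\end{itemize}

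Hence the step is at most $(1+o(1))/(4k)<1/k$. Consequently $|w_{s_k}-x_{t_k}|\le1/(2k)$ and $F'(w_{s_k})\ge1/k$.

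\emph{Step 5 (conclusion).} Combining the previous steps, $s_k^\nu|F'(w_{s_k})|\ge C^\nu e^{\nu k^2/5}/k\to\infty$ for every $\nu>0$. This gives the $\limsup$ claim. Any fixed $\eps>0$ is admissible in both regimes, so the two regimes can share the same positive $\eps$.
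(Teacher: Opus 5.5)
Your proposal is correct and follows essentially the same route as the paper's proof. The shared steps are the monotone leftward motion, the step bound $\eta_t\uJ$ forcing $s_k\ge Ce^{k^2/5}$, and, in the fixed-step regime, the sharper bound $\eta\uJ N/\eps\le N/(4L)$ combined with geometric forgetting of the gradients collected before bump $k-1$ is passed, which gives $N_{s_k-1}=O(1/k)$ so the crossing step lands in the central interval. The only minor deviation is that you rule out a finite limit of $w_t$ via the already-verified conclusion $H(w_t)\to0$ of Theorems~\ref{thm:fixed} and~\ref{thm:stoch}, whereas the paper argues directly that $D_t\to\Phi_\eps(h_*)>0$ would contradict $\sum_t\eta_t=\infty$; both are valid.
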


The rate obstruction comes from comparing the gradient size with the actual visit time $s_k$, rather than the auxiliary index $t_k$. For every $\nu>0$, the proposition gives $s_k^\nu|F'(w_{s_k})|\ge C^\nu e^{\nu k^2/5}/k\to\infty$. Thus the gradients still converge to zero, but no bound of the form $|F'(w_t)|=O(t^{-\nu})$ can hold at all sufficiently late iterations. Appendix~\ref{app:pointwise-proof} proves the proposition, including the fact that neither update skips the central intervals at late times.

\subsection{Density-One Rate Guarantees}
\label{subsec:density-rates}
The sparse peaks in Section~\ref{subsec:rate-separation} motivate a rate notion that captures the behavior of the overwhelming majority of iterations while remaining compatible with exceptional, very sparse peaks. We use the natural density of the iterations where a specified rate is exceeded.

\begin{definition}[Density-one rate]\label{def:density}
For a nonnegative sequence $h_t$ and a positive reference rate $\rho_t$, write $h_t=o_{\mathrm{dens}}(\rho_t)$ if
\begin{equation*}
 \forall c>0,\quad
 \lim_{T\to\infty}\frac{1}{T}
 \#\bigl\{1\le t\le T:h_t>c\,\rho_t\bigr\}=0.
\end{equation*}
\end{definition}
For stochastic trajectories the condition is understood almost surely, path by path. Unlike a pointwise $o(\rho_t)$ rate, this definition allows exceptional iterations whose asymptotic proportion vanishes.
The gradient-energy bounds in Theorems~\ref{thm:fixed} and~\ref{thm:stoch} give explicit rates in this sense.

\begin{theorem}[Density-one gradient rates for finite-step Muon]
\label{thm:density-rates}
Under the assumptions of Theorem~\ref{thm:fixed}, the gradient sequence satisfies, pathwise,
\begin{equation}\label{eq:dens-fixed}
 \norm{g_t}=o_{\mathrm{dens}}(t^{-1/2}).
\end{equation}
Under the assumptions of Theorem~\ref{thm:stoch} with $p<1$, the following rates hold almost surely:
\begin{equation}\label{eq:stoch-density-rates}
 \norm{g_t}=\begin{cases}
 o_{\mathrm{dens}}(t^{-(1-p)/2}),&\eps>0,\\
 o_{\mathrm{dens}}(t^{-(1-p)}),&\eps=0.
 \end{cases}
\end{equation}
\end{theorem}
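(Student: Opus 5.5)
The plan is to reduce everything to one elementary lemma: if $h_t\ge0$, $w_t>0$ and $\sum_t w_t\phi(h_t/\rho_t)\,\cdot$(suitable weight)$<\infty$, then the exceptional set has density zero. Concretely, I would first prove the following. Let $a_t\ge0$ satisfy $\sum_t \omega_t a_t<\infty$, where $\omega_t>0$ and $t\,\omega_t$ is eventually nondecreasing with $T\omega_T\ge c_0 T^{\gamma}$ for some $\gamma\ge0$. Then for every $c>0$,
\[
\frac1T\#\{t\le T: a_t>c\,(t\omega_t)^{-1}\}\to0 .
\]
The proof is a Kronecker-type splitting. Let $E_T$ be the exceptional indices up to $T$. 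Fix $\delta\in(0,1)$ and split $E_T$ into $t\le\delta T$, which contributes at most $\delta T$, and $\delta T<t\le T$. For $t\in E_T$ with $t>\delta T$ we have $\omega_t a_t>c/t\ge c/T$. Hence the number of such indices is at most $(T/c)\sum_{t>\delta T}\omega_t a_t=o(T)$, because the tail of a convergent series tends to zero. Dividing by $T$ and letting $\delta\downarrow0$ gives the claim. This avoids any monotonicity requirement, so it suffices to read off $\omega_t$ and the target rate $\rho_t$ in each case. The defining condition $h_t>c\rho_t$ must translate into $a_t>c'/(t\omega_t)$.

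\emph{Fixed step.} Theorem~\ref{thm:fixed} gives $\sum_t\norm{g_t}^2<\infty$. Take $a_t=\norm{g_t}^2$ and $\omega_t=1$. Then $\norm{g_t}>c\,t^{-1/2}$ is equivalent to $a_t>c^2/t$, and the lemma yields \eqref{eq:dens-fixed} directly.

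\emph{Stochastic, $\eps>0$.} The second sum in \eqref{eq:stoch-weighted-main} gives $\sum_t\eta_0t^{-p}\norm{g_t}^2<\infty$ almost surely. Take $a_t=\norm{g_t}^2$ and $\omega_t=t^{-p}$. The event $\norm{g_t}>c\,t^{-(1-p)/2}$ is $a_t>c^2t^{-(1-p)}=c^2/(t\omega_t)$, so the lemma applies path by path on the full-measure event where the series converges.

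\emph{Stochastic, $\eps=0$.} Here $\psi_0(x)=x$, so the first sum in \eqref{eq:stoch-weighted-main} gives $\sum_t t^{-p}\norm{g_t}<\infty$ almost surely. Take $a_t=\norm{g_t}$ and $\omega_t=t^{-p}$. The event $\norm{g_t}>c\,t^{-(1-p)}$ is $a_t>c/(t\omega_t)$, which gives the rate.

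The restriction $p<1$ only ensures that the target rate is nontrivially decaying. The lemma itself does not need it, so this case causes no difficulty.

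The only genuine subtlety is the almost-sure qualifier. I would fix $\omega$ in the probability-one event from Theorem~\ref{thm:stoch} and apply the deterministic lemma to that path, so there is no exchange of limits with expectation.

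One point I expect to be a minor obstacle is resisting the temptation to use $\psi_\eps$ in the $\eps>0$ case. For large gradients $\psi_\eps(x)\approx x$, but for small $x$ we only have $\psi_\eps(x)\gtrsim x^2$, which would give the weaker rate. Using the second sum in \eqref{eq:stoch-weighted-main} sidesteps this entirely. Otherwise the argument is routine counting.
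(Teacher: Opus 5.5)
Your proposal is correct and follows essentially the paper's route: the paper proves the same summability-to-density counting lemma, stated for weights $t^{-p}$ and powers $a_t^s$ with a Kronecker-type split of $\frac1T\sum_{t\le T}t\,b_t$. It then applies this lemma with $s=2$ in the fixed-step case and the $\eps>0$ stochastic case, and with $s=1$ in the $\eps=0$ case, exactly as you do. The monotonicity and growth hypotheses on $t\omega_t$ in your lemma are never used and can be dropped.
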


To see the mechanism, the constant-step theorem yields $\sum_t\norm{g_t}^2<\infty$. Thus for any $c>0$,
\begin{equation*}
 \frac1T\#\left\{T\le t<2T:
 \norm{g_t}>\frac{c}{\sqrt T}\right\}
 \le\frac{1}{c^2}\sum_{t=T}^{2T-1}\norm{g_t}^2\longrightarrow0.
\end{equation*}
The same counting principle applies to the stochastic energy estimate~\eqref{eq:stoch-weighted-main} when $\eta_t\asymp t^{-p}$. For $\eps>0$, it yields the quadratic weighted summability $\sum_t\eta_t\norm{g_t}^2<\infty$; for $\eps=0$, the identity $\psi_0(x)=x$ gives the stronger linear bound $\sum_t\eta_t\norm{g_t}<\infty$. These two forms of energy control produce the different exponents in \eqref{eq:stoch-density-rates}. The general summability-to-density argument and the full proof of Theorem~\ref{thm:density-rates} appear in Appendix~\ref{app:proof-density-theorem}.

The admissible schedules also yield explicit limiting exponents. As $p\downarrow3/4$, for any sufficiently small $\delta>0$ one can choose parameters so that
\begin{equation*}
 \norm{g_t}=o_{\mathrm{dens}}(t^{-1/8+\delta})
 \quad(\eps>0),\quad
 \norm{g_t}=o_{\mathrm{dens}}(t^{-1/4+\delta})
 \quad(\eps=0),\quad\as.
\end{equation*}
For example, the balanced schedule $q=2p/3$ from \eqref{eq:balanced-schedule} realizes these bounds by taking $p=3/4+2\delta$ for $\eps>0$ or $p=3/4+\delta$ for $\eps=0$.

Density-one rates quantify gradient decay at asymptotically almost all late iterations of a single run, rather than identifying only a best-performing iterate or bounding an average. This gives a useful trajectory-level interpretation when optimization is inspected over many training steps: the exceptional iterations occupy a vanishing fraction of the run.

%% file: f_and_h_tikz.tex
%
%
%
%

\begingroup
\definecolor{Fcurve}{HTML}{156084}
\definecolor{Hcurve}{HTML}{BD522E}
\definecolor{Fbackground}{HTML}{8A939E}
\pgfmathsetlengthmacro{\FHpanelwidth}{0.345*\linewidth}
\pgfmathsetlengthmacro{\FHpanelgap}{0.14*\linewidth}
\pgfmathsetlengthmacro{\FHpanelheight}{0.31*\linewidth}
\pgfmathsetlengthmacro{\FHinsetwidth}{0.21*\linewidth}
\pgfmathsetlengthmacro{\FHinsetheight}{0.115*\linewidth}
\pgfplotsset{
  compat=1.18,
  FH main/.style={
    scale only axis,
    width=\FHpanelwidth,height=\FHpanelheight,
    xmin=-130,xmax=-3,
    xtick={-120,-90,-60,-30},
    xlabel={$x$},
    axis lines=left,
    axis line style={black!60},
    tick align=outside,tick style={black!60},
    tick label style={font=\small},
    label style={font=\normalsize},
    title style={font=\normalsize,yshift=2pt},
    ymajorgrids=true,grid style={black!9,line width=0.3pt},
    enlargelimits=false,
    clip=true,
  }
}


\begin{tikzpicture}
  \begin{axis}[
    FH main,name=Fplot,
    ymin=0,ymax=1.05,ytick={0,0.2,0.4,0.6,0.8,1},
    ylabel={$F(x)$},title={(a) Objective: left tail},
  ]
    \addplot[Fcurve,line width=1.15pt,no marks]
      table[x=x,y=F] {f_h_data.dat};
    \fill[Fcurve] (axis cs:-6.6861875006,0.6504684948) circle[radius=1.25pt];
    \node[anchor=north east,yshift=-4pt,text=Fcurve,font=\small] at (axis cs:-6.6861875006,0.6504684948) {$x_{t_2}$};
    \fill[Fcurve] (axis cs:-25.8110095366,0.3664006175) circle[radius=1.25pt];
    \node[below=5pt,text=Fcurve,font=\small] at (axis cs:-25.8110095366,0.3664006175) {$x_{t_3}$};
    \fill[Fcurve] (axis cs:-118.2251125130,0.2523225435) circle[radius=1.25pt];
    \node[below=4pt,text=Fcurve,font=\small] at (axis cs:-118.2251125130,0.2523225435) {$x_{t_4}$};
    \coordinate (Finset) at (rel axis cs:0.16,0.53);
  \end{axis}

  \begin{axis}[
    FH main,name=Hplot,
    at={(Fplot.east)},anchor=west,xshift=\FHpanelgap,
    ymin=-0.015,ymax=0.67,ytick={0,0.1,0.2,0.3,0.4,0.5,0.6},
    ylabel={$F'(x)=H(x)$},title={(b) Derivative: sparse peaks},
    legend style={draw=none,fill=none,font=\footnotesize,
      at={(0.02,0.80)},anchor=north west,cells={anchor=west}},
  ]
    \addplot[Hcurve,line width=1.05pt,no marks,forget plot]
      table[x=x,y=H] {f_h_data.dat};
    \addplot[Fbackground,dashed,line width=0.8pt,no marks]
      table[x=x,y=B] {f_h_data.dat};
    \addlegendentry{Baseline $1/(1+x^2)$}
    \draw[->,black!70,line width=0.7pt]
      (axis cs:-77,0.605) -- (axis cs:-111,0.605);
    \node[anchor=west,font=\footnotesize,text=black!75]
      at (axis cs:-76,0.605) {Iterates move left};
    \node[anchor=south east,yshift=3pt,text=Hcurve,font=\small] at (axis cs:-6.6861875006,0.5218793948) {$k=2$};
    \node[above=3pt,text=Hcurve,font=\small] at (axis cs:-25.8110095366,0.3348321158) {$k=3$};
    \node[above=3pt,text=Hcurve,font=\small] at (axis cs:-118.2251125130,0.2500715401) {$k=4$};
  \end{axis}

  \begin{axis}[
    at={(Finset)},anchor=south west,
    scale only axis,width=\FHinsetwidth,height=\FHinsetheight,
    xmin=-130,xmax=7,ymin=0,ymax=3.8,
    xtick={-120,0},ytick={0,3},
    axis lines=left,axis line style={black!55},
    tick align=outside,tick label style={font=\scriptsize},
    title={Wider view of $F$},title style={font=\footnotesize,yshift=-2pt},
    axis background/.style={fill=white},
    enlargelimits=false,clip=true,
  ]
    \path[fill=Fcurve!8,draw=none]
      (axis cs:-130,0) rectangle (axis cs:-3,3.8);
    \addplot[Fcurve,line width=0.8pt,no marks]
      table[x=x,y=F] {f_h_data.dat};
  \end{axis}
\end{tikzpicture}
\endgroup

%% file: appendices/a-preliminaries.tex
\section{Supporting Results for Section~\ref{sec:setup}: Preliminaries}\label{app:preliminaries}
This appendix establishes the finite Newton--Schulz map estimates and the objective-gradient bound used throughout, then records the alternative input normalization.

\subsection{Proof of Lemma~\ref{lem:map}}
\label{app:proof-lem-map}
\begin{proof}
First consider the scalar polynomial. For every real $s\ne0$,
\begin{equation}\label{eq:polynomialpositive}
 \frac{P(s)}s=a+bs^2+cs^4
 =c\left(s^2+\frac{b}{2c}\right)^2+a-\frac{b^2}{4c}
 \ge a-\frac{b^2}{4c}>0.
\end{equation}
The final strict inequality follows by direct substitution of \eqref{eq:coeff} (the difference is approximately $0.6386$). In particular, $P(s)>0$ whenever $s>0$; iteration shows $\varphi_J(s)>0$ for $s>0$. Since $P(s)=as+O(s^3)$ as $s\downarrow0$, the chain rule, or induction over $J$, gives $\varphi_J(s)=a^Js+O(s^3)$. Thus $q_J$ extends continuously and strictly positively to the compact interval $[0,1]$, proving the claim concerning $\ellJ$ and $\uJ$.

If $N=0$, then $\Phi_\eps(N)=0$ and every assertion is immediate. Otherwise choose a thin singular-value decomposition $N=U\diag(\sigma_1,\ldots,\sigma_r)V^\top$, with $\sigma_i\ge0$ and $r=\min\{m,n\}$; zero singular values are allowed. Direct multiplication gives
\begin{align*}
 Z_j&=U\diag(z_{1,j},\ldots,z_{r,j})V^\top,\\
 z_{i,0}&=\frac{\sigma_i}{R+\eps},\quad
 z_{i,j+1}=P(z_{i,j}).
\end{align*}
Because $\sigma_i\le R$, all the inputs $z_{i,0}$ lie in $[0,1]$. The coefficients in \eqref{eq:polynomialpositive} ensure that $z_{i,j}\ge0$ for all $j$. Writing $d_i=z_{i,J}$, we therefore have
\begin{equation}\label{eq:di}
  \frac{\ellJ\sigma_i}{R+\eps}\le d_i
  \le\frac{\uJ\sigma_i}{R+\eps}\quad(1\le i\le r).
\end{equation}
Squaring \eqref{eq:di}, summing over $i$, and using $\sum_i\sigma_i^2=R^2$ proves \eqref{eq:radial}; summing $\sigma_i d_i$ proves \eqref{eq:align}. Finally, $\sigma_i\ge(R+\eps)d_i/\uJ$, whence
\[
  \sigma_i d_i\ge\frac{R+\eps}{\uJ}d_i^2
  \ge\frac{\eps}{\uJ}d_i^2.
\]
Summing yields \eqref{eq:sector}.
\end{proof}

\subsection{Gradient self-bound}
\label{app:proof-lem-self-bound}
\begin{lemma}[Gradient self-bound]\label{lem:self-bound}
Under Assumption~\ref{ass:smooth}, for all $X\in\RR^{m\times n}$,
\begin{equation}\label{eq:selfbound}
  \norm{\nabla F(X)}^2\le2L\bigl(F(X)-\Finf\bigr).
\end{equation}
\end{lemma}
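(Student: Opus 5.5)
The plan is the standard descent-lemma argument applied at a single well-chosen point. Fix $X$ and set $g=\nabla F(X)$. If $g=0$ the claim is trivial, so assume $g\neq0$.

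Assumption~\ref{ass:smooth} gives $L$-Lipschitz gradients. From this I first record the quadratic upper bound
\[
 F(Y)\le F(X)+\ip{g}{Y-X}+\frac L2\norm{Y-X}^2
 \quad\text{for all }Y.
\]
To justify it, I write $F(Y)-F(X)=\int_0^1\ip{\nabla F(X+s(Y-X))}{Y-X}\,ds$. I then subtract $\ip{g}{Y-X}$, apply Cauchy--Schwarz with the Lipschitz bound $\norm{\nabla F(X+s(Y-X))-g}\le Ls\norm{Y-X}$, and integrate $\int_0^1 Ls\,ds=L/2$. This uses only that $F$ is continuously differentiable, which is assumed in Section~\ref{sec:setup}, so the fundamental theorem of calculus along the segment applies.

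Next I take the gradient step $Y=X-\frac1L g$ and substitute it into the bound. This gives
\[
 F(Y)\le F(X)-\frac1L\norm{g}^2+\frac1{2L}\norm{g}^2=F(X)-\frac1{2L}\norm{g}^2.
\]
Finally, the lower bound $F(Y)\ge\Finf$ gives $\Finf\le F(X)-\norm{g}^2/(2L)$. Rearranging yields exactly \eqref{eq:selfbound}.

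There is no real obstacle in this argument. The only point needing care is that the bound is global: it relies on $F\ge\Finf$ on all of $\RR^{m\times n}$, so that the step $Y=X-g/L$, which may leave any neighbourhood of $X$, is still covered. It also relies on the Lipschitz property holding along the whole segment $[X,Y]$, which Assumption~\ref{ass:smooth} guarantees since it holds for all pairs of points.
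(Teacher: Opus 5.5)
Your proposal is correct and is essentially the paper's proof: apply the descent lemma at $Y=X-L^{-1}\nabla F(X)$ to get $F(Y)\le F(X)-\norm{\nabla F(X)}^2/(2L)$, then use $F(Y)\ge\Finf$ and rearrange. The only difference is that you also derive the descent lemma from the integral form of $F(Y)-F(X)$, whereas the paper quotes it as standard.
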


\begin{proof}
The descent lemma applied at $Y=X-L^{-1}\nabla F(X)$ gives $F(Y)\le F(X)-\norm{\nabla F(X)}^2/(2L)$. Combining this with $F(Y)\ge\Finf$ and rearranging proves the result.
\end{proof}

\subsection{Maximum-Type Input Normalization}
\label{app:variants}
We record a normalization variant within the same exact-arithmetic analysis.

\begin{proposition}[Maximum-type input floor]\label{prop:maxnorm}
Replace \eqref{eq:ns}'s initial normalization by
\[
 Z_0=N/\max\{\frob N,\eps\},\quad\eps>0,
\]
and denote the corresponding finite-NS map by $\Phi_\eps^{\max}$. Then
\begin{align}
 \frob{\Phi_\eps^{\max}(N)}
 &\le\uJ,\quad
 \frob{\Phi_\eps^{\max}(N)}
 \ge\ellJ\frac{\frob N}{\frob N+\eps},\notag\\
 \ip{N}{\Phi_\eps^{\max}(N)}_{\mathrm F}
 &\ge\frac{\eps}{\uJ}\frob{\Phi_\eps^{\max}(N)}^2.\label{eq:maxsector}
\end{align}
Accordingly Theorem~\ref{thm:fixed} holds, with the same sufficient condition \eqref{eq:fixed-stepsize}, for this modified normalization. The diminishing-step proofs carry over as well.
\end{proposition}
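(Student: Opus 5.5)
The plan is to treat the max-type normalization as a change of the scalar normalization radius and then repeat the proof of Lemma~\ref{lem:map} singular value by singular value. Write $R=\frob N$ and $S:=\max\{R,\eps\}$. If $N=0$ the map returns $0$ and every inequality holds trivially, so assume $N\neq0$. Take a thin SVD $N=U\diag(\sigma_1,\ldots,\sigma_r)V^\top$. Exactly as in Appendix~\ref{app:proof-lem-map}, the output has the form $\Phi^{\max}_\eps(N)=U\diag(d_1,\dots,d_r)V^\top$ with $d_i=\varphi_J(\sigma_i/S)$. Since $\sigma_i\le R\le S$, every input $\sigma_i/S$ lies in $[0,1]$. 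The definitions of $\ellJ$ and $\uJ$ then give
\begin{equation*}
 \ellJ\frac{\sigma_i}{S}\le d_i\le\uJ\frac{\sigma_i}{S}.
\end{equation*}

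\textbf{Norm bounds.} Square the inequality above, sum over $i$, and use $\sum_i\sigma_i^2=R^2$. This gives
\begin{equation*}
 \ellJ\frac{R}{S}\le\frob{\Phi^{\max}_\eps(N)}\le\uJ\frac{R}{S}\le\uJ,
\end{equation*}
where the last step uses $R\le S$. For the lower bound in the statement, note that $S\le R+\eps$, so $R/S\ge R/(R+\eps)$.

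\textbf{Sector bound.} From $d_i\le\uJ\sigma_i/S$ we get $\sigma_i\ge S d_i/\uJ$. Therefore
\begin{equation*}
 \sigma_i d_i\ge\frac{S}{\uJ}d_i^2\ge\frac{\eps}{\uJ}d_i^2,
\end{equation*}
because $S\ge\eps$. Summing over $i$ gives \eqref{eq:maxsector}. Moreover, $\ip{N}{\Phi^{\max}_\eps(N)}=\sum_i\sigma_i d_i\ge\ellJ R^2/S\ge\ellJ\psi_\eps(R)$, so the alignment inequality \eqref{eq:align} also survives.

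\textbf{Transfer to the convergence theorems.} The proof of Theorem~\ref{thm:fixed}, as sketched in Section~\ref{subsec:fixed-proof-analysis}, uses the map only through three properties:
\begin{itemize}
\item the sector bound with constant $\kappa=\eps/\uJ$, which feeds Lemma~\ref{lem:fixed-descent};
\item the uniform bound $\norm{D_t}\le B$, used in Lemma~\ref{lem:tracking} and its drift term;
\item the lower radial bound, used to pass from square-summable $D_t$ to square-summable $N_t$ once $N_t$ is known to be bounded.
\end{itemize}
All three hold for $\Phi^{\max}_\eps$ with the same constants, so the Lyapunov computation and the step-size condition \eqref{eq:fixed-stepsize} go through verbatim.

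The diminishing-step argument uses the alignment bound $\ip{N}{D}\ge\ellJ\psi_\eps(\norm N)$, the bound $\norm D\le B$, and the inexact-input descent inequality. The only point needing care is this last inequality, since the paper's proof replaces $N_t$ by $g_t$ in the alignment term. Doing so requires a Lipschitz-type control of $\psi_\eps$ together with the bounds $\norm{D}\le B$ and $\ip{N}{D}\ge\ellJ\psi_\eps(\norm N)$, and all of these are valid here. I expect this bookkeeping to be the main, though mild, obstacle: one must check that the inexact-input descent step never invoked the exact $R+\eps$ normalization beyond these inequalities.
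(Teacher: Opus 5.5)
Your proposal is correct and follows the paper's proof essentially line by line. It uses the same singular-value bounds $\ellJ\sigma_i/\tau\le d_i\le\uJ\sigma_i/\tau$ with $\tau=\max\{\frob N,\eps\}$, the same comparisons $\eps\le\tau\le\frob N+\eps$ to recover the radial, sector, and alignment inequalities, and the same observation that the fixed- and diminishing-step arguments use the map only through these estimates. One inaccuracy that does not affect the argument: Lemma~\ref{lem:tracking} needs only $\norm{g_t-g_{t+1}}\le L\eta\norm{D_t}$, not the uniform bound $\norm{D_t}\le B$, which is used instead in the diminishing-step drift and descent estimates.
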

\begin{proof}
Set $\tau=\max\{\frob N,\eps\}\ge\eps$. For singular values $\sigma_i$ of $N$, one has $\sigma_i/\tau\in[0,1]$ and the finite polynomial bounds give $\ellJ\sigma_i/\tau\le d_i\le\uJ\sigma_i/\tau$. Thus $\norm{(d_i)}_2\le\uJ\frob N/\tau\le\uJ$ and $\norm{(d_i)}_2\ge\ellJ\frob N/\tau \ge\ellJ\frob N/(\frob N+\eps)$. The inequality $\sigma_id_i\ge\tau d_i^2/\uJ\ge \eps d_i^2/\uJ$ proves \eqref{eq:maxsector}. These reproduce all the estimates invoked in the fixed-step argument. The alignment inequality used for diminishing steps also follows from $\sum_i\sigma_id_i\ge\ellJ\frob N^2/\tau \ge\ellJ\frob N^2/(\frob N+\eps)$.
\end{proof}

%% file: appendices/b-fixed-step-proofs.tex
\section{Proofs for Section~\ref{sec:fixed}: Fixed-Step Stationarity}\label{app:fixed-proofs}
We first establish the unstabilized two-cycle, then give the descent--tracking Lyapunov proof of Theorem~\ref{thm:fixed}. We finish with the square-summable-error extension and scalar local stability.

\subsection{Proof of the Unstabilized Two-Cycle}
\label{app:cycle-proof}
The quadratic two-cycle in Proposition~\ref{prop:cycle} holds for all fixed momentum and pre-NS mixing coefficients in the stated ranges. Its explicit scalar recursion is as follows.

\begin{proof}[Proof of Proposition~\ref{prop:cycle}]
For every nonzero scalar $n$, zero-stabilizer normalization gives $Z_0=\sgn(n)$; the scalar recurrence is $\varphi_J$ and hence
\begin{equation}\label{eq:scalar-sign}
   \Phi_0(n)=c_J\sgn(n).
\end{equation}
If $w_s=(-1)^{s-1}\delta/2$ for $1\le s\le t$, then unrolling $M_t=\beta M_{t-1}+(1-\beta)w_t$ yields
\begin{align*}
 M_t&=(1-\beta)\frac{\delta}{2}
   \sum_{s=1}^t\beta^{t-s}(-1)^{s-1}\\
 &=(-1)^{t-1}\frac{(1-\beta)\delta}{2(1+\beta)}
      \bigl[1-(-\beta)^t\bigr].
\end{align*}
For $\beta\in[0,1)$ the bracket is strictly positive, so $M_t$ and $g_t=w_t$ have the same nonzero sign. Their convex mixture $N_t=\theta M_t+(1-\theta)g_t$ has that sign as well. By \eqref{eq:scalar-sign}, $w_{t+1}=w_t-\eta c_J\sgn(w_t)=-w_t$. This closes the induction beginning at $w_1=\delta/2$.
\end{proof}

\subsection{Descent--Tracking Lyapunov Proof of Theorem~\ref{thm:fixed}}
\subsubsection{Proof of Lemma~\ref{lem:fixed-descent}}
\label{app:proof-lem-fixed-descent}
Use the fixed parameters and constants from Section~\ref{sec:fixed}, under Assumption~\ref{ass:smooth}.
\begin{proof}
The smoothness inequality and update~\eqref{eq:update} give
\[
 F(W_{t+1})-F(W_t)
 \le-\eta\ip{g_t}{D_t}+\frac{L\eta^2}{2}\norm{D_t}^2.
\]
Writing $g_t=N_t-\theta e_t$, applying~\eqref{eq:sector}, and using~\eqref{eq:constants} yield
\begin{equation}\label{eq:fixed-descent0}
 F(W_{t+1})-F(W_t)
 \le-\eta\kappa\norm{D_t}^2
 +\eta\ip{\theta e_t}{D_t}
 +\frac{L\eta^2}{2}\norm{D_t}^2.
\end{equation}
Young's inequality gives $\ip r d\le\kappa\norm d^2/2+\norm r^2/(2\kappa)$. Since $\norm{\theta e_t}^2\le\norm{e_t}^2$, substituting this estimate into~\eqref{eq:fixed-descent0} proves \eqref{eq:fixed-descent}.
\end{proof}

\subsubsection{Proof of Lemma~\ref{lem:tracking}}
\label{app:proof-lem-tracking}
\begin{proof}
The recursion~\eqref{eq:ema}, evaluated at time $t+1$, gives
\begin{equation}\label{eq:erecur}
 e_{t+1}=\beta(e_t+g_t-g_{t+1}).
\end{equation}
Write $\Delta_t=g_t-g_{t+1}$. Since $0\le\beta<1$,
\begin{equation}\label{eq:tracking-jensen}
 \norm{e_{t+1}}^2
 =\beta^2\norm{e_t+\Delta_t}^2
 \le\beta\norm{e_t+\Delta_t}^2.
\end{equation}
For $0<\beta<1$, Young's squared-norm inequality with $\rho=\alpha/(2\beta)$ gives
\begin{align*}
 \beta\norm{e_t+\Delta_t}^2
 &\le\beta(1+\rho)\norm{e_t}^2
   +\beta(1+\rho^{-1})\norm{\Delta_t}^2\\
 &=\frac{1+\beta}{2}\norm{e_t}^2
   +\frac{\beta(1+\beta)}{\alpha}\norm{\Delta_t}^2.
\end{align*}
By Assumption~\ref{ass:smooth} and~\eqref{eq:update}, $\norm{\Delta_t}\le L\eta\norm{D_t}$; substituting into \eqref{eq:tracking-jensen} proves~\eqref{eq:tracking}. For $\beta=0$, \eqref{eq:erecur} reduces to $e_{t+1}=0$, and the same estimate holds.
\end{proof}

\subsubsection{Quantitative Lyapunov decrease}
\label{app:proof-lem-lyap}
\begin{lemma}[Quantitative Lyapunov decrease]\label{lem:lyap}
Define
\begin{equation*}
 V_t=F(W_t)+\frac{2\eta}{\kappa\alpha}\norm{e_t}^2.
\end{equation*}
If \eqref{eq:fixed-stepsize} holds, then for all $t\ge1$,
\begin{equation}\label{eq:lyap}
 V_{t+1}-V_t
 \le-\frac{\eta\kappa}{4}\norm{D_t}^2
     -\frac{\eta}{2\kappa}\norm{e_t}^2.
\end{equation}
\end{lemma}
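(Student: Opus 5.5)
The plan is to add Lemma~\ref{lem:fixed-descent} to $\lambda$ times Lemma~\ref{lem:tracking}, with $\lambda=2\eta/(\kappa\alpha)$, and then check the coefficients. Since $V_{t+1}-V_t=F(W_{t+1})-F(W_t)+\lambda(\norm{e_{t+1}}^2-\norm{e_t}^2)$, substituting \eqref{eq:fixed-descent} and \eqref{eq:tracking} gives
\begin{equation*}
 V_{t+1}-V_t\le-\Bigl(\frac{\eta\kappa}{2}-\frac{L\eta^2}{2}-\lambda\frac{\beta(1+\beta)L^2\eta^2}{\alpha}\Bigr)\norm{D_t}^2
 +\Bigl(\frac{\eta}{2\kappa}-\lambda\frac{1-\beta}{2}\Bigr)\norm{e_t}^2 .
\end{equation*}

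\textbf{The $\norm{e_t}^2$ coefficient.} Here $\lambda(1-\beta)/2=\lambda\alpha/2=\eta/\kappa$. The coefficient is therefore exactly $\eta/(2\kappa)-\eta/\kappa=-\eta/(2\kappa)$, which is the second term in \eqref{eq:lyap}.

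\textbf{The $\norm{D_t}^2$ coefficient.} We need
\begin{equation*}
 \frac{L\eta^2}{2}+\frac{2\beta(1+\beta)L^2\eta^3}{\kappa\alpha^2}\le\frac{\eta\kappa}{4}.
\end{equation*}
Dividing by $\eta$, the two terms are bounded separately using \eqref{eq:fixed-stepsize}, namely $L\eta\le\kappa\alpha/4$.
\begin{itemize}
\item The first term satisfies $L\eta/2\le\kappa\alpha/8\le\kappa/8$.
\item In the second term, $L^2\eta^2\le\kappa^2\alpha^2/16$, so it is at most $2\beta(1+\beta)\kappa/16=\beta(1+\beta)\kappa/8$.
\end{itemize}

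\textbf{Main obstacle.} Since $\beta(1+\beta)$ can approach $2$, these two separate bounds sum to more than $\kappa/4$. I would fix this by keeping the factor $\alpha$ in the first bound. Its total is then
\begin{equation*}
 \frac{\kappa}{8}\bigl(\alpha+\beta(1+\beta)\bigr)=\frac{\kappa}{8}\bigl(1+\beta^2\bigr)\le\frac{\kappa}{4},
\end{equation*}
using $\alpha=1-\beta$ and $\beta<1$. This closes the inequality.

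Putting the two coefficients together yields \eqref{eq:lyap}. The case $\beta=0$ is covered as well, since Lemma~\ref{lem:tracking} still applies there.
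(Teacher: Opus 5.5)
Your proposal is correct and follows the paper's proof essentially verbatim. You add $\frac{2\eta}{\kappa\alpha}$ times Lemma~\ref{lem:tracking} to Lemma~\ref{lem:fixed-descent}, obtain the $\norm{e_t}^2$ coefficient $-\eta/(2\kappa)$ exactly, and bound the $\norm{D_t}^2$ cost by $\frac{\kappa}{8}\bigl(\alpha+\beta(1+\beta)\bigr)=\frac{\kappa}{8}(1+\beta^2)\le\frac{\kappa}{4}$; the paper does the same computation through the substitution $x=L\eta/(\kappa\alpha)\le 1/4$, arriving at $(3-\beta^2)/8\ge 1/4$.
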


\begin{proof}
Multiply \eqref{eq:tracking} by $C=2\eta/(\kappa\alpha)$ and add \eqref{eq:fixed-descent}. The coefficient of $\norm{e_t}^2$ is
\[
 \frac{\eta}{2\kappa}
  +C\left(\frac{1+\beta}{2}-1\right)
  =\frac{\eta}{2\kappa}-\frac{\eta}{\kappa}
  =-\frac{\eta}{2\kappa}.
\]
The remaining negative coefficient of $\norm{D_t}^2$ is
\begin{equation*}
 Q=\frac{\eta\kappa}{2}-\frac{L\eta^2}{2}
   -\frac{2\beta(1+\beta)L^2\eta^3}{\kappa\alpha^2}.
\end{equation*}
To verify that it is positive under the explicit step-size condition, put $x=L\eta/(\kappa\alpha)$, so $0<x\le1/4$. Then
\begin{align*}
\frac{Q}{\eta\kappa}
 &=\frac12-\frac{\alpha x}{2}-2\beta(1+\beta)x^2\\
 &\ge\frac12-\frac{\alpha}{8}-\frac{\beta(1+\beta)}8
  =\frac{3-\beta^2}{8}\ge\frac14.
\end{align*}
This lower bound holds also at $\beta=0$. Substituting $Q\ge\eta\kappa/4$ into the combined inequality gives \eqref{eq:lyap}.
\end{proof}

\subsubsection{Proof of Theorem~\ref{thm:fixed}}
\label{app:proof-thm-fixed}
\begin{proof}
\emph{Part I: Lyapunov summability.} For exact gradients, \eqref{eq:lyap} and the lower bound \(V_{T+1}\ge F(W_{T+1})\ge\Finf\) give
\begin{equation}\label{eq:telescoped}
 \frac{\eta\kappa}{4}\sum_{t=1}^{T}\norm{D_t}^2
 +\frac{\eta}{2\kappa}\sum_{t=1}^{T}\norm{e_t}^2
 \le V_1-\Finf=:H<\infty.
\end{equation}
Thus the first two series in \eqref{eq:fixed-result} converge. Moreover, \eqref{eq:lyap} ensures \(V_t\le V_1\), hence
\begin{equation}\label{eq:Ftbound}
 0\le F(W_t)-\Finf\le V_t-\Finf\le H\quad(t\ge1).
\end{equation}

\emph{Part II: square-summability of gradients.} Lemma~\ref{lem:self-bound} and \eqref{eq:Ftbound} give
\begin{equation*}
 \norm{g_t}\le\sqrt{2LH}\quad(t\ge1).
\end{equation*}
By \eqref{eq:telescoped}, also \(\norm{e_t}\le\sqrt{2\kappa H/\eta}\) for every $t$. Since \(N_t=g_t+\theta e_t\), the inputs are uniformly bounded:
\begin{equation*}
 H_N:=\sqrt{2LH}+\sqrt{2\kappa H/\eta},
 \quad\norm{N_t}\le H_N\quad(t\ge1).
\end{equation*}
The radial estimate \eqref{eq:radial} therefore yields
\begin{equation}\label{eq:NvsD}
 \norm{N_t}^2
 \le\frac{(H_N+\eps)^2}{\ellJ^2}\norm{D_t}^2.
\end{equation}
In particular, \(\sum_t\norm{N_t}^2<\infty\). Since \(g_t=N_t-\theta e_t\),
\begin{equation*}
 \norm{g_t}^2\le2\norm{N_t}^2+2\norm{e_t}^2.
\end{equation*}
The third summability assertion in \eqref{eq:fixed-result} now follows. Every square-summable sequence converges to zero; thus \(\norm{g_t}\to0\), as claimed.
\end{proof}

\subsection{Fixed-step extension to square-summable gradient perturbations}
\label{app:fixed-perturbations}
The exact-gradient restriction in Theorem~\ref{thm:fixed} may be relaxed to perturbations of finite total squared energy. We keep this extension separate because the main theorem and its proof require no stochastic assumptions.

\begin{theorem}[Fixed-step stationarity with square-summable errors]
\label{thm:fixed-perturbed}
Suppose Assumption~\ref{ass:smooth} holds. Use \(G_t=g_t+\zeta_t\), fix finite \(J\ge1\), \(\eps>0\), \(\beta\in[0,1)\), and \(\theta\in[0,1]\), and choose the constant step size according to \eqref{eq:fixed-stepsize}. If the perturbations satisfy
\begin{equation}\label{eq:pathwise-noise}
 \sum_{t=1}^{\infty}\norm{\zeta_t}^2<\infty,
\end{equation}
then, pathwise,
\begin{equation}\label{eq:perturbed-result}
 \sum_{t=1}^{\infty}\norm{D_t}^2<\infty,\quad
 \sum_{t=1}^{\infty}\norm{M_t-g_t}^2<\infty,\quad
 \sum_{t=1}^{\infty}\norm{g_t}^2<\infty.
\end{equation}
In particular, \(\norm{g_t}\to0\). If the perturbations are random and satisfy
\begin{equation}\label{eq:integrated-noise}
 \sum_{t=1}^{\infty}\EE\norm{\zeta_t}^2<\infty,
\end{equation}
then \eqref{eq:perturbed-result} holds almost surely and
\begin{equation}\label{eq:fixed-MS}
 \EE\norm{\nabla F(W_t)}^2\longrightarrow0.
\end{equation}
Neither conditional unbiasedness nor independence is required.
\end{theorem}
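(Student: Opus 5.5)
The plan is to rerun the descent--tracking Lyapunov argument of Theorem~\ref{thm:fixed} with the perturbation carried as an additive error. We keep the tracking error relative to the true gradient, $e_t:=M_t-g_t$. The NS input becomes $N_t=\theta M_t+(1-\theta)G_t=g_t+\theta e_t+(1-\theta)\zeta_t$, so $r_t:=N_t-g_t$ satisfies $\norm{r_t}^2\le 2\norm{e_t}^2+2\norm{\zeta_t}^2$.

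\emph{Objective descent.} In the proof of Lemma~\ref{lem:fixed-descent}, replace $\theta e_t$ by $r_t$ and apply the same Young step. This gives
\[
F(W_{t+1})-F(W_t)\le-\Bigl(\tfrac{\eta\kappa}{2}-\tfrac{L\eta^2}{2}\Bigr)\norm{D_t}^2+\tfrac{\eta}{2\kappa}\norm{r_t}^2
\le\cdots+\tfrac{\eta}{\kappa}\norm{e_t}^2+\tfrac{\eta}{\kappa}\norm{\zeta_t}^2 .
\]

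\emph{Tracking.} The recursion becomes $e_{t+1}=\beta(e_t+\Delta_t)+(1-\beta)\zeta_{t+1}$ with $\Delta_t=g_t-g_{t+1}$. First, convexity of the squared norm gives $\norm{e_{t+1}}^2\le\beta\norm{e_t+\Delta_t}^2+(1-\beta)\norm{\zeta_{t+1}}^2$. Then the Young split from Lemma~\ref{lem:tracking} yields
\[
\norm{e_{t+1}}^2\le\tfrac{1+\beta}{2}\norm{e_t}^2+\tfrac{\beta(1+\beta)L^2\eta^2}{\alpha}\norm{D_t}^2+\alpha\norm{\zeta_{t+1}}^2 .
\]
The descent step now doubles the $\norm{e_t}^2$ coefficient, so the Lyapunov weight must double too: take $\lambda=4\eta/(\kappa\alpha)$. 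The $e_t$ coefficient is then $\eta/\kappa-\lambda\alpha/2=-\eta/\kappa$. The $D_t$ coefficient becomes $\eta\kappa\bigl[\tfrac12-\tfrac{\alpha x}{2}-4\beta(1+\beta)x^2\bigr]$ with $x=L\eta/(\kappa\alpha)\le1/4$. This is at least $\eta\kappa(3-\alpha-2\beta-2\beta^2)/8$, which is nonnegative but can degenerate as $\beta\to1$; it is the one place the argument can break.

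This is the step I expect to be the main obstacle. First I would try a sharper Young parameter in the tracking inequality, or split the descent cross term more carefully as $\ip{\theta e_t}{D_t}$ plus $\ip{(1-\theta)\zeta_t}{D_t}$ with distinct weights. That restores a strictly positive coefficient $c\,\eta\kappa$ under \eqref{eq:fixed-stepsize}. If this fails, I would accept a slightly smaller constant multiple of the stepsize in the statement. Granting this, the result is
\[
V_{t+1}-V_t\le -c\eta\kappa\norm{D_t}^2-\tfrac{\eta}{\kappa}\norm{e_t}^2+C_1\norm{\zeta_t}^2+C_2\norm{\zeta_{t+1}}^2 .
\]

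\emph{Pathwise conclusion.} Telescope this inequality using $V_t\ge\Finf$ and \eqref{eq:pathwise-noise}. This gives $\sum_t\norm{D_t}^2<\infty$ and $\sum_t\norm{e_t}^2<\infty$, and also $\sup_tV_t\le V_1+C\sum_s\norm{\zeta_s}^2=:H'$. Lemma~\ref{lem:self-bound} then bounds $\norm{g_t}$. Square-summability bounds $\norm{e_t}$, and $\norm{\zeta_t}$ is bounded by the same argument, so $\norm{N_t}\le H_N$ uniformly. The radial bound \eqref{eq:radial} gives \eqref{eq:NvsD}, hence $\sum_t\norm{N_t}^2<\infty$. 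Finally $g_t=N_t-\theta e_t-(1-\theta)\zeta_t$ is square-summable, which proves \eqref{eq:perturbed-result}.

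\emph{Random case.} Condition \eqref{eq:integrated-noise} and Tonelli give \eqref{eq:pathwise-noise} almost surely, so the pathwise result applies almost surely. Nowhere is unbiasedness used. For mean square convergence, take expectations in the telescoped inequality. This gives $\sum_t\EE\norm{D_t}^2<\infty$ and $\sum_t\EE\norm{e_t}^2<\infty$. It also gives the uniform bound $\EE[F(W_t)-\Finf]\le V_1-\Finf+C\sum_s\EE\norm{\zeta_s}^2$, so by Lemma~\ref{lem:self-bound}, $\EE\norm{g_t}^2$ is bounded.

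To reach $\EE\norm{g_t}^2\to0$, I would avoid the random constant $H_N$ and use smoothness directly. Since $\norm{g_{t+1}-g_t}\le L\eta B$, the gradient cannot change quickly. The key step is to show $\sum_t\EE\norm{g_t}^2<\infty$, which forces $\EE\norm{g_t}^2\to0$ because the terms are nonnegative. I would do this through the descent inequality with alignment \eqref{eq:align}, which gives summability of $\eta\,\EE\,\psi_\eps(\norm{g_t})$. Since $\psi_\eps(x)\ge x^2/(x+\eps)$ and $\norm{g_t}$ is bounded in $L^2$, a truncation at level $K$ controls $\EE\norm{g_t}^2$. The tail above $K$ is handled by uniform integrability coming from the $\EE F$ bound, and letting $K\to\infty$ gives \eqref{eq:fixed-MS}.
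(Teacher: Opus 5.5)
\textbf{Pathwise part.} This part follows the paper, but the ``main obstacle'' you flag does not exist. The paper avoids your factor $2$ by convexity of the squared norm: $\norm{\theta e_t+(1-\theta)\zeta_t}^2\le\theta\norm{e_t}^2+(1-\theta)\norm{\zeta_t}^2\le\norm{e_t}^2+\norm{\zeta_t}^2$. The original weight $2\eta/(\kappa\alpha)$ and the coefficient computation of Lemma~\ref{lem:lyap} then apply verbatim, leaving only the noise terms $\frac{\eta}{2\kappa}\norm{\zeta_t}^2+\frac{2\eta}{\kappa}\norm{\zeta_{t+1}}^2$.

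Even with your doubled weight, the step goes through; you made an arithmetic slip. At $x=1/4$ the bracket equals $\frac{4-\alpha-2\beta(1+\beta)}{8}=\frac{(1-\beta)(3+2\beta)}{8}>0$, not $\frac{3-\alpha-\cdots}{8}$. For each fixed $\beta<1$ this is a positive constant. So no sharper Young parameter is needed. Your fallback of shrinking the step size would in any case not prove the theorem as stated.

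\textbf{Mean-square part.} Here there is a genuine gap, and the route you sketch does not work.
\begin{itemize}
\item The alignment descent inequality \eqref{eq:inexact-dec} with a constant step carries the non-summable term $\frac{LB^2}{2}\eta^2$ at every step.
\item Its error term $(\ellJ+B)\eta\norm{r_t}$ is linear in $\norm{r_t}$. The hypotheses only make $\norm{r_t}$ square-summable in expectation (think of $\norm{\zeta_t}\asymp 1/t$). So you do not obtain $\sum_t\eta\,\EE\psi_\eps(\norm{g_t})<\infty$.
\item ``Uniform integrability coming from the $\EE F$ bound'' does not follow. After taking expectations in the telescoped inequality you only know $\sup_t\EE[F(W_t)-\Finf]<\infty$. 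This says $\norm{g_t}^2$ is bounded in $L^1$, which does not make $\{\norm{g_t}^2\}$ uniformly integrable. Hence the tail above $K$ is not controlled uniformly in $t$.
\end{itemize}
Your stated target $\sum_t\EE\norm{g_t}^2<\infty$ is also stronger than needed, and nothing in the sketch delivers it.

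\textbf{The missing observation.} Keep the supremum inside the expectation. The telescoped Lyapunov bound is pathwise and uniform in $t$: $\sup_t\bigl(F(W_t)-\Finf\bigr)\le H'-\Finf$, where $H'=V_1+C\sum_s\norm{\zeta_s}^2$. Moreover $\EE H'<\infty$, for three reasons:
\begin{itemize}
\item $W_1$ and $M_0$ are deterministic;
\item $e_1=\beta(M_0-g_1)+\alpha\zeta_1$ has finite second moment;
\item \eqref{eq:integrated-noise} holds.
\end{itemize}
Lemma~\ref{lem:self-bound} then gives $\norm{g_t}^2\le 2L(H'-\Finf)$ for every $t$, which is a single integrable dominating variable. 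Dominated convergence, together with $\norm{g_t}\to0$ almost surely, yields \eqref{eq:fixed-MS}. That the dominating constant is random is harmless, since only its integrability matters, and $H_N$ is never needed for this step. This is exactly how the paper concludes.
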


\begin{proof}
Set \(\alpha=1-\beta\), \(\kappa=\eps/\uJ\), and \(e_t=M_t-g_t\). Now \(N_t-g_t=\theta e_t+(1-\theta)\zeta_t\). The sector inequality \eqref{eq:sector}, smoothness, Young's inequality, and \(\norm{\theta e_t+(1-\theta)\zeta_t}^2 \le\norm{e_t}^2+\norm{\zeta_t}^2\) give
\begin{equation}\label{eq:perturbed-descent}
 F(W_{t+1})-F(W_t)
 \le-\left(\frac{\eta\kappa}{2}-\frac{L\eta^2}{2}\right)
      \norm{D_t}^2
    +\frac{\eta}{2\kappa}
      \bigl(\norm{e_t}^2+\norm{\zeta_t}^2\bigr).
\end{equation}
The exact tracking recursion is
\begin{equation*}
 e_{t+1}=\beta(e_t+g_t-g_{t+1})+\alpha\zeta_{t+1}.
\end{equation*}
By convexity of the squared norm and Young's inequality, for \(0<\beta<1\),
\begin{equation}\label{eq:perturbed-tracking}
 \norm{e_{t+1}}^2
 \le\frac{1+\beta}{2}\norm{e_t}^2
    +\frac{\beta(1+\beta)L^2\eta^2}{\alpha}\norm{D_t}^2
    +\alpha\norm{\zeta_{t+1}}^2.
\end{equation}
For \(\beta=0\), this bound follows directly from \(e_{t+1}=\zeta_{t+1}\).

With \(V_t=F(W_t)+2\eta\norm{e_t}^2/(\kappa\alpha)\), the same coefficient calculation as in Appendix~\ref{app:proof-lem-lyap} (which uses only the step-size restriction) combines \eqref{eq:perturbed-descent} and \eqref{eq:perturbed-tracking} into
\begin{align}\label{eq:perturbed-lyap}
 V_{t+1}-V_t\le{}&
 -\frac{\eta\kappa}{4}\norm{D_t}^2
 -\frac{\eta}{2\kappa}\norm{e_t}^2\\
 &+\frac{\eta}{2\kappa}\norm{\zeta_t}^2
 +\frac{2\eta}{\kappa}\norm{\zeta_{t+1}}^2.\notag
\end{align}
Fix any trajectory satisfying \eqref{eq:pathwise-noise} and put \(Z:=\sum_{t\ge1}\norm{\zeta_t}^2\) and \(H:=V_1-\Finf+5\eta Z/(2\kappa)<\infty\). Telescoping \eqref{eq:perturbed-lyap} and using \(V_{T+1}\ge\Finf\) show that
\begin{equation*}
 \frac{\eta\kappa}{4}\sum_{t=1}^{T}\norm{D_t}^2
 +\frac{\eta}{2\kappa}\sum_{t=1}^{T}\norm{e_t}^2
 \le H\quad(T\ge1).
\end{equation*}
The same inequality implies \(0\le F(W_t)-\Finf\le H\) for all $t$. By Lemma~\ref{lem:self-bound}, \(\norm{g_t}\le\sqrt{2LH}\). Also \(\norm{e_t}\le\sqrt{2\kappa H/\eta}\) and \(\norm{\zeta_t}\le\sqrt Z\). Consequently,
\[
 \norm{N_t}\le H_N
 :=\sqrt{2LH}+\sqrt{2\kappa H/\eta}+\sqrt Z<\infty.
\]
Using \eqref{eq:radial} as in \eqref{eq:NvsD} gives \(\sum_t\norm{N_t}^2<\infty\). Finally,
\[
 \norm{g_t}^2\le
 3\norm{N_t}^2+3\norm{e_t}^2+3\norm{\zeta_t}^2,
\]
so the third series in \eqref{eq:perturbed-result} converges.

Under \eqref{eq:integrated-noise}, Tonelli's theorem gives \(Z<\infty\) almost surely, hence the pathwise conclusions hold almost surely. Deterministic finite initialization and \(\EE\norm{\zeta_1}^2<\infty\) ensure \(\EE V_1<\infty\), so \(\EE H<\infty\). For every $t$, Lemma~\ref{lem:self-bound} gives \(0\le\norm{g_t}^2\le2LH\). Dominated convergence proves \eqref{eq:fixed-MS}.
\end{proof}

\subsubsection{Growing Batches with Fixed Hyperparameters}
\begin{corollary}[Growing batches with constant hyperparameters]
\label{cor:growingbatch}
Under the assumptions of Theorem~\ref{thm:fixed-perturbed}, if \(\EE\norm{\zeta_t}^2\le\sigma^2/b_t\) and \(\sum_t b_t^{-1}<\infty\), then full-sequence stationarity holds almost surely and in mean square. A sufficient choice is \(b_t=\lceil t^{1+\delta}\rceil\) for \(\delta>0\).
\end{corollary}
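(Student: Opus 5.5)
This is a direct corollary of Theorem~\ref{thm:fixed-perturbed}. The plan is to check its integrated-noise hypothesis \eqref{eq:integrated-noise} for the batch-size schedule and then read off the conclusions.

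First, with \(G_t=g_t+\zeta_t\) and \(\EE\norm{\zeta_t}^2\le\sigma^2/b_t\), linearity of expectation and Tonelli's theorem give
\[
 \sum_{t=1}^{\infty}\EE\norm{\zeta_t}^2\le\sigma^2\sum_{t=1}^{\infty}b_t^{-1}<\infty .
\]
This is exactly \eqref{eq:integrated-noise}. Theorem~\ref{thm:fixed-perturbed} then says the square-summability conclusions \eqref{eq:perturbed-result} hold almost surely, so \(\norm{g_t}\to0\) almost surely. It also gives \(\EE\norm{\nabla F(W_t)}^2\to0\) by \eqref{eq:fixed-MS}. This is full-sequence stationarity both almost surely and in mean square.

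The theorem needs no unbiasedness or independence of the \(\zeta_t\), so the variance bound alone is enough; mini-batch correlations across iterations cause no difficulty. The one point to check is \(\EE\norm{\zeta_1}^2<\infty\), which the proof of Theorem~\ref{thm:fixed-perturbed} uses to make \(\EE V_1\) finite. It follows from the hypothesis, since \(\EE\norm{\zeta_1}^2\le\sigma^2/b_1<\infty\). No other step is substantive.

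For the sufficient choice \(b_t=\lceil t^{1+\delta}\rceil\), we have \(b_t^{-1}\le t^{-(1+\delta)}\). The bound \(\sum_t t^{-(1+\delta)}<\infty\) holds for \(\delta>0\) by the \(p\)-series test, so the summability condition holds and the argument above applies.
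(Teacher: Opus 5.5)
Your proposal is correct and follows the paper's proof exactly: the variance bound together with \(\sum_t b_t^{-1}<\infty\) gives the integrated-noise condition \eqref{eq:integrated-noise}, and Theorem~\ref{thm:fixed-perturbed} then yields full-sequence stationarity almost surely and in mean square. Your extra checks are correct details that the paper leaves implicit: finiteness of \(\EE\norm{\zeta_1}^2\), needed for \(\EE V_1<\infty\), and the bound \(b_t\ge t^{1+\delta}\) for the sufficient choice.
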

\begin{proof}
The moment and batch conditions imply \eqref{eq:integrated-noise}. Apply Theorem~\ref{thm:fixed-perturbed}.
\end{proof}

\subsection{Exact Local Stability Region on Quadratics}\label{app:obstructions}
We characterize the precise Schur stability region of the scalar finite-NS dynamics. This criterion describes local behavior around the stationary point and complements the global Lyapunov condition for smooth lower-bounded objectives.

\begin{proposition}[Strict Schur stability of the scalar linearization]
\label{prop:schur}
Let $F(w)=\lambda w^2/2$ with $\lambda>0$ and use exact gradients, fixed $\eps>0$, $\eta>0$, $\beta\in[0,1)$, and $\theta\in[0,1]$. The linearized state dynamics at the origin are strictly Schur stable if and only if
\begin{equation}\label{eq:stability-region}
 0<s:=\frac{\eta\lambda a^J}{\eps}
 <\frac{2(1+\beta)}{1+\beta-2\theta\beta}.
\end{equation}
Within this open range the nonlinear origin is locally asymptotically stable; above the upper endpoint it is unstable. The equality case is not classified by this linearization.
\end{proposition}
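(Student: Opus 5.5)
We reduce to a two-dimensional linear recursion and apply the Jury (Schur--Cohn) criterion for real $2\times2$ matrices. Throughout, $m_t$ denotes the scalar momentum $M_t$ of \eqref{eq:ema}.

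\emph{Step 1: linearize the scalar map.} For $\eps>0$ and scalar $n$, $Z_0=n/(|n|+\eps)$. This is $C^1$ near $0$ with derivative $1/\eps$ there, since $n|n|$ is $C^1$. By the expansion $\varphi_J(s)=a^Js+O(s^3)$ from the proof of Lemma~\ref{lem:map}, together with oddness of $P$, we get
\[
\Phi_\eps(n)=\frac{a^J}{\eps}\,n+o(|n|),
\]
and $\Phi_\eps$ is $C^1$ near the origin. The state is $x_t=(w_t,m_{t-1})$. With exact gradients $g_t=\lambda w_t$, the updates \eqref{eq:ema}--\eqref{eq:update} define a $C^1$ map $x_t\mapsto x_{t+1}$ with fixed point $0$.

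\emph{Step 2: compute the Jacobian.} Rescale $\mu_t:=m_t/\lambda$ and put $s=\eta\lambda a^J/\eps$. The linearized recursion is
\[
\mu_t=\beta\mu_{t-1}+(1-\beta)w_t,\qquad
w_{t+1}=w_t-s\bigl[\theta\mu_t+(1-\theta)w_t\bigr].
\]
Substituting the first relation into the second gives the matrix
\[
A=\begin{pmatrix}1-s(1-\theta\beta)&-s\theta\beta\\ 1-\beta&\beta\end{pmatrix}.
\]
Its trace and determinant are
\[
T=1+\beta-s(1-\theta\beta),\qquad
D=\beta\bigl(1-s(1-\theta)\bigr),
\]
where the determinant simplifies after expansion.

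\emph{Step 3: apply the Jury conditions.} Both eigenvalues of $A$ lie strictly inside the unit disc iff $1-T+D>0$, $1+T+D>0$ and $|D|<1$. Each condition can be checked directly.
\begin{itemize}
\item $1-T+D=s(1-\beta)$, which is positive iff $s>0$.
\item $1+T+D=2(1+\beta)-s(1+\beta-2\theta\beta)$. Since $1+\beta-2\theta\beta\ge1-\beta>0$, this is positive iff $s<2(1+\beta)/(1+\beta-2\theta\beta)$.
\item $D<1$ holds automatically when $s>0$ and $\beta<1$.
\item $D>-1$ requires $s\beta(1-\theta)<1+\beta$. This follows from the upper bound on $s$, because $2\beta(1-\theta)\le1+\beta-2\theta\beta$ is equivalent to $\beta\le1$.
\end{itemize}
Hence strict Schur stability is equivalent to \eqref{eq:stability-region}.

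\emph{Step 4: transfer to the nonlinear system.} The map is $C^1$ with Jacobian $A$ at the fixed point. If $\rho(A)<1$, Lyapunov's indirect method for maps gives local asymptotic, indeed exponential, stability. Above the upper endpoint we have $1+T+D<0$, i.e.\ $\chi(-1)<0$ for the characteristic polynomial $\chi$. Since $\chi(\lambda)\to+\infty$ as $\lambda\to-\infty$, $A$ has a real eigenvalue below $-1$, and the instability half of the linearization theorem for $C^1$ maps gives instability. At the endpoint $\chi(-1)=0$, so $A$ has the eigenvalue $-1$ and the linear test is inconclusive.

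\emph{Main obstacle.} The delicate point is Step 1 and its use in Step 4. The normalization contains $|n|$, so $\Phi_\eps$ is not smooth at $0$. The argument therefore needs $\Phi_\eps$ to be genuinely $C^1$ there, not merely differentiable, so that the linearization theorems apply; I would verify continuity of the derivative of $n\mapsto n/(|n|+\eps)$ explicitly.
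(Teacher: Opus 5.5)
Your proposal is correct and follows the paper's proof essentially step for step. It uses the same state $(w_t,M_{t-1})$, the same Jacobian up to your harmless rescaling $\mu_t=M_t/\lambda$, the same trace, determinant and Jury conditions, and the same $p(-1)<0$ argument for instability above the threshold. Two minor differences: your separate check of $D>-1$ is redundant, since it follows by adding $1-T+D>0$ and $1+T+D>0$ (which is why the paper lists only $1-d>0$). Your explicit verification that $n\mapsto n/(|n|+\eps)$ is $C^1$, with derivative $\eps/(|n|+\eps)^2$, fills in a point the paper merely asserts.
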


For ordinary EMA momentum ($\theta=1$), the upper bound is $2(1+\beta)/(1-\beta)$. For Nesterov-style mixing ($\theta=\beta$), it becomes $2(1+\beta)/\bigl((1-\beta)(1+2\beta)\bigr)$.

\begin{proof}
Near $n=0$, $n/(|n|+\eps)=n/\eps+o(|n|)$ and $\varphi_J(z)=a^Jz+o(|z|)$. Thus
\begin{equation}\label{eq:scalar-derivative}
 \Phi_\eps(n)=kn+o(|n|),\quad k=\frac{a^J}{\eps}.
\end{equation}
Take the state variable $(w_t,M_{t-1})$. Eliminating the current momentum gives
\[
 M_t=\beta M_{t-1}+(1-\beta)\lambda w_t,
 \quad N_t=\theta\beta M_{t-1}+(1-\theta\beta)\lambda w_t.
\]
Using \eqref{eq:scalar-derivative}, the Jacobian of the map $(w_t,M_{t-1})\mapsto(w_{t+1},M_t)$ at $(0,0)$ is
\begin{equation*}
 A=\begin{pmatrix}
  1-s(1-\theta\beta)&-\eta k\theta\beta\\
  (1-\beta)\lambda&\beta
 \end{pmatrix}.
\end{equation*}
Its trace and determinant are, by direct calculation,
\begin{equation}\label{eq:trace-det}
 \tau=1+\beta-s(1-\theta\beta),
 \quad d=\beta\bigl[1-(1-\theta)s\bigr].
\end{equation}
For the characteristic polynomial $z^2-\tau z+d$, the strict real quadratic Schur (Jury) conditions are $1-\tau+d>0$, $1+\tau+d>0$, and $1-d>0$. Substituting \eqref{eq:trace-det} gives exactly
\begin{align}
 1-\tau+d&=s(1-\beta),\label{eq:schur1}\\
 1+\tau+d&=2(1+\beta)-s(1+\beta-2\theta\beta),\label{eq:schur2}\\
 1-d&=1-\beta+\beta(1-\theta)s.\label{eq:schur3}
\end{align}
Because $s>0$, $\beta<1$, and $0\le\theta\le1$, \eqref{eq:schur1} and \eqref{eq:schur3} are strictly positive. Furthermore $1+\beta-2\theta\beta=1-\beta+2\beta(1-\theta)>0$. Consequently \eqref{eq:schur2} is equivalent to the upper bound in \eqref{eq:stability-region}. The state map is continuously differentiable in a neighborhood of the origin because $\eps>0$; the discrete-time linearization theorem gives local asymptotic stability when $A$ is strictly Schur stable. If $s$ is above the threshold, the characteristic polynomial satisfies $p(-1)=1+\tau+d<0$ and $p(z)\to+\infty$ as $z\to-\infty$. Hence an eigenvalue is less than $-1$, so the origin is unstable. At equality $-1$ is an eigenvalue, and linearization alone does not settle nonlinear stability.
\end{proof}

\paragraph{Global and local stability certificates.}
The global Lyapunov condition and the exact scalar Schur criterion address complementary objectives. The former guarantees full-sequence stationarity from arbitrary initialization under the smoothness and exact-gradient assumptions; the latter gives a precise local threshold around a quadratic stationary point. For $F(w)=\lambda w^2/2$, $\lambda>0$, exact gradients, and Nesterov-style mixing $\theta=\beta$, Proposition~\ref{prop:schur} yields local asymptotic stability whenever
\begin{equation*}
  0<\eta<\eta_{\mathrm{loc}}
  :=\frac{2(1+\beta)\eps}
   {\lambda a^J(1-\beta)(1+2\beta)}.
\end{equation*}
The global sufficient condition from Theorem~\ref{thm:fixed}, specialized to $L=\lambda$, is $\eta\le\eta_{\mathrm{glob}}:=\eps(1-\beta)/(4\lambda\uJ)$. These distinct certificates satisfy
\begin{equation*}
 \frac{\eta_{\mathrm{loc}}}{\eta_{\mathrm{glob}}}
 =\frac{8\uJ}{a^J}\,
   \frac{1+\beta}{(1-\beta)^2(1+2\beta)}
 \ge\frac{8(1+\beta)}{(1-\beta)^2(1+2\beta)},
\end{equation*}
because $\uJ\ge q_J(0)=a^J$. Their dependence on $\beta$ also differs: the global sufficient threshold scales as $1-\beta$ while the scalar local threshold scales as $(1-\beta)^{-1}$ as $\beta\uparrow1$. These are separate, mathematically precise characterizations of global dissipation and local linear stability.

%% file: appendices/c-stochastic-proofs.tex
\section{Proofs for Section~\ref{sec:stoch}: Stochastic Stationarity}\label{app:stochastic-proofs}
We prove the persistent-noise obstruction, develop the diminishing-step stationarity criterion, and then derive the noise-filtering estimates for Theorem~\ref{thm:stoch}. Additional fixed-momentum regimes are recorded at the end.

\subsection{Proof of the Persistent-Noise Floor}
\label{app:noise-proof}
We prove the probability and expectation lower bounds of Proposition~\ref{prop:noise} under bounded symmetric unbiased perturbations.

\begin{proof}[Proof of Proposition~\ref{prop:noise}]
Using the current EMA update in the mixture yields
\begin{equation*}
 N_t=\theta\beta M_{t-1}
 +(1-\theta\beta)(w_t+\zeta_t).
\end{equation*}
Let $a_0=(1-\theta\beta)\sigma>0$. Conditional on the past, the two possible values of $N_t$ differ by $2a_0$. If two real numbers have distance $2a_0$, at least one has absolute value at least $a_0$. Since the two possibilities are equiprobable,
\begin{equation}\label{eq:noiseprob}
 \Pr\bigl(|N_t|\ge a_0\mid\calF_{t-1}\bigr)\ge\tfrac12.
\end{equation}
By \eqref{eq:radial}, whenever $|N_t|\ge a_0$, $|D_t|\ge\ellJ a_0/(a_0+\eps)=:d_0>0$. It follows that $\EE|D_t|\ge d_0/2$ for every $t$. But $w_{t+1}-w_t=-\eta D_t$, so by the triangle inequality
\begin{equation}\label{eq:lower-w}
  \eta d_0/2\le\eta\EE|D_t|
      \le\EE|w_{t+1}|+\EE|w_t|.
\end{equation}
All finite-time expectations exist, because $|D_t|\le\uJ$ ensures $|w_t|\le|w_1|+(t-1)\eta\uJ$. Taking upper limits in \eqref{eq:lower-w}, allowing an infinite upper limit, yields $\limsup_t\EE|w_t|\ge\eta d_0/4$, and hence $\limsup_t\EE|F'(w_t)|\ge\eta d_0/4$, which is exactly \eqref{eq:noise-floor}.

For the probability claim, \eqref{eq:noiseprob} and the radial lower bound imply $\Pr(|w_{t+1}-w_t|\ge\eta d_0)\ge1/2$. With $h_0=\eta d_0/2$, the triangle inequality then gives
\[
 \Pr(|w_t|\ge h_0)+\Pr(|w_{t+1}|\ge h_0)
 \ge\tfrac12\quad(t\ge1).
\]
Consequently, at least one member of every adjacent pair has probability at least $1/4$. Since $|F'(w_t)|=|w_t|$, this proves \eqref{eq:bad-probability}.
\end{proof}

\subsection{General Diminishing-Step Stationarity Criterion}\label{app:aux-regimes}
For diminishing steps, use the standard summability conditions
\begin{equation}\label{eq:steps}
 \sum_{t=1}^{\infty}\eta_t=\infty,\quad
 \sum_{t=1}^{\infty}\eta_t^2<\infty.
\end{equation}

\begin{theorem}[Diminishing-step stationarity criterion]\label{thm:criterion}
Suppose Assumption~\ref{ass:smooth} holds and \(\eps\ge0\) is fixed. Let the deterministic positive step sizes satisfy \eqref{eq:steps}. If along a deterministic trajectory
\begin{equation}\label{eq:weighted-error-path}
 \sum_{t=1}^{\infty}\eta_t\norm{r_t}<\infty,
\end{equation}
then \(\norm{g_t}\to0\). For random iterates, if
\begin{equation}\label{eq:weighted-error-mean}
 \sum_{t=1}^{\infty}\eta_t\EE\norm{r_t}<\infty,
\end{equation}
then \(\norm{g_t}\to0\) almost surely and \(\EE\norm{g_t}^2\to0\).
\end{theorem}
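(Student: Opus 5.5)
The plan has three steps: a one-step descent inequality, a summability argument that turns it into weighted gradient energy, and a temporal-regularity argument that upgrades energy to convergence. For the descent inequality, smoothness and $\norm{D_t}\le B$ give
\[
 F(W_{t+1})-F(W_t)\le-\eta_t\ip{g_t}{D_t}+\tfrac{L B^2}{2}\eta_t^2 .
\]
Write $g_t=N_t-r_t$. Then $\ip{g_t}{D_t}=\ip{N_t}{D_t}-\ip{r_t}{D_t}\ge\ellJ\psi_\eps(\norm{N_t})-B\norm{r_t}$, using the alignment bound \eqref{eq:align} together with $\norm{D_t}\le B$. To pass from $\psi_\eps(\norm{N_t})$ to $\psi_\eps(\norm{g_t})$, I will use that $\psi_\eps$ is nondecreasing and $1$-Lipschitz on $[0,\infty)$: its derivative is $x(x+2\eps)/(x+\eps)^2\in[0,1]$, and $\psi_0(x)=x$. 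Hence $\psi_\eps(\norm{N_t})\ge\psi_\eps(\norm{g_t})-\norm{r_t}$, and
\[
 F(W_{t+1})-F(W_t)\le-\ellJ\eta_t\psi_\eps(\norm{g_t})+(\ellJ+B)\eta_t\norm{r_t}+\tfrac{LB^2}{2}\eta_t^2 .
\]

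For the deterministic claim, I telescope this inequality. Since $F\ge\Finf$, $\sum_t\eta_t^2<\infty$ and \eqref{eq:weighted-error-path} hold, this gives $\sum_t\eta_t\psi_\eps(\norm{g_t})<\infty$. It also shows that $F(W_t)$ stays bounded above by $F(W_1)$ plus a finite constant. I will then argue that $\liminf_t\norm{g_t}=0$: otherwise $\psi_\eps(\norm{g_t})$ is eventually bounded below by a positive constant, which contradicts $\sum_t\eta_t=\infty$.

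Upgrading the $\liminf$ to a limit is the main step. Smoothness and $\norm{D_t}\le B$ give $|\norm{g_{t+1}}-\norm{g_t}|\le LB\eta_t$. Suppose $\limsup_t\norm{g_t}>2\delta$ for some $\delta>0$. Then there are infinitely many disjoint upcrossing intervals $[a_i,b_i)$ on which $\norm{g_t}\ge\delta$, with $\norm{g_{a_i}}<\delta$ and $\norm{g_{b_i}}>2\delta$. Along each interval the gradient norm rises by more than $\delta$, so $\sum_{t\in[a_i,b_i)}\eta_t\ge\delta/(LB)$ once $\eta_t$ is small. On these intervals $\psi_\eps(\norm{g_t})\ge\psi_\eps(\delta)>0$. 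Each upcrossing therefore contributes at least $\psi_\eps(\delta)\delta/(LB)$ to the energy sum, and infinitely many of them contradict its finiteness. Hence $\norm{g_t}\to0$. I expect this to be the one genuinely nontrivial piece. The difficulty is handling the boundary step at $a_i$ carefully, which is done by starting the interval at the last time the norm is below $\delta$. The argument needs no monotonicity of $F$.

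For the random claim, Tonelli's theorem applied to \eqref{eq:weighted-error-mean} gives $\sum_t\eta_t\norm{r_t}<\infty$ almost surely. The deterministic argument then applies pathwise, since it used only that path property, so $\norm{g_t}\to0$ almost surely. For mean square, Lemma~\ref{lem:self-bound} gives $\norm{g_t}^2\le2L(F(W_t)-\Finf)$, and the telescoped descent inequality gives
\[
 F(W_t)-\Finf\le Y:=F(W_1)-\Finf+(\ellJ+B)\sum_s\eta_s\norm{r_s}+\tfrac{LB^2}{2}\sum_s\eta_s^2
\]
for all $t$. The bound $Y$ is independent of $t$, and $\EE Y<\infty$ because $W_1$ is deterministic and \eqref{eq:weighted-error-mean} holds. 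Dominated convergence with dominating variable $2LY$ then yields $\EE\norm{g_t}^2\to0$.
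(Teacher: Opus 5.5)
Your proof is correct and follows the paper's argument essentially step for step. It uses the same inexact-input descent bound (via \eqref{eq:align} and the $1$-Lipschitz monotonicity of $\psi_\eps$), the same telescoping to $\sum_t\eta_t\psi_\eps(\norm{g_t})<\infty$, the same increment bound $LB\eta_t$, and the same Tonelli and dominated-convergence argument with dominating variable $2LY$. The only difference is the final ``summable energy plus Lipschitz increments'' step: the paper's Lemma~\ref{lem:regularity} uses piecewise-linear interpolation on the clock $s_t=\sum_{j<t}\eta_j$, whereas you use a direct discrete upcrossing argument. Your version is valid provided you discard the boundary step at $a_i$ using $\eta_t\to0$, so each upcrossing contributes at least $\psi_\eps(\delta)\delta/(2LB)$ rather than $\psi_\eps(\delta)\delta/(LB)$.
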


\subsubsection{Inexact-input descent bound}
\label{app:proof-lem-inexact}
\begin{lemma}[Inexact-input descent]\label{lem:inexact}
Under Assumption~\ref{ass:smooth}, for any $\eta_t>0$ and $\eps\ge0$, the update~\eqref{eq:update} satisfies
\begin{align}
 \ip{g_t}{D_t}
 &\ge\ellJ\psi_\eps(\norm{g_t})-(\ellJ+B)\norm{r_t},
   \label{eq:inexact-align}\\
 F(W_{t+1})
 &\le F(W_t)-\ellJ\eta_t\psi_\eps(\norm{g_t})
   +(\ellJ+B)\eta_t\norm{r_t}
   +\frac{LB^2}{2}\eta_t^2.\label{eq:inexact-dec}
\end{align}
Moreover, the gradient norm cannot change too quickly:
\begin{equation}\label{eq:stoch-regularity-explain}
 \bigl|\norm{g_{t+1}}-\norm{g_t}\bigr|\le LB\eta_t.
\end{equation}
\end{lemma}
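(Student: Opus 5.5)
We prove the three assertions of Lemma~\ref{lem:inexact} in order: the alignment bound \eqref{eq:inexact-align}, then the descent bound \eqref{eq:inexact-dec}, then the regularity bound \eqref{eq:stoch-regularity-explain}. The plan is to move the alignment guarantee of Lemma~\ref{lem:map} from the NS input $N_t$ to the true gradient $g_t$. This costs a perturbation paid in $\norm{r_t}$, and we absorb it using Lipschitz-type properties of the map $\psi_\eps$.

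\emph{Alignment.} Write $g_t=N_t-r_t$, so that
\[
\ip{g_t}{D_t}=\ip{N_t}{D_t}-\ip{r_t}{D_t}.
\]
By \eqref{eq:align}, $\ip{N_t}{D_t}\ge\ellJ\psi_\eps(\norm{N_t})$. By Cauchy--Schwarz and $\norm{D_t}\le B$ from \eqref{eq:radial}, $\ip{r_t}{D_t}\le B\norm{r_t}$.

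It then suffices to show that $\psi_\eps$ is $1$-Lipschitz on $[0,\infty)$. For $\eps>0$,
\[
\psi_\eps'(x)=\frac{x^2+2x\eps}{(x+\eps)^2}=1-\frac{\eps^2}{(x+\eps)^2}\in[0,1).
\]
For $\eps=0$, $\psi_0(x)=x$. Combining this with $|\norm{N_t}-\norm{g_t}|\le\norm{r_t}$ gives
\[
\psi_\eps(\norm{N_t})\ge\psi_\eps(\norm{g_t})-\norm{r_t}.
\]
Collecting terms yields \eqref{eq:inexact-align}.

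The degenerate case $\eps=0$, $N_t=0$ needs a separate check, since then $D_t=0$. Here $\norm{g_t}=\norm{r_t}$, so the inequality reads $0\ge\ellJ\norm{r_t}-(\ellJ+B)\norm{r_t}$, which holds.

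\emph{Descent and regularity.} The descent lemma under Assumption~\ref{ass:smooth}, applied to $W_{t+1}=W_t-\eta_tD_t$, gives
\[
F(W_{t+1})\le F(W_t)-\eta_t\ip{g_t}{D_t}+\tfrac{L}{2}\eta_t^2\norm{D_t}^2.
\]
Inserting \eqref{eq:inexact-align} and $\norm{D_t}^2\le B^2$ yields \eqref{eq:inexact-dec}. For the regularity bound, the reverse triangle inequality and Lipschitz continuity of $\nabla F$ give
\[
\bigl|\norm{g_{t+1}}-\norm{g_t}\bigr|\le\norm{g_{t+1}-g_t}\le L\norm{W_{t+1}-W_t}=L\eta_t\norm{D_t}\le LB\eta_t.
\]

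The only mildly delicate step is the Lipschitz property of $\psi_\eps$ together with the degenerate case; everything else is routine.
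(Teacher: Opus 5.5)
Your proposal is correct and follows essentially the same route as the paper. You transfer \eqref{eq:align} from $N_t$ to $g_t$ via Cauchy--Schwarz, $\norm{D_t}\le B$, and the $1$-Lipschitz property of $\psi_\eps$; you then insert the result into the smoothness descent inequality, and obtain the regularity bound from the reverse triangle inequality. Your separate check of the case $\eps=0$, $N_t=0$ is valid but not needed, since the zero convention already makes \eqref{eq:align} hold there (the paper notes this in one line).
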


\begin{proof}
The derivative of $\psi_\eps$ on $(0,\infty)$ is
\[
 \psi_\eps'(x)
 =\frac{x(x+2\eps)}{(x+\eps)^2}
 =1-\frac{\eps^2}{(x+\eps)^2}\in[0,1].
\]
For $\eps=0$, $\psi_0(x)=x$. In either case the extension to $[0,\infty)$ is nonnegative, nondecreasing, and $1$-Lipschitz. Applying \eqref{eq:align} directly to the single matrix gives
\[
 \ip{N_t}{D_t}\ge\ellJ\frac{\norm{N_t}^2}{\norm{N_t}+\eps}
 =\ellJ\psi_\eps(\norm{N_t}).
\]
The inequalities remain valid when $N_t=0$, by the zero convention. As $g_t=N_t-r_t$,
\[
\ip{g_t}{D_t}
 =\ip{N_t}{D_t}-\ip{r_t}{D_t}
 \ge\ellJ\psi_\eps(\norm{N_t})-B\norm{r_t}.
\]
The reverse triangle inequality and the Lipschitz property yield $\psi_\eps(\norm{N_t})\ge\psi_\eps(\norm{g_t})-\norm{r_t}$, which proves \eqref{eq:inexact-align}. Applying the standard smoothness bound to $W_{t+1}=W_t-\eta_tD_t$ and then $\norm{D_t}\le B$ proves \eqref{eq:inexact-dec}. Finally, the reverse triangle inequality, $L$-smoothness, and $\norm{D_t}\le B$ imply
\[
 \bigl|\norm{g_{t+1}}-\norm{g_t}\bigr|
 \le\norm{g_{t+1}-g_t}
 \le L\eta_t\norm{D_t}\le LB\eta_t,
\]
which proves \eqref{eq:stoch-regularity-explain}.
\end{proof}

\subsubsection{Interpolation and full-sequence regularity}
\label{app:proof-lem-regularity}
\begin{lemma}[Interpolation and full-sequence regularity]\label{lem:regularity}
Let $\eps\ge0$, $h_t\ge0$, and let deterministic $\eta_t>0$ satisfy \eqref{eq:steps}. If there is a finite $C\ge0$ such that
\begin{equation}\label{eq:regass}
 \sum_{t=1}^{\infty}\eta_t\psi_\eps(h_t)<\infty,
 \quad |h_{t+1}-h_t|\le C\eta_t\quad(t\ge1),
\end{equation}
then $h_t\to0$.
\end{lemma}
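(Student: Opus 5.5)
We argue by contradiction, using the standard ``upcrossing'' device for sequences with bounded increments. First, since $\sum_t\eta_t^2<\infty$ forces $\eta_t\to0$, the hypothesis $|h_{t+1}-h_t|\le C\eta_t$ shows that the increments of $h_t$ vanish. Next we record the facts about $\psi_\eps$ already established in the proof of Lemma~\ref{lem:inexact}: it is nonnegative, nondecreasing and continuous on $[0,\infty)$, and $\psi_\eps(x)>0$ for every $x>0$. The last fact holds for $\eps>0$ because $x^2/(x+\eps)>0$, and for $\eps=0$ because $\psi_0(x)=x$. Consequently, for each $\delta>0$ we have $\psi_\eps(x)\ge\psi_\eps(\delta)=:c_\delta>0$ whenever $x\ge\delta$.

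\emph{Step 1: $\liminf_t h_t=0$.} Suppose instead that $h_t\ge\delta>0$ for all $t\ge t_0$. Then
\[
\sum_{t\ge t_0}\eta_t\psi_\eps(h_t)\ge c_\delta\sum_{t\ge t_0}\eta_t=\infty,
\]
which contradicts \eqref{eq:regass}, since $\sum_t\eta_t=\infty$.

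\emph{Step 2: no excursions.} Suppose $\limsup_t h_t>2\delta$ for some $\delta>0$. Combining this with Step 1, there are infinitely many disjoint index windows $[a_i,b_i]$ with the following properties: $h_{a_i}<\delta$, $h_{b_i}>2\delta$, and $h_t\ge\delta$ for $a_i<t\le b_i$. To build them, take $a_i$ to be the last index below $\delta$ before an upcrossing above $2\delta$. On such a window,
\[
\delta< h_{b_i}-h_{a_i}\le C\sum_{t=a_i}^{b_i-1}\eta_t ,
\]
so $\sum_{t=a_i+1}^{b_i-1}\eta_t\ge \delta/C-\eta_{a_i}$. For large $i$ this exceeds $\delta/(2C)$, because $\eta_t\to0$. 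If $C=0$, the sequence $h_t$ is constant and Step 1 already gives $h_t\equiv0$.

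Using $h_t\ge\delta$ on $(a_i,b_i]$, each window therefore contributes
\[
\sum_{t=a_i+1}^{b_i-1}\eta_t\psi_\eps(h_t)\ge c_\delta\,\delta/(2C)
\]
for all large $i$. Since there are infinitely many disjoint windows, the series $\sum_t\eta_t\psi_\eps(h_t)$ diverges, which is a contradiction. Hence $\limsup_t h_t\le2\delta$ for every $\delta>0$, and so $h_t\to0$.

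The main care point is the bookkeeping in Step 2. The step-size mass must be taken over indices where $h_t\ge\delta$ holds, which is why we drop the endpoint $a_i$ and absorb the single term $\eta_{a_i}\to0$. We also need strict positivity and monotonicity of $\psi_\eps$, so that one constant $c_\delta$ works uniformly for both $\eps=0$ and $\eps>0$. No lower bound on $h_t$ beyond $h_t\ge0$ and no boundedness of $h_t$ is required.
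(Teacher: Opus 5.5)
Your proof is correct, and it takes a genuinely different route from the paper's.

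\emph{The paper's route} passes to continuous time. It sets $s_{t+1}=s_t+\eta_t$ and interpolates $h$ linearly on each $[s_t,s_{t+1}]$, which gives a $C$-Lipschitz function. It then uses the $1$-Lipschitz property of $\psi_\eps$ to bound $\int_{s_t}^{s_{t+1}}\psi_\eps(h(s))\,ds\le\eta_t\psi_\eps(h_t)+\tfrac{C}{2}\eta_t^2$. Finally it invokes a Barbalat-type fact: a nonnegative, Lipschitz, integrable function on $[0,\infty)$ tends to zero.

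\emph{Your route} stays in discrete time with an upcrossing argument. You get $\liminf_t h_t=0$ from $\sum_t\eta_t=\infty$. Each upcrossing from below $\delta$ to above $2\delta$ then forces step-size mass at least $\delta/(2C)$ on indices where $h_t\ge\delta$, so each window contributes at least $c_\delta\delta/(2C)$ to the series. Your bookkeeping is sound: dropping the endpoint $a_i$ and absorbing $\eta_{a_i}\to0$, getting disjoint windows from $\liminf_t h_t=0$, and the trivial case $C=0$.

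\emph{What each buys.} Your argument is more elementary and slightly more general. You only use that $\psi_\eps$ is nondecreasing and strictly positive on $(0,\infty)$, not that it is $1$-Lipschitz. You also only use $\eta_t\to0$ rather than $\sum_t\eta_t^2<\infty$; the paper needs square-summability to sum the interpolation error $\tfrac{C}{2}\eta_t^2$. The paper's argument offers a cleaner conceptual picture, namely the ODE-time interpolation plus a standard continuous-time lemma. It pays for this with the two stronger ingredients, both of which are available in its setting.
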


\begin{proof}
Define $s_1=0$ and $s_{t+1}=s_t+\eta_t$. The first condition in \eqref{eq:steps} implies $s_t\to\infty$. On each interval $[s_t,s_{t+1}]$, define the linear interpolant
\[
 h(s)=h_t+\frac{s-s_t}{\eta_t}(h_{t+1}-h_t).
\]
The bound in \eqref{eq:regass} implies that each piece has slope at most $C$ in absolute value. Continuity across its breakpoints then implies that $h:[0,\infty)\to[0,\infty)$ is globally $C$-Lipschitz. Since $\psi_\eps$ is $1$-Lipschitz and nonnegative,
\begin{align*}
 \int_{s_t}^{s_{t+1}}\psi_\eps(h(s))\,ds
 &\le \int_{s_t}^{s_{t+1}}
   \bigl[\psi_\eps(h_t)+C(s-s_t)\bigr]\,ds\\
 &=\eta_t\psi_\eps(h_t)+\frac{C}{2}\eta_t^2.
\end{align*}
Summing and using \eqref{eq:steps} and \eqref{eq:regass} yields
\begin{equation}\label{eq:intfinite}
 \int_0^\infty \psi_\eps(h(s))\,ds<\infty.
\end{equation}
Set $f(s)=\psi_\eps(h(s))$. It is a nonnegative globally $C$-Lipschitz function. We justify explicitly the implication $f(s)\to0$: if $C=0$, then $f$ is constant and \eqref{eq:intfinite} forces that constant to be zero. If $C>0$ and there exist $d>0$ and arbitrarily large $s$ with $f(s)\ge d$, then $f(u)\ge d/2$ for all $|u-s|\le d/(2C)$. We may pick infinitely many such centers with pairwise disjoint intervals of this radius, producing an infinite integral in \eqref{eq:intfinite}, a contradiction. Hence $f(s)\to0$. In particular, $\psi_\eps(h_t)\to0$. Finally, $\psi_\eps(x)\ge\psi_\eps(d)>0$ whenever $x\ge d>0$, so $h_t\to0$.
\end{proof}

\subsubsection{Proof of Theorem~\ref{thm:criterion}}
\label{app:proof-thm-criterion}
\begin{proof}
First consider any sample path on which \eqref{eq:weighted-error-path} holds. Telescoping \eqref{eq:inexact-dec} from $1$ through $T$, and using $F(W_{T+1})\ge\Finf$, gives
\begin{align}\label{eq:weighted-psi}
 \ellJ\sum_{t=1}^T\eta_t\psi_\eps(\norm{g_t})
 &\le F(W_1)-\Finf
 +(\ellJ+B)\sum_{t=1}^T\eta_t\norm{r_t}
 +\frac{LB^2}{2}\sum_{t=1}^T\eta_t^2.
\end{align}
The right side is bounded independently of $T$, proving $\sum_t\eta_t\psi_\eps(\norm{g_t})<\infty$. Moreover the reverse triangle inequality, gradient Lipschitzness, \eqref{eq:update}, and $\norm{D_t}\le B$ give
\begin{equation*}
 \bigl|\norm{g_{t+1}}-\norm{g_t}\bigr|
 \le\norm{g_{t+1}-g_t}\le LB\eta_t.
\end{equation*}
Lemma~\ref{lem:regularity} applied to $h_t=\norm{g_t}$ proves $\norm{g_t}\to0$ along the chosen path.

For random iterates, \eqref{eq:weighted-error-mean} and Tonelli's theorem imply \eqref{eq:weighted-error-path} almost surely, so we obtain the almost-sure conclusion. To prove the mean-square conclusion, note that the same descent inequality without its negative term implies for every $t$ the pathwise estimate
\begin{equation}\label{eq:criterion-dom}
 0\le F(W_t)-\Finf\le R_A
 :=F(W_1)-\Finf
 +(\ellJ+B)\sum_{j=1}^{\infty}\eta_j\norm{r_j}
 +\frac{LB^2}{2}\sum_{j=1}^{\infty}\eta_j^2.
\end{equation}
Since $W_1$ is deterministic and \eqref{eq:weighted-error-mean} holds, $\EE R_A<\infty$. Lemma~\ref{lem:self-bound} gives $0\le\norm{g_t}^2\le2LR_A$ for all $t$. Dominated convergence now yields $\EE\norm{g_t}^2\to0$.
\end{proof}

\subsection{Polynomial-Schedule Memory and Noise Filtering}
\subsubsection{Polynomial-Schedule Convolution Estimates}
\label{app:proof-lem-convolution}
Let $1-\beta_t=\alpha_0t^{-q}$, where $0<\alpha_0\le1$ and $0<q<1$. For $1\le s\le t$, define
\begin{equation*}
  R_{s:t}=\prod_{j=s}^t\beta_j,
  \quad R_{t+1:t}=1.
\end{equation*}

\begin{lemma}[Geometric-memory estimates]\label{lem:convolution}
If $\eta_t=\eta_0t^{-p}$ with $p>0$ and $\eta_0>0$, then as $t\to\infty$,
\begin{align}
 R_{1:t}&=O\bigl(e^{-c t^{1-q}}\bigr)
      \quad\text{for some }c>0,\label{eq:conv0}\\
 \sum_{s=2}^t R_{s:t}\eta_{s-1}&=O(t^{q-p}),\label{eq:conv1}\\
 \sum_{s=1}^t (1-\beta_s)^2R_{s+1:t}^2&=O(t^{-q}).\label{eq:conv2}
\end{align}
The implied constants depend on $\alpha_0,\eta_0,p,q$ but not on $t$.
\end{lemma}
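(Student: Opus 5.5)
We prove the three estimates of Lemma~\ref{lem:convolution} by comparing the products $R_{s:t}$ with exponentials of integrals of $1-\beta_j$. The basic tool is $\log\beta_j=\log(1-\alpha_0 j^{-q})\le-\alpha_0 j^{-q}$, valid since $\alpha_0 j^{-q}\le1$; when $\alpha_0=1$ and $j=1$ we have $\beta_1=0$, and the bound $\beta_1\le e^{-\alpha_0}$ still holds trivially. This gives
\[
 R_{s:t}\le\exp\Bigl(-\alpha_0\sum_{j=s}^t j^{-q}\Bigr)\le\exp\Bigl(-\frac{\alpha_0}{1-q}\bigl((t+1)^{1-q}-s^{1-q}\bigr)\Bigr),
\]
using $\sum_{j=s}^t j^{-q}\ge\int_s^{t+1}u^{-q}\,du$ because $u^{-q}$ is decreasing. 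Taking $s=1$ immediately yields \eqref{eq:conv0} with $c=\alpha_0/(1-q)$, after absorbing the constant $e^{\alpha_0/(1-q)}$.

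For \eqref{eq:conv1} and \eqref{eq:conv2} we use one splitting device. Fix a threshold $s_0(t)=\lceil t/2\rceil$ and divide each sum into the ranges $s<t/2$ and $s\ge t/2$.

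\emph{Far range ($s<t/2$).} Here $(t+1)^{1-q}-s^{1-q}\ge(1-2^{q-1})t^{1-q}$, so every product satisfies $R_{s:t}\le e^{-c't^{1-q}}$ for some $c'>0$. Each summand is at most polynomial in $t$: for instance $\eta_{s-1}\le\eta_0$, and $(1-\beta_s)^2\le1$. The far part of each sum is therefore $O(t\,e^{-c't^{1-q}})$, which is $o(t^{-N})$ for every $N$.

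\emph{Near range ($s\ge t/2$).} Here $s^{-q}\asymp t^{-q}$ and $\eta_{s-1}\asymp t^{-p}$, with constants depending only on $p,q$ (for $s\ge2$). For the lower bound on the decay we use $\sum_{j=s}^t j^{-q}\ge(t-s+1)t^{-q}$, which gives $R_{s:t}\le\exp(-\alpha_0(t-s+1)t^{-q})$. The near part of \eqref{eq:conv1} is then bounded by
\[
 C t^{-p}\sum_{k\ge1}e^{-\alpha_0 k t^{-q}}\le Ct^{-p}\,\frac{1}{1-e^{-\alpha_0t^{-q}}}=O(t^{q-p}),
\]
since $1-e^{-x}\ge x/2$ for $x\in(0,1]$. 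For \eqref{eq:conv2}, the near part is
\[
 \alpha_0^2 C t^{-2q}\sum_{k\ge0}e^{-2\alpha_0 k t^{-q}}=O(t^{-2q}\cdot t^{q})=O(t^{-q}).
\]
In this sum the index shift to $R_{s+1:t}$, including the empty product $R_{t+1:t}=1$, only changes $k$ by one, which is harmless.

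No step is deep. The only real care needed is uniformity: the constants must depend solely on $\alpha_0,\eta_0,p,q$, and this includes the small-$t$ regime and the edge case $\beta_1=0$. Both are handled by noting that each quantity is finite for every $t$, so we can enlarge the constants to cover any finite initial range of $t$. The main obstacle is keeping the geometric-sum bound $1/(1-e^{-\alpha_0 t^{-q}})\lesssim t^q$ honest, which uses $\alpha_0t^{-q}\le1$. The far-range exponential decay relies on $q<1$, which is assumed.
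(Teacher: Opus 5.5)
Your proof is correct and follows the paper's argument essentially step for step. Both use $R_{s:t}\le\exp\bigl(-\sum_{j=s}^t(1-\beta_j)\bigr)$, an integral comparison for $s=1$, and a split at $t/2$ with an exponentially small far range, and in the near range both use $1-\beta_j\ge 1-\beta_t$ together with a geometric-series bound on $1/(1-e^{-x})$. The one point to tighten concerns the far range of \eqref{eq:conv2}: there you need a bound on $R_{s+1:t}$, which dominates $R_{s:t}$, so bound it directly (e.g.\ using $s+1\le (t+1)/2$), whereas the paper simply notes that both $R_{s:t}$ and $R_{s+1:t}$ contain every factor with $j>\lfloor t/2\rfloor$.
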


\begin{proof}
Since $1-x\le e^{-x}$ for $x\in[0,1]$,
\begin{equation}\label{eq:exp-prod}
 R_{s:t}\le\exp\left(-\sum_{j=s}^t(1-\beta_j)\right).
\end{equation}
For $s=1$, the integral lower bound $\sum_{j=1}^t j^{-q}\ge \int_1^{t+1}x^{-q}\,dx =((t+1)^{1-q}-1)/(1-q)$ proves \eqref{eq:conv0} (adjusting $c$ for small $t$).

For the two sums, write $m=\lfloor t/2\rfloor$ and first take $t\ge4$. If $s\le m$, both $R_{s:t}$ and $R_{s+1:t}$ contain the product over $j=m+1,\ldots,t$; therefore
\begin{equation*}
 R_{s:t}\le e^{-\alpha_0(t-m)t^{-q}},\quad
 R_{s+1:t}\le e^{-\alpha_0(t-m)t^{-q}}
 \le e^{-(\alpha_0/2)t^{1-q}}.
\end{equation*}
Since $\eta_{s-1}\le\eta_0$ and $(1-\beta_s)\le\alpha_0$, the contributions from $s\le m$ to \eqref{eq:conv1} and \eqref{eq:conv2} are bounded by a constant times $t e^{-c t^{1-q}}$, hence are smaller than any fixed negative power of $t$.

For $s>m$, one has $s-1\ge t/2-1\ge t/4$, and therefore
\begin{equation}\label{eq:late-params}
 \eta_{s-1}\le4^p\eta_0t^{-p},\quad
 (1-\beta_s)\le2^q\alpha_0t^{-q}.
\end{equation}
Furthermore, the sequence $(1-\beta_j)_{j\ge1}$ is nonincreasing, so $1-\beta_j\ge1-\beta_t$ for $j\le t$. Equation \eqref{eq:exp-prod} gives
\begin{equation}\label{eq:late-product}
 R_{s:t}\le e^{-(1-\beta_t)(t-s+1)},
 \quad R_{s+1:t}\le e^{-(1-\beta_t)(t-s)}.
\end{equation}
For $0<x\le1$, the inequality $1-e^{-x}\ge x/(1+x)$ implies
\begin{equation}\label{eq:geom-est}
 \sum_{k=0}^\infty e^{-xk}
 =\frac1{1-e^{-x}}\le1+\frac1x.
\end{equation}
Using \eqref{eq:late-params}--\eqref{eq:geom-est} yields
\begin{align*}
 \sum_{s=m+1}^t R_{s:t}\eta_{s-1}
 &\le4^p\eta_0t^{-p}
    \sum_{k=1}^\infty e^{-(1-\beta_t) k}
 =O(t^{-p}(1-\beta_t)^{-1})=O(t^{q-p}),\\
 \sum_{s=m+1}^t(1-\beta_s)^2R_{s+1:t}^2
 &\le 2^{2q}\alpha_0^2t^{-2q}
    \sum_{k=0}^\infty e^{-2(1-\beta_t) k}
 =O(t^{-2q}(1-\beta_t)^{-1})=O(t^{-q}).
\end{align*}
Combining the early and late estimates proves both assertions. The finitely many values $t<4$ can be absorbed into the constants.
\end{proof}

\subsubsection{Noise filtering and momentum tracking}
\label{app:proof-lem-ema-tracking}
\begin{lemma}[Noise filtering and momentum tracking]\label{lem:ema-tracking}
Under Assumption~\ref{ass:smooth}, \eqref{eq:MDS}, and the polynomial schedules~\eqref{eq:schedule} with $0<q<1$ and $p>q$, let $e_t=M_t-g_t$. For $\theta_t=1$ or $\theta_t=\beta_t$,
\begin{align}
 \EE\norm{e_t}&=O\bigl(t^{-q/2}+t^{q-p}\bigr),\label{eq:track-rate}\\
 \EE\norm{r_t}&=O\bigl(t^{-q/2}+t^{q-p}\bigr).\label{eq:r-rate}
\end{align}
In particular, the exponent conditions~\eqref{eq:exponents} imply
\begin{equation}\label{eq:weighted-tracking-lemma}
 \sum_{t=1}^{\infty}\eta_t\EE\norm{r_t}<\infty.
\end{equation}
\end{lemma}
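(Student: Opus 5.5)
The approach is to unroll the EMA recursion for $e_t=M_t-g_t$ and split it into a martingale noise part and a deterministic-drift part, then bound each with Lemma~\ref{lem:convolution}. From \eqref{eq:ema} with $G_t=g_t+\zeta_t$ we have
\[
 e_t=\beta_t\bigl(e_{t-1}+g_{t-1}-g_t\bigr)+(1-\beta_t)\zeta_t ,
\]
and iterating from $e_0:=M_0-g_1$ (with the convention that the first step uses $g_1$) gives
\[
 e_t=R_{1:t}\,e_0+\sum_{s=2}^{t}R_{s:t}\,(g_{s-1}-g_s)+\sum_{s=1}^{t}(1-\beta_s)R_{s+1:t}\,\zeta_s .
\]
We then bound the three terms in turn.

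\begin{enumerate}[label=(\roman*),leftmargin=*,itemsep=2pt,topsep=3pt]
\item The initial-condition term is deterministic. By \eqref{eq:conv0} it is $O(e^{-ct^{1-q}})$, which is smaller than any polynomial.
\item For the drift term, smoothness and $\norm{D_{s-1}}\le B$ give $\norm{g_{s-1}-g_s}\le LB\eta_{s-1}$ pathwise. Its norm is then at most $LB\sum_{s}R_{s:t}\eta_{s-1}=O(t^{q-p})$ by \eqref{eq:conv1}.
\item For the noise term, the weights are deterministic and $\zeta_s$ is a martingale difference sequence by \eqref{eq:MDS}. Cross terms $\EE\ip{\zeta_s}{\zeta_{s'}}$ with $s<s'$ therefore vanish by conditioning on $\calF_{s'-1}$. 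The second moment of the noise term is at most $\sigma^2\sum_s(1-\beta_s)^2R_{s+1:t}^2=O(t^{-q})$ by \eqref{eq:conv2}. Jensen's inequality then gives a first moment of $O(t^{-q/2})$.
\end{enumerate}

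Adding the three bounds yields \eqref{eq:track-rate}.

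For \eqref{eq:r-rate}, take the two mixing choices separately. When $\theta_t=1$, $r_t=e_t$ and there is nothing more to do. When $\theta_t=\beta_t$,
\[
 r_t=\beta_te_t+(1-\beta_t)\zeta_t ,
\]
so $\EE\norm{r_t}\le\EE\norm{e_t}+\alpha_0t^{-q}\sigma$. The extra term $t^{-q}$ is dominated by $t^{-q/2}$. Finally, $\eta_t\EE\norm{r_t}=O(t^{-p-q/2}+t^{q-2p})$. This is summable precisely when $p+q/2>1$ and $2p-q>1$, i.e.\ $q>2(1-p)$ and $q<2p-1$, which is \eqref{eq:exponents}. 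Those conditions also give $q<1$ and $p>q$, since $q<2p-1\le p$ and $q<2p-1\le 1$. This proves \eqref{eq:weighted-tracking-lemma}.

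The main obstacle is the noise term. The orthogonality argument requires the weights $(1-\beta_s)R_{s+1:t}$ to be deterministic, which holds because the schedules are deterministic. It also requires each $\zeta_s$ to be square integrable, which follows from the conditional variance bound after taking expectations. The remaining care is bookkeeping: the index conventions at $s=1$ must be fixed so that $e_0$ and the drift sum match the ranges in Lemma~\ref{lem:convolution}.
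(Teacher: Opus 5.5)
Your proposal is correct and follows the paper's proof essentially step for step. It uses the same three-term unrolling of the EMA recursion for $e_t$, bounds the initial, drift, and martingale-noise terms via \eqref{eq:conv0}, \eqref{eq:conv1}, and \eqref{eq:conv2} respectively, and then applies $r_t=\beta_te_t+(1-\beta_t)\zeta_t$ and the exponent check $p+q/2>1$, $2p-q>1$. The one detail worth stating explicitly, which the paper also relies on, is that $\zeta_s$ is $\calF_s$-measurable (recoverable from $M_s$, $M_{s-1}$, $W_s$ since $1-\beta_s>0$); this is what lets conditioning on $\calF_{s'-1}$ eliminate the cross terms.
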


\begin{proof}
Set $e_t=M_t-g_t$. The time-varying counterpart of \eqref{eq:erecur} is
\[
 e_t=\beta_t(e_{t-1}+g_{t-1}-g_t)+(1-\beta_t)\zeta_t,
 \quad t\ge2,
\]
while at $t=1$, $e_1=\beta_1(M_0-g_1)+(1-\beta_1)\zeta_1$. Iteration gives the \emph{exact}, consistently indexed identity
\begin{equation}\label{eq:ema-expansion}
 e_t=R_{1:t}(M_0-g_1)
 +\sum_{s=2}^t R_{s:t}(g_{s-1}-g_s)
 +\sum_{s=1}^t(1-\beta_s)R_{s+1:t}\zeta_s.
\end{equation}
The initial term is $O(e^{-c t^{1-q}})$ by Lemma~\ref{lem:convolution}. Since $\norm{D_{s-1}}\le B$,
\begin{equation}\label{eq:drift}
 \norm{g_{s-1}-g_s}
 \le L\norm{W_s-W_{s-1}}\le LB\eta_{s-1}.
\end{equation}
Consequently the norm of the middle sum in \eqref{eq:ema-expansion} is bounded \emph{pathwise} by $LB\sum_{s=2}^t R_{s:t}\eta_{s-1}=O(t^{q-p})$.

For the final sum in \eqref{eq:ema-expansion}, define the \emph{deterministic} scalars $w_{s,t}=(1-\beta_s)R_{s+1:t}$. Because $\zeta_s$ is $\calF_s$-measurable and $\EE[\zeta_u\mid\calF_{u-1}]=0$, for $s<u$,
\[
 \EE\ip{\zeta_s}{\zeta_u}
 =\EE\Bigl[\ip{\zeta_s}{
     \EE[\zeta_u\mid\calF_{u-1}]}\Bigr]=0.
\]
The interchange is legitimate by Cauchy--Schwarz and the bounded second moments. Expanding the square and using \eqref{eq:MDS} therefore gives
\begin{align*}
 \EE\norm{\sum_{s=1}^t w_{s,t}\zeta_s}^2
 &=\sum_{s=1}^t w_{s,t}^2\EE\norm{\zeta_s}^2\\
 &\le\sigma^2\sum_{s=1}^t(1-\beta_s)^2R_{s+1:t}^2
 =O(t^{-q}).
\end{align*}
Jensen's inequality yields an expected norm of order $t^{-q/2}$. Applying the triangle inequality to the three terms of \eqref{eq:ema-expansion} proves \eqref{eq:track-rate}.

If $\theta_t=1$, then $r_t=e_t$. If $\theta_t=\beta_t$, the exact identity \eqref{eq:mix} gives
\[
 r_t=\beta_te_t+(1-\beta_t)\zeta_t.
\]
Thus $\EE\norm{r_t}\le\EE\norm{e_t}+(1-\beta_t)\sigma$; $1-\beta_t=O(t^{-q})$ is absorbed in $O(t^{-q/2})$. This proves \eqref{eq:r-rate}. Under \eqref{eq:exponents}, multiplying the estimate by $\eta_t=\eta_0t^{-p}$ gives
\[
 \eta_t\EE\norm{r_t}
 =O(t^{-p-q/2}+t^{q-2p}).
\]
Both exponents are less than $-1$ because $p+q/2>1$ and $2p-q>1$. Summation proves \eqref{eq:weighted-tracking-lemma}.
\end{proof}

\subsection{Proof of Theorem~\ref{thm:stoch}}
\label{app:proof-thm-stoch}
\begin{proof}
The interval for $q$ in \eqref{eq:exponents} is nonempty exactly when $p>3/4$. Because $p\le1$, it also implies $0<q<1$; because $q<2p-1\le p$ for $p\le1$, we have $q<p$. Lemma~\ref{lem:ema-tracking} therefore applies and directly gives \eqref{eq:weighted-tracking-lemma}. Also $p>3/4>1/2$ and $p\le1$ imply $\sum_t\eta_t^2<\infty$ and $\sum_t\eta_t=\infty$. All hypotheses of Theorem~\ref{thm:criterion} are satisfied; applying it proves both statements in \eqref{eq:stoch-conclusion}.

\emph{Almost-sure weighted square-summability.} The argument leading to \eqref{eq:criterion-dom} yields an almost-surely finite random variable $R_A\ge0$ such that
\[
 \sup_{t\ge1}\bigl(F(W_t)-\Finf\bigr)\le R_A,
 \quad
 \ellJ\sum_{t=1}^{\infty}\eta_t
      \psi_\eps(\norm{g_t})\le R_A
 \quad\as.
\]
(The latter inequality follows directly from \eqref{eq:weighted-psi}, and $R_A<\infty$ follows from $\sum_t\eta_t\norm{r_t}<\infty$ almost surely.) The gradient self-bound \eqref{eq:selfbound} implies $\norm{g_t}\le\sqrt{2LR_A}$ simultaneously for all $t$, almost surely. By the exact identity $x^2=(x+\eps)\psi_\eps(x)$ for every $x\ge0$, including $\eps=x=0$ under our convention, we obtain
\begin{align*}
 \sum_{t=1}^{\infty}\eta_t\norm{g_t}^2
 &=\sum_{t=1}^{\infty}\eta_t
       (\norm{g_t}+\eps)\psi_\eps(\norm{g_t})\\
 &\le\frac{(\sqrt{2LR_A}+\eps)R_A}{\ellJ}
 <\infty\quad\as.
\end{align*}
This proves \eqref{eq:stoch-weighted-main} and completes Theorem~\ref{thm:stoch}.
\end{proof}

\subsection{Fixed Momentum with Summably Weighted Noise}
\begin{corollary}[Fixed momentum with summably weighted noise]
\label{cor:fixed-diminishing}
Under Assumption~\ref{ass:smooth}, fix \(\eps\ge0\), \(J\ge1\), \(\beta\in[0,1)\), and \(\theta\in[0,1]\). Let \(\eta_t\) be positive, nonincreasing, and satisfy \eqref{eq:steps}. If deterministic \(\sigma_t\ge0\) satisfy
\begin{equation}\label{eq:sigma}
 \EE\norm{\zeta_t}\le\sigma_t,\quad
 \sum_{t\ge1}\eta_t\sigma_t<\infty,
\end{equation}
then \(\norm{g_t}\to0\) almost surely and \(\EE\norm{g_t}^2\to0\), without unbiasedness.
\end{corollary}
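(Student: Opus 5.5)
The plan is to reduce the corollary to the random-iterate half of Theorem~\ref{thm:criterion}. The step sizes satisfy \eqref{eq:steps} by hypothesis, so it suffices to show $\sum_t\eta_t\EE\norm{r_t}<\infty$. Unbiasedness is never used: the criterion only needs first moments of the tracking error, and the noise enters linearly.

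\emph{Step 1: decompose the tracking error.} With constant $\beta$, the recursion from the proof of Lemma~\ref{lem:ema-tracking} specializes to
\[
 e_t=\beta^{t}(M_0-g_1)+\sum_{s=2}^t\beta^{t-s+1}(g_{s-1}-g_s)+(1-\beta)\sum_{s=1}^t\beta^{t-s}\zeta_s ,
\]
where $e_t=M_t-g_t$ and $R_{s:t}=\beta^{t-s+1}$. Since $N_t=\theta M_t+(1-\theta)G_t$, we have $r_t=\theta e_t+(1-\theta)\zeta_t$. Hence $\norm{r_t}\le\norm{e_t}+\norm{\zeta_t}$.

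\emph{Step 2: bound each term in expectation.}
\begin{itemize}
\item The initial term is $\beta^t\norm{M_0-g_1}$. It is deterministic and geometrically decaying, and $\eta_t\le\eta_1$, so its weighted sum is finite.
\item The drift term is bounded pathwise, using \eqref{eq:drift}, by $LB\sum_{s=2}^t\beta^{t-s+1}\eta_{s-1}$.
\item The noise term has expectation at most $(1-\beta)\sum_{s\le t}\beta^{t-s}\sigma_s$, by the triangle inequality.
\end{itemize}
This gives
\[
 \EE\norm{r_t}\le \beta^t C_0+LB\sum_{s=2}^t\beta^{t-s+1}\eta_{s-1}+(1-\beta)\sum_{s=1}^t\beta^{t-s}\sigma_s+\sigma_t .
\]

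\emph{Step 3: multiply by $\eta_t$ and sum over $t$.} This is the main step. The key device is monotonicity of $\eta_t$: it gives $\eta_t\le\eta_{s-1}$ in the drift term and $\eta_t\le\eta_s$ in the noise term. Exchanging the order of summation (Tonelli, all terms nonnegative), the drift contribution is at most
\[
 LB\sum_{s\ge2}\eta_{s-1}^2\sum_{t\ge s}\beta^{t-s+1}\le \frac{LB\beta}{1-\beta}\sum_s\eta_s^2<\infty .
\]
The noise contribution is at most
\[
 (1-\beta)\sum_s\eta_s\sigma_s\sum_{k\ge0}\beta^k=\sum_s\eta_s\sigma_s<\infty .
\]
The direct term $\sum_t\eta_t\sigma_t$ is finite by \eqref{eq:sigma}. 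The case $\beta=0$ is trivial, since then $e_t=\zeta_t$.

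\emph{Conclusion.} Altogether $\sum_t\eta_t\EE\norm{r_t}<\infty$, and Theorem~\ref{thm:criterion} gives $\norm{g_t}\to0$ almost surely and $\EE\norm{g_t}^2\to0$.

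The only delicate point is the discrete-convolution exchange in Step~3. It is exactly where the hypothesis that $\eta_t$ is nonincreasing is needed, both to pass $\eta_t$ inside the sums and to convert the drift into $\sum_s\eta_s^2$.
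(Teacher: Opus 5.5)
Your proposal is correct and follows the paper's proof essentially step for step. It uses the same constant-$\beta$ unrolling of $e_t$, the same monotonicity of $\eta_t$ to exchange the double sums for the initial, drift, and noise terms, and the same reduction through $r_t=\theta e_t+(1-\theta)\zeta_t$ to Theorem~\ref{thm:criterion}. The only cosmetic difference is that you bound $\EE\norm{r_t}$ before summing, whereas the paper first proves $\sum_t\eta_t\EE\norm{e_t}<\infty$ and then adds the $\zeta_t$ contribution.
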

For example, \(\eta_t\asymp t^{-4/5}\) and \(\EE\norm{\zeta_t}^2\lesssim t^{-1/2}\) satisfy this condition.

\subsubsection{Proof of Corollary~\ref{cor:fixed-diminishing}}
\label{app:proof-cor-fixed-diminishing}
\begin{proof}
With $e_t=M_t-g_t$, expansion of the constant-$\beta$ recursion shows that for all $t\ge1$,
\begin{equation*}
 e_t=\beta^t(M_0-g_1)
 +\sum_{s=2}^t\beta^{t-s+1}(g_{s-1}-g_s)
 +\alpha\sum_{s=1}^t\beta^{t-s}\zeta_s,
 \quad\alpha=1-\beta.
\end{equation*}
Take expected norms, multiply by $\eta_t$, and sum. For the initial condition, the monotonicity of $\eta_t$ yields
\begin{equation}\label{eq:fixconv-init}
 \sum_{t=1}^{\infty}\eta_t\beta^t\norm{M_0-g_1}
 \le\frac{\beta\eta_1}{1-\beta}\norm{M_0-g_1}.
\end{equation}
For the drift part, use \eqref{eq:drift}, exchange the order of nonnegative sums, and note that if $t\ge s$ then $\eta_t\le\eta_{s-1}$. Therefore
\begin{align}\label{eq:fixconv-drift}
 &\sum_{t\ge1}\eta_t\sum_{s=2}^t
     \beta^{t-s+1}\norm{g_{s-1}-g_s}\\
 &\quad\le LB\sum_{s=2}^{\infty}\eta_{s-1}
      \sum_{t=s}^{\infty}\eta_t\beta^{t-s+1}\notag\\
 &\quad\le\frac{LB\beta}{1-\beta}
      \sum_{s=2}^{\infty}\eta_{s-1}^2
  =\frac{LB\beta}{1-\beta}\sum_{t\ge1}\eta_t^2<\infty.
 \notag
\end{align}
Finally, by \eqref{eq:sigma}, nonincreasing steps, and $\alpha\sum_{k\ge0}\beta^k=1$,
\begin{align}\label{eq:fixconv-noise}
 &\alpha\sum_{t\ge1}\eta_t
    \sum_{s=1}^t\beta^{t-s}\EE\norm{\zeta_s}\\
 &\quad\le\alpha\sum_{s\ge1}\sigma_s
    \sum_{t=s}^{\infty}\eta_t\beta^{t-s}
 \le\sum_{s\ge1}\eta_s\sigma_s<\infty.
 \notag
\end{align}
Equations \eqref{eq:fixconv-init}--\eqref{eq:fixconv-noise} show $\sum_t\eta_t\EE\norm{e_t}<\infty$. Since $r_t=\theta e_t+(1-\theta)\zeta_t$, \eqref{eq:sigma} implies $\sum_t\eta_t\EE\norm{r_t}<\infty$. Theorem~\ref{thm:criterion} proves the claim. For $\beta=0$, the terms in \eqref{eq:fixconv-init}--\eqref{eq:fixconv-drift} vanish and the same proof is valid with $r_t=\zeta_t$.
\end{proof}

\subsubsection{Fixed-Memory Stochastic Bias and Growing-Memory Filtering}
Do, Dereich, and Jentzen~\cite[Theorem~1.3]{do2026} identify a complementary stochastic regime for generalized finite-NS Muon: asymmetric data can induce a persistent bias with fixed memory. The following specialized statement records the update and noise conditions relevant to our comparison.

\begin{lemma}[Fixed momentum with decreasing steps: a nonconvergence example
\cite{do2026}]\label{lem:fixed-momentum-obstruction}
There are bounded i.i.d. data $X$ with nonzero third centered moment, a one-dimensional strongly convex quadratic objective $F(w)=\mathbb E[(w-X)^2]$, and fixed-batch instances of finite-NS Muon with the practical polynomial, positive input stabilization, EMA input ($\theta_t=1$), fixed $\beta\in(0,1)$, and decreasing steps $\eta_t=\eta_0t^{-p}$ with $0<p<1$, for which
\[
 \limsup_{t\to\infty}
 \mathbb E\!\left[\min\{1,|w_t-\mathbb E X|\}\right]>0.
\]
Consequently, the prescribed decreasing learning rates and fixed momentum need not yield last-iterate stationarity.
\end{lemma}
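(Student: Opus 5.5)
The plan is mainly to reduce to \cite[Theorem~1.3]{do2026}. We then explain, in this paper's notation, why the mechanism applies to the practical finite-NS map $\Phi_\eps$ with $\eps>0$ and $\theta_t=1$.

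\textbf{Setting.} Take $F(w)=\EE[(w-X)^2]$, so $g_t=2(w_t-\EE X)$. The fixed-batch stochastic gradient is $G_t=2(w_t-X_t)$ with $X_t$ i.i.d. copies of $X$. The noise $\zeta_t=2(\EE X-X_t)$ is then bounded and unbiased, but has nonzero third moment. With $\beta$ fixed, the expansion analogous to \eqref{eq:ema-expansion} gives
\[
e_t=\beta^t(M_0-g_1)+\sum_{s=2}^t\beta^{t-s+1}(g_{s-1}-g_s)+\alpha\sum_{s=1}^t\beta^{t-s}\zeta_s .
\]
By \eqref{eq:drift}, the drift sum is $O(\eta_{t})$ pathwise, because the window has length $O(1/(1-\beta))$. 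The noise sum does \emph{not} decay. It converges in distribution to a stationary law $\xi=\alpha\sum_{k\ge0}\beta^k\zeta^{(k)}$, which is nondegenerate and has nonzero third cumulant $\alpha^3\kappa_3(\zeta)/(1-\beta^3)$. This is exactly why the summability condition of Corollary~\ref{cor:fixed-diminishing} fails here: $\sigma_t$ does not vanish.

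\textbf{Averaged field.} The next step is a stochastic-approximation comparison. Write $w_{t+1}=w_t-\eta_t\Phi_\eps(N_t)$ with $N_t=g_t+e_t$. Since $\eta_t\to0$, $w_t$ moves slowly relative to the mixing time of $e_t$. The iterates should therefore track the ODE $\dot w=-h(w)$, where $h(w):=\EE\,\Phi_\eps\bigl(2(w-\EE X)+\xi\bigr)$. The key claim is $h(\EE X)=\EE\Phi_\eps(\xi)\neq0$. The input $\xi$ has zero mean, and $\Phi_\eps$ is an odd nonlinear scalar function: it is linear only near $0$ (slope $a^J/\eps$), saturating at scale $\eps$, and $q_J$ is nonconstant. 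An odd nonlinearity applied to a skewed mean-zero variable generically has nonzero mean; the third-order Taylor term is proportional to $\Phi_\eps'''(0)\,\EE\xi^3$. One chooses the data, and if needed $\eps$ and the scale of $X$, so that this term dominates. Then the unique zero $w^\star$ of the monotone field $h$ differs from $\EE X$. The Ljung/Kushner-type result used in \cite{do2026}, with steps $\eta_t=\eta_0t^{-p}$, $p<1$, and geometric mixing of the Markov chain $(w_t,e_t)$ in its fast variable, then gives $w_t\to w^\star$ in probability. Boundedness of $\min\{1,|w_t-\EE X|\}$ converts this into $\limsup_t\EE\min\{1,|w_t-\EE X|\}\ge\min\{1,|w^\star-\EE X|\}>0$.

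\textbf{Main obstacle.} The hard step is certifying $\EE\Phi_\eps(\xi)\neq0$ for the \emph{specific} practical coefficients \eqref{eq:coeff}, with a verifiable choice of $X$. The Taylor argument is only asymptotic in the noise scale. It also requires $\Phi_\eps'''(0)\ne0$, which one would check by composing $P$ with the smooth normalization $n/(|n|+\eps)$; $|n|$ is not smooth, so this needs care. Rather than redoing this analysis, I would cite \cite[Theorem~1.3]{do2026} for the existence of such data and instances. I would only verify that their generalized finite-NS class contains $\Phi_\eps$ with $J$ quintic steps of \eqref{eq:coeff} and $\eps>0$, and that their update coincides with \eqref{eq:ema}--\eqref{eq:update} for $\theta_t=1$. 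The final ``consequently'' then follows, since $|g_t|=2|w_t-\EE X|$ fails to converge to zero in probability.
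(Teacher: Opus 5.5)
Your proposal follows the paper's route: the paper gives no independent argument and records this lemma as a specialization of \cite[Theorem~1.3]{do2026}. In both, the only load-bearing step is checking that the cited generalized finite-NS update covers $\Phi_\eps$ with \eqref{eq:coeff}, $\eps>0$, $\theta_t=1$, fixed $\beta$, and $\eta_t=\eta_0t^{-p}$; the ``consequently'' then follows from $|g_t|=2|w_t-\EE X|$, as you say. Your heuristic sketch is not needed and is off in two details. First, $\Phi_\eps$ is not monotone (the practical $P$ decreases on roughly $[0.55,1]$), so the averaged field need not be monotone or have a unique zero; only $h(\EE X)\neq0$ matters. Second, because of the $|n|$ in the normalization, the small-noise bias is led by the second-order term $-a^J\EE[\xi|\xi|]/\eps^2$, not by a $\Phi_\eps'''(0)\EE\xi^3$ term.
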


Our stochastic theorem uses $1-\beta_t\downarrow0$ to increase the noise-averaging window. The example therefore separates the fixed-memory and growing-memory noise-filtering regimes; variance reduction provides another approach to noise control~\cite{qian2025}.

%% file: appendices/d-density-one-proofs.tex
\section{Proofs for Section~\ref{sec:rates}: Density-One Rates}\label{app:rate-proofs}
The proofs are organized in the order of Section~\ref{sec:rates}: the sparse-peak counterexample, the general density-one counting argument, and the rate theorem. The perturbed fixed-step variant follows.

\subsection{Proof of the Common Sparse-Peak Construction}
\label{app:pointwise-proof}
\begin{proof}[Proof of Proposition~\ref{prop:stoch-no-rate}]
The bumps in \eqref{eq:sparse-stoch-grad} have disjoint supports, height $1/k$, and width $O(1/k)$. Because $|x_{t_k}|\asymp e^{k^2/5}$, their supports are disjoint for sufficiently large $k$. Thus $H>0$ is smooth with $\sup|H'|<\infty$, and $\int_{\RR}H=\pi+(\int\rho)\sum_{k\ge k_0}k^{-2}<\infty$. Consequently $F$ is globally smooth and lower bounded, with $H(x)\to0$ as $x\to-\infty$.

In either regime, positive gradients give $M_t,N_t,D_t>0$, so $w_t$ decreases. If $w_t\to w_*>-\infty$, then $H(w_t)\to h_*=H(w_*)>0$ and $M_t,N_t\to h_*$: for fixed momentum this follows from $\beta<1$, and for increasing momentum from $\sum_t(1-\beta_t)=\infty$. Hence $D_t\to\Phi_\eps(h_*)>0$, contradicting $\sum_t\eta_t=\infty$. Therefore $w_t\downarrow-\infty$.

The first-crossing time $s_k$ defined in Section~\ref{subsec:rate-separation} is finite for all large $k$. Since $0<D_t\le\uJ$ and the learning rates are bounded, reaching $x_{t_k}$ requires $s_k\ge c e^{k^2/5}$ for some $c>0$. If the crossing step is shorter than $1/k$, then
\begin{equation}\label{eq:fixed-peak-hit}
 |w_{s_k}-x_{t_k}|\le\frac1{2k},\quad
 H(w_{s_k})\ge\frac1k.
\end{equation}
We verify this step bound in both regimes.

\emph{Fixed steps.} Define the first exit time from the $(k-1)$st bump by $\tau_k:=\min\{t\ge2:w_t\le x_{t_{k-1}}-1/(k-1)\}$. The uniform step bound and $|x_{t_k}-x_{t_{k-1}}|\asymp e^{k^2/5}$ imply $s_k-\tau_k\ge c_1e^{k^2/5}$ for large $k$. Between $\tau_k$ and $s_k-1$, the earlier bumps have been passed and $H(w_t)\le2/k$. Hence, with $H_{\max}=\sup H$,
\[
 M_{s_k-1}\le H_{\max}\beta^{s_k-\tau_k}+\frac2k
 \le\frac3k,
 \quad N_{s_k-1}\le\frac3k.
\]
The finite-NS bound and the admissible learning rate give
\[
 w_{s_k-1}-w_{s_k}\le
 \frac{\eta\uJ}{\eps}N_{s_k-1}
 \le\frac{3(1-\beta)}{4Lk}<\frac1k.
\]
Thus \eqref{eq:fixed-peak-hit} holds.

\emph{Diminishing steps.} The bound on $s_k$ and $\eta_t=\eta_0t^{-4/5}$ yield
\[
 w_{s_k-1}-w_{s_k}\le\eta_0\uJ(s_k-1)^{-4/5}
 =o(k^{-1}),
\]
so \eqref{eq:fixed-peak-hit} again holds.

In either case, for every $\nu>0$,
\[
 s_k^\nu|F'(w_{s_k})|\ge\frac{c^\nu e^{\nu k^2/5}}k
 \longrightarrow\infty.
\]
The fixed-step parameters satisfy Theorem~\ref{thm:fixed}; the second regime satisfies Theorem~\ref{thm:stoch} since $p=4/5$, $q=1/2$, and $\zeta_t=0$. Both theorems give $|F'(w_t)|\to0$, completing the proof.
\end{proof}

\subsection{A General Summability-to-Density Principle}
\begin{lemma}[Summability implies a density-one rate]\label{lem:density-general}
Let \(a_t\ge0\), \(0\le p<1\), and \(s>0\). If
\begin{equation*}
 \sum_{t=1}^\infty t^{-p}a_t^s<\infty,
\end{equation*}
then
\begin{equation}\label{eq:generic-dens-concl}
 a_t=o_{\mathrm{dens}}\bigl(t^{-(1-p)/s}\bigr).
\end{equation}
\end{lemma}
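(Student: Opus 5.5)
The plan is a dyadic-block counting argument that mirrors the constant-step sketch in Section~\ref{subsec:density-rates}. Fix $c>0$ and write $\rho_t=t^{-(1-p)/s}$. The exceptional set is $E_c=\{t:a_t>c\rho_t\}$. On $E_c$ we have $a_t^s>c^s t^{-(1-p)}$. Multiplying by $t^{-p}$ gives
\[
 t^{-p}a_t^s>c^s t^{-1}\quad(t\in E_c).
\]
This inequality is the only place the rate enters. It says that each exceptional index contributes at least $c^s/t$ to the convergent series $\sum_t t^{-p}a_t^s$.

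Next I would count exceptional indices in a block $[T,2T)$. For $t$ in this block, $t^{-1}\ge(2T)^{-1}$, so
\[
 \frac{1}{T}\#\bigl(E_c\cap[T,2T)\bigr)\le\frac{2}{c^s}\sum_{t=T}^{2T-1}t^{-p}a_t^s .
\]
The right-hand side is a tail block of a convergent series and therefore tends to $0$ as $T\to\infty$. Write $\delta(T)$ for $\sup_{T'\ge T}\frac{1}{T'}\#(E_c\cap[T',2T'))$; then $\delta(T)\to0$.

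The last step passes from dyadic blocks to initial segments $[1,T]$. Split $[1,T]$ into the blocks $[T2^{-j-1},T2^{-j})$ for $j=0,1,\dots,J$, stopping once the block length is below a fixed cutoff $T_0$. What remains is a bounded initial piece of at most $T_0$ indices. Block $j$ has length about $T2^{-j-1}$ and so contributes at most $T2^{-j}\delta(T_0)$ exceptional indices; summing the geometric series gives
\[
 \#(E_c\cap[1,T])\le T_0+2T\delta(T_0).
\]
Dividing by $T$ and letting $T\to\infty$ yields $\limsup_T\frac1T\#(E_c\cap[1,T])\le2\delta(T_0)$. Then letting $T_0\to\infty$ gives density zero. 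Since $c>0$ was arbitrary, this is exactly \eqref{eq:generic-dens-concl}.

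I expect no real obstacle. The only delicate point is this dyadic-to-initial-segment bookkeeping, where the blocks must be uniformly controlled. An alternative that avoids the bookkeeping is to bound the count through the series directly:
\[
 \#(E_c\cap[1,T])\le\#(E_c\cap[1,\sqrt T])+T\,c^{-s}\sum_{t>\sqrt T}t^{-p}a_t^s .
\]
This uses only $\#\{t\le T:t\in E_c,\,t>\sqrt T\}\le T\sum_{t>\sqrt T,\,t\in E_c}t^{-1}$, which holds since each such $t$ satisfies $t^{-1}\ge T^{-1}$, combined with the displayed inequality $c^s/t<t^{-p}a_t^s$. Dividing by $T$ leaves $T^{-1/2}$ plus a tail of the convergent series, and both tend to zero. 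I would present this second version, since it is shorter and fully explicit. The hypothesis $p<1$ is used only so that the rate $t^{-(1-p)/s}$ is a genuine decay.
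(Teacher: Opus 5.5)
Your proposal is correct, and the version you say you would present is essentially the paper's argument: each exceptional index satisfies $1<t\,b_t/c^s$ with $b_t=t^{-p}a_t^s$, and a head/tail split then drives the normalized count to zero. The difference is cosmetic. The paper bounds the count by $c^{-s}\sum_{t\le T}t\,b_t$, splits at a fixed $N$, and lets $T\to\infty$ and then $N\to\infty$; you split at the moving cutoff $\sqrt T$ and count the head trivially, which needs only a single limit.
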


\subsubsection{Proof of Lemma~\ref{lem:density-general}}
\label{app:proof-density-general}
\begin{proof}
Set $b_t=t^{-p}a_t^s\ge0$ and $B_*:=\sum_{t=1}^\infty b_t<\infty$. For any $c>0$, define the bad set $\mathcal B_c:=\{t:a_t>c\,t^{-(1-p)/s}\}$. For $t\in\mathcal B_c$, $b_t>c^s/t$, so
\begin{equation*}
 \frac{\#(\mathcal B_c\cap[1,T])}{T}
 \le\frac1{c^s T}\sum_{t=1}^T t b_t.
\end{equation*}
The right side tends to zero: given any integer $N<T$,
\[
 \frac1T\sum_{t=1}^T t b_t
 \le\frac1T\sum_{t=1}^N t b_t
      +\sum_{t>N} b_t.
\]
First let $T\to\infty$, then $N\to\infty$. This proves \eqref{eq:generic-dens-concl}.

\end{proof}

\subsection{Proof of Theorem~\ref{thm:density-rates}}
\label{app:proof-density-theorem}
\begin{proof}
For the fixed-step exact-gradient regime, Theorem~\ref{thm:fixed} gives $\sum_{t=1}^{\infty}\norm{g_t}^2<\infty$. Lemma~\ref{lem:density-general}, applied with $p=0$, $s=2$, and $a_t=\norm{g_t}$, yields \eqref{eq:dens-fixed}. The square-summable-error extension is stated separately in Corollary~\ref{cor:perturbed-density}.

For the stochastic regime, Theorem~\ref{thm:stoch} gives $\sum_t\eta_t\norm{g_t}^2<\infty$ almost surely. If $p<1$ and $\eps>0$, substitute $\eta_t=\eta_0t^{-p}$ and apply Lemma~\ref{lem:density-general} with $s=2$. If $\eps=0$, then $\psi_0(x)=x$ and \eqref{eq:stoch-weighted-main} yield $\sum_t t^{-p}\norm{g_t}<\infty$ almost surely. The same lemma with $s=1$ gives the second rate in \eqref{eq:stoch-density-rates}.
\end{proof}

\subsection{Density-One Rates under Square-Summable Perturbations}
\begin{corollary}[Perturbed fixed-step density-one rate]
\label{cor:perturbed-density}
Under the pathwise conditions of Theorem~\ref{thm:fixed-perturbed}, \(\norm{g_t}=o_{\mathrm{dens}}(t^{-1/2})\). Under \eqref{eq:integrated-noise}, the rate holds almost surely.
\end{corollary}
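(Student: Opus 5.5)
The plan is to combine the pathwise conclusion of Theorem~\ref{thm:fixed-perturbed} with the counting principle of Lemma~\ref{lem:density-general}, exactly as in the fixed-step part of the proof of Theorem~\ref{thm:density-rates}.

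\emph{Pathwise case.} Fix any trajectory on which the perturbations satisfy \eqref{eq:pathwise-noise}. On that trajectory, Theorem~\ref{thm:fixed-perturbed} gives the third series in \eqref{eq:perturbed-result}, namely $\sum_{t\ge1}\norm{g_t}^2<\infty$. Apply Lemma~\ref{lem:density-general} with $a_t=\norm{g_t}$, exponent $p=0$ and power $s=2$. Its hypothesis $\sum_t t^{0}a_t^2<\infty$ is exactly this square-summability. The lemma then gives $\norm{g_t}=o_{\mathrm{dens}}(t^{-1/2})$ along that trajectory.

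\emph{Random case.} Under \eqref{eq:integrated-noise}, Theorem~\ref{thm:fixed-perturbed} already states that \eqref{eq:perturbed-result} holds almost surely. Internally, this is Tonelli's theorem giving $\sum_t\norm{\zeta_t}^2<\infty$ almost surely. Choose the full-probability event on which $\sum_t\norm{g_t}^2<\infty$. On each sample path in that event, the deterministic argument of the previous paragraph applies verbatim. Hence the density-one rate holds path by path on a probability-one set, which is the meaning of ``almost surely'' fixed after Definition~\ref{def:density}.

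\emph{Remaining check.} The only point needing care is that the conclusion holds for every $c>0$ simultaneously. No countable intersection over $c$ is needed, because Lemma~\ref{lem:density-general} is a deterministic statement valid for all $c>0$ once summability holds on the given path. I therefore expect no real obstacle here; the substantive work was already done in Theorem~\ref{thm:fixed-perturbed}.
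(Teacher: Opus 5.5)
Your proposal is correct and matches the paper's own proof, which applies Lemma~\ref{lem:density-general} with $p=0$, $s=2$, and $a_t=\norm{g_t}$ to the square-summability in \eqref{eq:perturbed-result}, obtaining the almost-sure version on the full-probability event where that summability holds. Your observation that no intersection over $c$ is needed is also right: once summability holds on a fixed path, the lemma is a deterministic statement covering every $c>0$ at once.
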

\begin{proof}
Apply Lemma~\ref{lem:density-general} with \(p=0\), \(s=2\), and \(a_t=\norm{g_t}\), using \eqref{eq:perturbed-result}.
\end{proof}

%% file: references.bib
@inproceedings{bernstein2024,
  author = {Jeremy Bernstein and Laker Newhouse},
  title = {{Old Optimizer, New Norm: An Anthology}},
  booktitle = {NeurIPS 2024 Workshops: OPT},
  year = {2024},
  url = {https://openreview.net/forum?id=ux18f5nOpD},
}

@inproceedings{carlson2015,
  author = {David Carlson and Volkan Cevher and Lawrence Carin},
  title = {{Stochastic Spectral Descent for Restricted Boltzmann Machines}},
  booktitle = {Proceedings of the Eighteenth International Conference on Artificial Intelligence and Statistics},
  series = {Proceedings of Machine Learning Research},
  volume = {38},
  pages = {111--119},
  year = {2015},
  url = {https://proceedings.mlr.press/v38/carlson15.html},
}

@misc{chang2026,
  author = {Da Chang and Yongxiang Liu and Huihui Zhou and Ganzhao Yuan},
  title = {{On the Convergence of Muon and Beyond}},
  year = {2026},
  howpublished = {arXiv preprint arXiv:2509.15816v6},
  url = {https://arxiv.org/abs/2509.15816v6},
  note = {First posted in 2025},
}

@misc{choudhury2026,
  author = {Sayantan Choudhury and Xiaoran Cheng and Martin Tak\'a\v{c} and Sen Na and Mladen Kolar},
  title = {{Muon with Nesterov Momentum: Heavy-Tailed Noise and (Randomized) Inexact Polar Decomposition}},
  year = {2026},
  howpublished = {arXiv preprint arXiv:2605.06884},
  url = {https://arxiv.org/abs/2605.06884},
}

@inproceedings{cutkosky2020,
  author = {Ashok Cutkosky and Harsh Mehta},
  title = {Momentum Improves Normalized {SGD}},
  booktitle = {Proceedings of the 37th International Conference on Machine Learning},
  series = {Proceedings of Machine Learning Research},
  volume = {119},
  pages = {2260--2268},
  year = {2020},
  url = {https://proceedings.mlr.press/v119/cutkosky20b.html},
}

@misc{demidovich2026,
  author = {Yury Demidovich and Abhishek Chakraborty and Grigory Malinovsky and Angelia Nedi\'c and Peter Richt\'arik},
  title = {{Distance-Aware Muon: Adaptive Step Scaling for Normalized Optimization}},
  year = {2026},
  howpublished = {arXiv preprint arXiv:2605.18999v1},
  url = {https://arxiv.org/abs/2605.18999v1},
}

@misc{do2026,
  author = {Thang Do and Steffen Dereich and Arnulf Jentzen},
  title = {{On MUON optimization: From non-convergence to an error analysis with Polar Express and the Newton--Schulz polynomial from implementations}},
  year = {2026},
  howpublished = {arXiv preprint arXiv:2608.04607v1},
  url = {https://arxiv.org/abs/2608.04607v1},
}

@article{ghadimi2013,
  author = {Saeed Ghadimi and Guanghui Lan},
  title = {{Stochastic First- and Zeroth-Order Methods for Nonconvex
Stochastic Programming}},
  journal = {SIAM Journal on Optimization},
  volume = {23},
  number = {4},
  pages = {2341--2368},
  year = {2013},
  doi = {10.1137/120880811},
  url = {https://doi.org/10.1137/120880811},
}

@misc{gonon2026,
  author = {Antoine Gonon and Andreea-Alexandra Mu\c{s}at and Nicolas Boumal},
  title = {{Insights on Muon from Simple Quadratics}},
  year = {2026},
  howpublished = {arXiv preprint arXiv:2602.11948v1},
  url = {https://arxiv.org/abs/2602.11948v1},
}

@inproceedings{gruntkowska2025,
  author = {Kaja Gruntkowska and Alexander Gaponov and Zhirayr Tovmasyan and Peter Richt\'arik},
  title = {{Error Feedback for Muon and Friends}},
  booktitle = {International Conference on Learning Representations (ICLR)},
  year = {2026},
  url = {https://proceedings.iclr.cc/paper_files/paper/2026/hash/983876577ec81db17ecfae1521df9208-Abstract-Conference.html},
}

@inproceedings{gupta2018,
  author = {Vineet Gupta and Tomer Koren and Yoram Singer},
  title = {{Shampoo: Preconditioned Stochastic Tensor Optimization}},
  booktitle = {Proceedings of the 35th International Conference on Machine Learning},
  series = {Proceedings of Machine Learning Research},
  volume = {80},
  pages = {1842--1850},
  year = {2018},
  url = {https://proceedings.mlr.press/v80/gupta18a.html},
}

@inproceedings{huangLiMuon2026,
  author = {Feihu Huang and Yuning Luo and Songcan Chen},
  title = {{LiMuon}: Light and Fast Muon Optimizer for Large Models},
  booktitle = {Proceedings of the 43rd International Conference on Machine Learning},
  series = {Proceedings of Machine Learning Research},
  volume = {306},
  pages = {47263--47305},
  year = {2026},
  url = {https://proceedings.mlr.press/v306/huang26aj.html},
}

@misc{jordanBlog,
  author = {Keller Jordan and Yuchen Jin and Vlado Boza and Jiacheng You and Franz Cesista and Laker Newhouse and Jeremy Bernstein},
  title = {{Muon: An optimizer for hidden layers in neural networks}},
  year = {2024},
  howpublished = {Technical design note (originally published December 8, 2024)},
  url = {https://kellerjordan.github.io/posts/muon/},
}

@misc{jordanCode,
  author = {Jordan, Keller and others},
  title = {{Muon} reference implementation, \texttt{muon.py}},
  year = {2026},
  howpublished = {Source code repository on GitHub},
  url = {https://github.com/KellerJordan/Muon/blob/master/muon.py},
  note = {Accessed October 17, 2026; mutable master branch},
}

@misc{kimiK22025,
  author = {{Kimi Team}},
  title = {{Kimi K2: Open Agentic Intelligence}},
  year = {2025},
  howpublished = {arXiv preprint arXiv:2507.20534},
  url = {https://arxiv.org/abs/2507.20534},
}

@inproceedings{kim2026,
  author = {Gyu Yeol Kim and {Min-hwan} Oh},
  title = {{Convergence of Muon with Newton--Schulz}},
  booktitle = {International Conference on Learning Representations (ICLR)},
  year = {2026},
  url = {https://proceedings.iclr.cc/paper_files/paper/2026/hash/4c0129f1f7a036af5ed2409f1f9f8131-Abstract-Conference.html},
}

@article{liangXu2026,
  author = {Yuqing Liang and Dongpo Xu},
  title = {{A Unified Theoretical Framework for the Last-Iterate Convergence of Stochastic Adaptive Optimization}},
  journal = {Artificial Intelligence},
  volume = {353},
  pages = {104504},
  year = {2026},
  doi = {10.1016/j.artint.2026.104504},
}

@misc{liHong2025,
  author = {Jiaxiang Li and Mingyi Hong},
  title = {{A Note on the Convergence of Muon}},
  year = {2025},
  howpublished = {arXiv preprint arXiv:2502.02900},
  url = {https://arxiv.org/abs/2502.02900},
}

@misc{liTsuchiya2026,
  author = {Mingyi Li and Taira Tsuchiya},
  title = {{Muon with Finite Newton--Schulz: The Smoothing Benefit in Nonsmooth Nonconvex Optimization}},
  year = {2026},
  howpublished = {arXiv preprint arXiv:2608.26288},
  url = {https://arxiv.org/abs/2608.26288},
}

@article{liu2024,
  author = {Jun Liu and Ye Yuan},
  title = {{Almost Sure Convergence Rates Analysis and Saddle Avoidance of Stochastic Gradient Methods}},
  journal = {Journal of Machine Learning Research},
  volume = {25},
  number = {271},
  pages = {1--40},
  year = {2024},
  url = {https://jmlr.org/papers/v25/23-1436.html},
}

@misc{liuScalable2025,
  author = {Jingyuan Liu and others},
  title = {{Muon is Scalable for LLM Training}},
  year = {2025},
  howpublished = {arXiv preprint arXiv:2502.16982},
  url = {https://arxiv.org/abs/2502.16982},
}

@misc{nagashimaIiduka2026,
  author = {Shuntaro Nagashima and Hideaki Iiduka},
  title = {{Improved Convergence Rates of Muon Optimizer for Nonconvex Optimization}},
  year = {2026},
  howpublished = {arXiv preprint arXiv:2601.19400v3},
  url = {https://arxiv.org/abs/2601.19400},
}

@inproceedings{oik2025,
  author = {Konstantinos Oikonomidis and Jan Quan and Panagiotis Patrinos},
  title = {{Nonlinearly Preconditioned Gradient Methods: Momentum and Stochastic Analysis}},
  booktitle = {Advances in Neural Information Processing Systems},
  volume = {38},
  pages = {43584--43615},
  year = {2025},
  doi = {10.52202/085713-1300},
  url = {https://proceedings.nips.cc/paper_files/paper/2025/hash/3768bb822e5c024b408dd930c4aafe67-Abstract-Conference.html},
}

@inproceedings{oik2026,
  author = {Oikonomidis, Konstantinos and Quan, Jan and Antonakopoulos, Kimon and Silveti-Falls, Antonio and Cevher, Volkan and Patrinos, Panagiotis},
  title = {Constrained Stochastic Spectral Preconditioning Converges for Nonconvex Objectives},
  booktitle = {Advances in Neural Information Processing Systems},
  year = {2026},
  eprint = {2605.11850},
  archivePrefix = {arXiv},
  url = {https://arxiv.org/abs/2605.11850},
  note = {Accepted to NeurIPS 2026},
}

@misc{oliveira2026,
  author = {Arthur C. B. de Oliveira and Dhruv D. Jatkar and Guilherme S. Vicinansa and Eduardo D. Sontag},
  title = {{Convergence guarantees for Muon: New parameter regimes and generalizations}},
  year = {2026},
  howpublished = {arXiv preprint arXiv:2609.30546v1},
  url = {https://arxiv.org/abs/2609.30546v1},
}

@misc{parshakova2026,
  author = {Tetiana Parshakova and Ahmed Khaled and Michael Crawshaw and Guillaume Garrigos and Robert M. Gower},
  title = {{Muon Does Not Converge on Convex Lipschitz Functions}},
  year = {2026},
  howpublished = {arXiv preprint arXiv:2605.08980},
  url = {https://arxiv.org/abs/2605.08980},
}

@misc{peng2026,
  author = {Hanyang Peng and Hui Wang and Yue Yu},
  title = {{Convergence of Practical Muon with Finite Newton--Schulz Iterations and Nesterov Momentum}},
  year = {2026},
  howpublished = {arXiv preprint arXiv:2609.39595v1},
  url = {https://arxiv.org/abs/2609.39595v1},
}

@inproceedings{pethickLMO2025,
  author = {Thomas Pethick and Wanyun Xie and Kimon Antonakopoulos and Zhenyu Zhu and Antonio Silveti-Falls and Volkan Cevher},
  title = {Training Deep Learning Models with Norm-Constrained {LMOs}},
  booktitle = {Proceedings of the 42nd International Conference on Machine Learning},
  series = {Proceedings of Machine Learning Research},
  volume = {267},
  pages = {49069--49104},
  year = {2025},
  url = {https://proceedings.mlr.press/v267/pethick25a.html},
}

@misc{pytorchMuon,
  author = {{PyTorch contributors}},
  title = {{Muon optimizer documentation and reference implementation}},
  year = {2026},
  howpublished = {PyTorch online documentation},
  url = {https://docs.pytorch.org/docs/main/generated/torch.optim.Muon.html},
  note = {Accessed October 18, 2026; mutable main documentation},
}

@misc{qian2025,
  author = {Xun Qian and Hussein Rammal and Dmitry Kovalev and Peter Richt\'arik},
  title = {{Muon is Provably Faster with Momentum Variance Reduction}},
  year = {2025},
  howpublished = {arXiv preprint arXiv:2512.16598},
  url = {https://arxiv.org/abs/2512.16598},
}

@inproceedings{riabinin2026,
  author = {Artem Riabinin and Egor Shulgin and Kaja Gruntkowska and Peter Richt\'arik},
  title = {From Muon to Gluon: Bridging Theory and Practice of {LMO}-based Optimizers for {LLMs}},
  booktitle = {Proceedings of the 43rd International Conference on Machine Learning},
  series = {Proceedings of Machine Learning Research},
  volume = {306},
  pages = {105039--105082},
  year = {2026},
  url = {https://proceedings.mlr.press/v306/riabinin26a.html},
}

@misc{royChakraborty2026,
  author = {Amartya Roy and Souvik Chakraborty},
  title = {{Training-Loss Guarantees for Muon with Finite-Step Newton--Schulz Orthogonalization}},
  year = {2026},
  howpublished = {arXiv preprint arXiv:2610.03306},
  url = {https://arxiv.org/abs/2610.03306},
}

@misc{sato2025,
  author = {Naoki Sato and Hiroki Naganuma and Hideaki Iiduka},
  title = {{Convergence Bound and Critical Batch Size of Muon Optimizer}},
  year = {2025},
  howpublished = {arXiv preprint arXiv:2507.01598},
  url = {https://arxiv.org/abs/2507.01598},
  note = {Revised in 2026},
}

@article{shen2025,
  author = {Wei Shen and Ruichuan Huang and Minhui Huang and Cong Shen and Jiawei Zhang},
  title = {{On the Convergence Analysis of Muon}},
  journal = {Transactions on Machine Learning Research},
  year = {2026},
  url = {https://openreview.net/forum?id=4nH4CulGaP},
}

@inproceedings{shulgin2026,
  author = {Egor Shulgin and Mohamed Awad and Peter Richt\'arik and Eduard Gorbunov},
  title = {General Analysis of {LMO}-based Optimizers: Beyond Bounded Variance},
  booktitle = {Proceedings of the 43rd International Conference on Machine Learning},
  series = {Proceedings of Machine Learning Research},
  volume = {306},
  pages = {112893--112935},
  year = {2026},
  url = {https://proceedings.mlr.press/v306/shulgin26a.html},
}

@inproceedings{shulginInexact2026,
  author = {Egor Shulgin and Sultan AlRashed and Peter Richt\'{a}rik and Francesco Orabona},
  title = {{Beyond the Ideal: Analyzing the Inexact Muon Update}},
  booktitle = {Proceedings of the 29th International Conference on Artificial Intelligence and Statistics},
  series = {Proceedings of Machine Learning Research},
  volume = {300},
  pages = {5167--5175},
  year = {2026},
  url = {https://proceedings.mlr.press/v300/shulgin26a.html},
}

@misc{vernimmen2026,
  author = {Pierre Vernimmen and Fran\c{c}ois Glineur},
  title = {{Can Acceleration in Gradient-Norm Minimization Be Anytime?
Sharp Last-Iterate Limits in Smooth Convex Optimization}},
  year = {2026},
  howpublished = {arXiv preprint arXiv:2609.05710v1},
  url = {https://arxiv.org/abs/2609.05710},
}

@misc{wang2026,
  author = {Haonan Wang and Yu Wu and Minghui Liwang and Xinlei Yi and Yiguang Hong},
  title = {{Convergence of Practical Muon}},
  year = {2026},
  howpublished = {arXiv preprint arXiv:2609.33152v1},
  url = {https://arxiv.org/abs/2609.33152v1},
}
